\def\eqref#1{equation~\ref{#1}}
\def\Eqref#1{Equation~\ref{#1}}
\def\1{\bm{1}}
\def\rmD{{\mathbf{D}}}
\DeclareMathAlphabet{\mathsfit}{\encodingdefault}{\sfdefault}{m}{sl}
\SetMathAlphabet{\mathsfit}{bold}{\encodingdefault}{\sfdefault}{bx}{n}
\newcommand{\R}{\mathbb{R}}
\newcommand{\KL}{D_{\mathrm{KL}}}
\renewcommand*\backref[1]{\ifx#1\relax \else (Cited on p. #1) \fi}
\title{Slicing Unbalanced Optimal Transport}
\author{\name Clément Bonet$^*$ \email clement.bonet@ensae.fr \\
      \addr CREST, ENSAE, IP Paris, Palaiseau, France 
      \and
      \name Kimia Nadjahi$^*$ \email kimia.nadjahi@ens.fr \\
      \addr CNRS, ENS, Paris, France 
      \and 
      \name Thibault Séjourné$^*$ \email thibault.sejourne@epfl.ch \\
      \addr LTS4, EPFL, Lausanne, Switzerland
      \and
      \name Kilian Fatras \email kilian.fatras@mila.quebec\\
      \addr Mila, McGill University, Montreal, Canada
      \and 
      \name Nicolas Courty \email nicolas.courty@irisa.fr \\
      \addr IRISA, Université Bretagne-Sud, Vannes, France
      }
\newcommand{\ps}[2]{\left\langle#1,#2 \right\rangle}
\def\rset{\mathbb{R}}
\def\Rd{\mathbb{R}^d}
\def\iid{i.i.d.}
\def\ie{\textit{i.e.}}
\def\eg{\textit{e.g.}}
\def\nsets{\mathbb{N}^*}
\def\rmD{\mathrm{D}}
\def\rmd{\mathrm{d}}
\def\Mmp{\mathcal{M}_+}
\def\al{\alpha}
\def\be{\beta}
\def\ga{\gamma}
\def\la{\lambda}
\def\thh{\theta^{\star}}
\def\f{f}
\def\g{g}
\def\r{r}
\def\s{s}
\def\Co{\text{\upshape{C}}_1}
\def\Cd{\text{\upshape{C}}_d}
\def\D{\text{D}}
\def\S{\text{\upshape{S}}}
\renewcommand{\phi}{\varphi}
\renewcommand{\epsilon}{\varepsilon}
\def\dom{\text{\upshape{dom}}}
\def\UOT{\text{\upshape{UOT}}}
\def\OT{\text{\upshape{OT}}}
\def\KL{\text{\upshape{KL}}}
\def\TV{\text{\upshape{TV}}}
\def\SUOT{\text{\upshape{SUOT}}}
\def\SOT{\text{\upshape{SOT}}}
\def\RSOT{\text{\upshape{USOT}}}
\def\sphereD{\mathbb{S}^{d-1}}
\def\thsss{\theta^{\star}_{\sharp}}
\def\unifS{\boldsymbol{\sigma}}
\def\calB{\mathcal{B}}
\def\calD{\mathcal{D}}
\def\calC{\mathcal{C}}
\def\calE{\mathcal{E}}
\def\calF{\mathcal{F}}
\def\calG{\mathcal{G}}
\def\calH{\mathcal{H}}
\def\calL{\mathcal{L}}
\def\calP{\mathcal{P}}
\def\eqsp{\;}
\def\EE{\mathbb{E}}
\def\HH{\mathbb{H}}
\def\supp{\mathrm{supp}}
\def\Leb{\mathrm{Leb}}
\newcommandx{\norm}[2][1=]{\ifthenelse{\equal{#1}{}}{\left\Vert #2 \right\Vert}{\left\Vert #2 \right\Vert^{#1}}}
\newcommand{\rmi}{\mathrm{i}}
\newcommand{\abs}[1]{\left\vert #1 \right\vert}
\newtheorem{theorem}{Theorem}[section]
\newtheorem{proposition}[theorem]{Proposition}
\newtheorem{lemma}[theorem]{Lemma}
\newtheorem{corollary}[theorem]{Corollary}
\newtheorem{definition}[theorem]{Definition}
\renewcommand{\eqref}[1]{(\ref{#1})}
\let\svthefootnote\thefootnote
\newcommand\freefootnote[1]{%
  \let\thefootnote\relax%
  \footnotetext{#1}%
  \let\thefootnote\svthefootnote%
}
\begin{document}

\maketitle

\begin{abstract}
    Optimal transport (OT) is a powerful framework to compare probability measures,
    a fundamental task in many statistical and machine learning problems.
    Substantial advances have been made in designing OT variants which are either computationally and statistically more efficient or robust.
    Among them, sliced OT distances have been extensively used to mitigate optimal transport's cubic algorithmic complexity and curse of dimensionality. 
    In parallel, unbalanced OT was designed to allow comparisons of more general positive measures, while being more robust to outliers. 
    In this paper, we bridge the gap between those two concepts and develop a general framework for efficiently comparing positive measures. We notably formulate two different versions of sliced unbalanced OT, and study the associated topology and statistical properties. We then develop a GPU-friendly Frank-Wolfe like algorithm to compute the corresponding loss functions, and show that the resulting methodology is modular as it encompasses and extends prior related work.  We finally conduct an empirical analysis of our loss functions and methodology on both synthetic and real datasets, to illustrate their computational efficiency, relevance and applicability to real-world scenarios including geophysical data.\freefootnote{$^*$\,Equal contribution}
\end{abstract}

\section{Introduction}
\label{sec:intro}

Many machine learning tasks involve aligning objects such as images, graphs, datasets or their representations after transformations. This is particularly relevant in transfer learning tasks like domain adaptation~\citep{fatras21a} or multimodal machine learning~\citep{baltruvsaitis2018multimodal}. These objects can be conveniently represented as positive measures, \ie, a set of samples associated with non-negative weights. Aligning then consists in minimizing a distance or discrepancy between two measures. It is crucial to choose a meaningful discrepancy that has desirable statistical, robustness and computational properties. In particular, some settings require comparing arbitrary positive measures, \ie, measures whose total mass can have an arbitrary value, as opposed to probability distributions whose total mass is equal to 1. In cell biology, for instance, measures are used to represent and compare gene expressions of cell populations, and the total mass corresponds to the population size \citep{schiebinger2019optimal}.

\textbf{(Unbalanced) Optimal Transport.} 
Optimal transport (OT) has been frequently chosen as a loss function to align objects. OT defines a distance between two positive measures $\al$ and $\beta$ of the same mass ($m(\al)=m(\be)$) by moving the mass of $\al$ toward the mass of $\be$ with the least possible effort. However, in some applications, the mass equality constraint is not satisfied, \ie, $m(\al) \neq m(\be)$. It can still be enforced by a re-normalization of the mass, which is potentially spurious and makes the problem less interpretable. This setting has motivated the development of a new OT framework, called \emph{unbalanced OT} (UOT), that can naturally compare measures of different masses by softly relaxing the mass conservation constraints \citep{kondratyev2016fitness, liero2018optimal, chizat2018unbalanced}. An appealing outcome of this new OT variant is its robustness to outliers which is achieved by discarding them before transporting $\al$ to $\be$.
UOT has been useful for many theoretical and practical applications, \eg, theory of deep learning \citep{chizat2018global,rotskoff2019global}, biology \citep{schiebinger2019optimal, demetci2022scotv2} and domain adaptation \citep{fatras21a}. We refer to~\citep{sejourne2022unbalanced} for an extensive survey of UOT. 
Computing UOT requires to solve a linear program whose complexity is cubical in the number $n$ of samples ($\mathcal{O}(n^3 \log n)$) \citep{pele2009fast,peyre2019computational}. 
Besides, accurately estimating UOT distances through empirical distributions is challenging as they suffer from the curse of dimension~\citep{dudley1969speed}. A common workaround is to rely on variants with lower complexities and better statistical properties. Among the most popular, we can list entropic OT~\citep{cuturi2013sinkhorn, pham20a} or minibatch OT~\citep{fatras2019learning, fatras21a}. In this paper, we focus on developing sliced UOT approaches.

\textbf{Sliced Optimal Transport.} 
Sliced OT (SOT) defines an alternative metric by leveraging the closed-form solution of OT between univariate measures \citep{rabin2012wasserstein, bonneel2015sliced}. It averages the OT cost between projections of $(\al,\be)$ on 1D subspaces of $\rset^d$. For 1D data, the OT solution can be computed through a sort algorithm, leading to an appealing $\mathcal{O}(n \operatorname{log}(n))$ complexity~\citep{peyre2019computational}.
Furthermore, it has been shown to lift useful topological and statistical properties of OT from 1-dimensional to multi-dimensional settings \citep{nadjahi2020statistical,bayraktar2019strong,goldfeld2021}.
It therefore helps to mitigate the curse of dimensionality making SOT-based algorithms theoretically grounded, statistically efficient, and practical to solve even on large-scale settings. These appealing properties motivated the development of several variants and generalizations, \eg, by considering different types or distributions of projections~\citep{kolouri2019generalized,deshpande2019max,nguyen2020distributional,ohana2022shedding,nguyen2023hierarchical} or manifold data~\citep{bonet2023sliced, bonet2022spherical, bonet2022hyperbolic}.
Fast computations of partial OT (a particular case of UOT) between univariate measures \citep{bonneel2019spot, bai2022sliced} or more generally on trees \citep{sato2020fast, le2021entropy}, have been developed, so that slicing partial OT benefits from these efficient implementations and allows to compare large unnormalized measures.
However, while (sliced) partial OT allows to compare measures with different masses, it assumes that each input measure is discrete and supported on points that all share the same mass (typically 1).
In contrast, the Gaussian-Hellinger-Kantorovich (GHK) distance~\citep{liero2018optimal}, another popular formulation of UOT, allows to compare measures with different masses \emph{and} supported on points with varying masses, and has not been studied jointly with slicing.

\textbf{Contributions.} In this paper, we present the first general framework combining UOT and slicing between arbitrary distributions. Our main contribution is the introduction of two novel sliced variants of UOT, called \emph{Sliced UOT} ($\SUOT$) and \emph{Unbalanced Sliced OT} ($\RSOT$). $\SUOT$ and $\RSOT$ are both defined as regularized OT problems which leverage one-dimensional projections, but differ on how they relax the mass preservation constraint: $\RSOT$ essentially performs a global reweighting of the inputs measures $(\al,\be)$, while $\SUOT$ reweights each projection of $(\al, \be)$. We provide a theoretical analysis of $\SUOT$ and $\RSOT$, which reveals that they share topological properties with UOT while being statistically more efficient in high-dimensional regimes, thanks to the slicing operation. Additionally, we propose fast and GPU-friendly algorithms to compute $\SUOT$ and $\RSOT$, based on the (non-trivial) dual derivation of our $\SUOT$ and $\RSOT$ losses and a Frank-Wolfe strategy \citep{sejourne2022faster}. %
Finally, we illustrate the efficiency of our framework on various experiments: we deploy $\SUOT$ and $\RSOT$ to compare distributions on non-Euclidean hyperbolic manifolds, classify documents, transfer image colors and aggregate large-scale geophysical data, and discuss their advantages over existing approaches.

\textbf{Outline.} In \Cref{sec:background}, we provide background knowledge on unbalanced OT and sliced OT. In \Cref{sec:theory}, we introduce our new loss functions ($\SUOT$ and $\RSOT$) and prove their metric, topological, statistical and duality properties in wide generality. 
We then explain in \Cref{sec:implem} how to compute $\SUOT$ and $\RSOT$ via a Frank-Wolfe-based approach. We finally analyze the performance of $\SUOT$ and $\RSOT$ on different practical tasks in \Cref{sec:xp}.

\section{Background} \label{sec:background}

In this section, we first state our notations. Then, we provide the necessary background on unbalanced optimal transport and sliced optimal transport.

\subsection{Notations} \label{subsec:notations}

In what follows, $\Mmp(\Rd)$ denotes the set of all positive Radon measures of finite mass on $\Rd$. For any $\al \in \Mmp(\Rd)$, $\supp(\al)$ is the support of $\al$, and $m(\al) = \int_{\Rd} \rmd \al(x)< +\infty$ is the mass of $\al$. For $\al\in\Mmp(\R^d)$ and a map $T:\R^d\to\R^p$, $T_\#\al$ is the pushforward measure of $\al$ by $T$, defined for all $A\subset \Rd$ as $T_\#\mu(A) = \mu\big(T^{-1}(A)\big)$. Let $\updelta_z$ be the Dirac measure at $z$ and for $n \geq 1$, define the empirical measure $\hat{\al}_n = \sum_{i=1}^n w_i \updelta_{Z_i}$, where $(Z_i)_{i=1}^n$ are $n$ independent and identically distributed (i.i.d.) samples from $\al \in \Mmp(\Rd)$, and $w_i > 0$. For any convex function $\varphi:\R\to\R\cup\{+\infty\}$, we denote by $\varphi^*$ its Legendre transform, \ie, for $x \in \R$, $\varphi^*(x)=\sup_{y\ge 0}\ xy - \varphi(y)$. We will also use the notation $\varphi^\circ(x)=-\varphi^*(-x)$. For $\al,\be\in\Mmp(\R^d)$, $\al\otimes\be$ is the product measure, and for $f,g:\R^d\to \R$, denote by $f\oplus g : \R^d \times \R^d \to \R$ the mapping defined as $(f\oplus g)(x,y) = f(x) + g(y)$ for all $x,y\in\R^d$. $\sphereD \triangleq \{\theta \in \Rd~:~\|\theta\| = 1\}$ is the unit sphere, and for $\theta\in\sphereD$, $\theta^*:\R^d\to \R$ is the mapping defined as $\theta^*(x) = \langle \theta,x\rangle$ for all $x\in\R^d$.

\subsection{Unbalanced Optimal Transport}

We recall the static formulation of unbalanced OT proposed by \citet{liero2018optimal}, which uses \emph{$\varphi$-divergences} as penalty terms.

\begin{definition}[$\varphi$-divergences]
    Let $(\al, \be) \in \Mmp(\Rd) \times \Mmp(\Rd)$. Let $\varphi : \rset \to \rset \cup \{+\infty\}$ be an \emph{entropy function}, \ie, $\varphi$ is convex, lower semicontinuous, $\dom(\varphi)\triangleq\{x\in\rset,\,\phi(x)<+\infty\} \subset [0, +\infty)$ and $\varphi(1)=0$. Denote $\varphi^\prime_\infty \triangleq\lim _{x\to +\infty} \frac{\phi(x)}{x}$. The \emph{$\varphi$-divergence} between $\al$ and $\be$ is
    \begin{align}
        \rmD_\phi(\al|\be) \triangleq \int_{\rset^d} \phi\left( \frac{\rmd\al}{\rmd\be}(x) \right)\rmd\be(x) + \phi^\prime_\infty\int_{\rset^d} \rmd\al^\bot(x) \,,
    \end{align}
    where $\al^\bot$ is defined as $\al=\frac{\rmd\al}{\rmd\be}\be + \al^\bot$.
\end{definition}

\begin{definition}[Unbalanced OT \citep{liero2018optimal}]
    Let $(\varphi_1, \varphi_2)$ be a pair of entropy functions and $\Cd: \Rd \times \Rd \to \rset$ a cost function. The \emph{unbalanced OT problem} between $(\al, \be) \in \Mmp(\Rd) \times \Mmp(\Rd)$ reads
    \begin{align}
        \UOT(\al, \be) \triangleq \inf_{\pi\in\Mmp(\Rd \times \Rd)} &\int \Cd(x,y)\rmd\pi(x,y) \label{eq:primal-uot}%
        + \rmD_{\varphi_1}(\pi_1|\al) + \rmD_{\varphi_2}(\pi_2|\be)\,,%
    \end{align}
    where $\pi_1$ and $\pi_2$ denote the marginal distributions of $\pi$ with respect to (w.r.t.) the first and second variable respectively.
\end{definition}

When $\varphi_1 = \varphi_2$ and $\varphi_1(x) = 0$ for $x = 1$, $\varphi_1(x) = +\infty$ otherwise, \eqref{eq:primal-uot} boils down to the Kantorovich formulation of OT (or \emph{balanced OT}), denoted by $\OT(\al, \be)$. Indeed, in that case, $\rmD_{\varphi_1}(\pi_1|\al) = \rmD_{\varphi_2}(\pi_2|\be) = 0$ if $\pi_1 = \al$ and $\pi_2 = \be$, $\rmD_{\varphi_1}(\pi_1|\al)=\rmD_{\varphi_2}(\pi_2|\be)=+\infty$ otherwise. 

Under other suitable choices of entropy functions $\phi_1$ and $\phi_2$, $\UOT(\al, \be)$ is more robust than $\OT(\al, \be)$, since it can discard outliers and compare $\al$ and $\be$ with different masses. We refer to \cite[Section 4.2]{sejourne2022unbalanced} for a detailed discussion on the choice of entropies and its consequences on the transport plan computed by UOT. Two common choices are $\varphi_i(x)=\rho\abs{x-1}$ and $\phi_i(x)=\rho(x\log(x) - x + 1)$, where $\rho > 0$ is a characteristic radius w.r.t. $\Cd$.
They respectively correspond to $\D_{\varphi_i}=\rho\TV$ (total variation distance \citep{chizat2018interpolating}) and $\D_{\varphi_i} =\rho\KL$ (Kullback-Leibler divergence), and operate differently: KL smooths out geometric outliers, while TV either keeps or removes samples \citep{sejourne2022unbalanced}. The GHK distance corresponds to \eqref{eq:primal-uot} with $\Cd(x,y)=||x-y||^2$ and $\D_{\phi_i}=\rho_i\KL$ \citep{liero2018optimal}.

One can obtain an equivalent formulation of UOT by deriving the dual of \eqref{eq:primal-uot} and proving strong duality. We recall this result below.

\begin{proposition}[Corollary 4.12 in \citep{liero2018optimal}] \label{prop:strong_duality_uot} %
The $\UOT$ problem (\ref{eq:primal-uot}) can equivalently be written as $\UOT(\al, \be) = \sup_{\f\oplus\g\leq\Cd} \calD(\f,\g ; \al, \be)$, with
\begin{equation}
    \calD(\f,\g ; \al, \be)\triangleq \int \phi^\circ_1\big(\f(x)\big)\rmd\al(x) + \int \phi^\circ_2\big(\g(y)\big)\rmd\be(y), \label{eq:dual-uot} 
\end{equation}
where for $i \in \{1, 2\}$, $\phi_i^\circ(x)\triangleq-\phi_i^*(-x)$ with $\phi_i^*(x)\triangleq \sup_{y\geq 0} xy - \phi_i(y)$ the \emph{Legendre transform} of $\phi_i$, and $\f\oplus\g\leq \Cd$ means that for $(x,y) \sim \al \otimes \be$, $\f(x)+\g(y)\leq \Cd(x,y)$.
\end{proposition}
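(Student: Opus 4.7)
The plan is to derive the dual by Fenchel--Rockafellar duality, starting from the standard variational representation of $\phi$-divergences: for any entropy function $\phi$ and any $\mu,\nu \in \Mmp(\Rd)$,
\begin{equation*}
\rmD_\phi(\mu|\nu) = \sup_{u} \left\{ \int u\,\rmd\mu - \int \phi^*(u)\,\rmd\nu \right\},
\end{equation*}
where the supremum runs over bounded continuous $u$ taking values in $\dom(\phi^*)$ (including the recession term $\phi^\prime_\infty$ is what makes this identity hold at $\mu$'s singular part). Inserting this representation into both $\rmD_{\phi_1}(\pi_1|\al)$ and $\rmD_{\phi_2}(\pi_2|\be)$ in \eqref{eq:primal-uot} and using $\int u_i\,\rmd\pi_i = \int u_i\,\rmd\pi$ rewrites $\UOT(\al,\be)$ as
\begin{equation*}
\inf_{\pi \in \Mmp(\Rd\times\Rd)} \sup_{u_1,u_2}\,\int \bigl[\Cd(x,y) + u_1(x) + u_2(y)\bigr]\,\rmd\pi(x,y) - \int \phi_1^*(u_1)\,\rmd\al - \int \phi_2^*(u_2)\,\rmd\be.
\end{equation*}

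Once the infimum and supremum are exchanged, the inner infimum is explicit: the integrand is linear in $\pi$ on the cone of positive Radon measures, so it equals $0$, attained at $\pi = 0$, whenever $\Cd + u_1 \oplus u_2 \geq 0$ pointwise, and $-\infty$ otherwise. The substitution $(\f,\g) = (-u_1, -u_2)$ turns the pointwise constraint into $\f \oplus \g \leq \Cd$, and by definition $\phi_i^\circ(x) = -\phi_i^*(-x)$ the remaining marginal terms become $\int \phi_1^\circ(\f)\,\rmd\al + \int \phi_2^\circ(\g)\,\rmd\be$, producing precisely \eqref{eq:dual-uot}.

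The hard step is rigorously justifying the swap of $\inf_\pi$ and $\sup_{u_1,u_2}$ in this non-reflexive, unbounded-cone setting. I would cast the problem in the Fenchel--Rockafellar framework on the duality pairing between $\calC_b(\Rd\times\Rd)$ and finite Radon measures, verifying (i) convexity, properness and lower semicontinuity of the primal functional (which follow from the joint convexity and lsc of $(\mu,\nu)\mapsto \rmD_\phi(\mu|\nu)$ for any entropy $\phi$), and (ii) a constraint qualification ensuring no duality gap. The qualification is subtle since $\pi$ ranges over the unbounded cone $\Mmp(\Rd\times\Rd)$, and non-superlinear entropies such as the one generating $\TV$ force one to handle the singular part $\al^\bot$ through the recession term $\phi^\prime_\infty$. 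This analytic core is exactly the content of \cite[Corollary 4.12]{liero2018optimal}, which is the result being invoked; in a self-contained write-up I would either reduce to compactly supported measures by a truncation argument and then pass to the limit using monotone convergence, or appeal directly to the perturbation duality theorem in Rockafellar's convex analysis on paired locally convex spaces.
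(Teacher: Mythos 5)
The paper itself does not prove this proposition: it is imported as-is from \cite[Corollary 4.12]{liero2018optimal}, so there is no in-paper argument to compare against. Measured against that standard, your write-up is consistent with how the paper treats the statement, and your formal derivation is the right one: the variational representation $\rmD_{\phi_i}(\pi_i|\cdot)=\sup_{u_i}\int u_i\,\rmd\pi_i-\int\phi_i^*(u_i)\,\rmd(\cdot)$, linearity of the resulting Lagrangian in $\pi$ over the cone $\Mmp(\Rd\times\Rd)$, the inner infimum collapsing to the constraint $\Cd+u_1\oplus u_2\geq 0$, and the sign flip $(\f,\g)=(-u_1,-u_2)$ giving $\f\oplus\g\leq\Cd$ and the objective $\int\phi_1^\circ(\f)\,\rmd\al+\int\phi_2^\circ(\g)\,\rmd\be$. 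This is exactly the mechanism behind the cited corollary.

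As an \emph{independent} proof, however, the argument is incomplete at precisely the step you flag: the exchange of $\inf_\pi$ and $\sup_{u_1,u_2}$. Your two proposed remedies do not yet close it. Appealing to \cite[Corollary 4.12]{liero2018optimal} is circular if the goal is to prove that corollary; and the Fenchel--Rockafellar/perturbation route or the truncation-plus-limit route are stated as plans rather than carried out --- in particular, the constraint qualification on the non-reflexive pairing of $\calC_b$ with Radon measures, the handling of the singular part via $\phi_\infty^\prime$ when the entropy is not superlinear (e.g.\ the $\TV$ case), and the tightness/compactness of near-optimal plans needed to pass to the limit after truncation all require genuine work, which is the actual content of Liero--Mielke--Savar\'e's proof. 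So: correct strategy, correctly located difficulty, but the analytic core remains delegated to the reference, exactly as the paper itself does.
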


When clear from the context, we will omit the dependence on $(\al, \be)$ and write $\calD(\f,\g)$ instead of $\calD(\f,\g ; \al, \be)$. The Legendre transform of $\varphi_i$ is well known for typical choices of $\varphi_i$-divergences. For example, if $\D_{\varphi_i}=\rho_i \KL$, then $\phi_i^*(x)=\rho_i(e^{x/\rho_i}-1)$. 

Based on \Cref{prop:strong_duality_uot}, one can compute $\UOT(\al, \be)$ by optimizing a pair of continuous functions $(f,g)$. However, $\UOT(\al, \be)$ is known to be computationally intensive \citep{pham20a}, which motivates the development of methods able to scale to the large dimensions and sample sizes encountered in ML applications.

\subsection{Sliced Optimal Transport}

Among the many workarounds that have been proposed to overcome the OT computational bottleneck \citep{peyre2019computational}, Sliced OT \citep{rabin2012wasserstein} has attracted a lot of attention due to its computational benefits and theoretical guarantees.

\begin{definition}[Sliced OT] \label{def:sw}
Let $\sphereD \triangleq \{\theta \in \Rd~:~\|\theta\| = 1\}$ be the unit sphere in $\Rd$. For $\theta \in \sphereD$, denote by $\theta^\star : \Rd \to \rset$ the linear map such that for $x \in \Rd$, $\theta^\star(x) \triangleq \ps{\theta}{x}$. Let $\unifS$ be the uniform probability over $\sphereD$. Consider $(\al, \be) \in \Mmp(\Rd) \times \Mmp(\Rd)$. The \emph{Sliced OT} problem is defined as
\begin{equation} \label{eq:def_sot}
    \SOT(\alpha, \beta) \triangleq \int_{\sphereD} \OT(\thsss \alpha, \thsss \beta) \rmd \unifS(\theta) \,,
\end{equation}
where for any measurable function $f$ and $\xi \in \Mmp(\Rd)$, $f_\sharp \xi$ is the \emph{push-forward measure} of $\xi$ by $f$, \ie, for any measurable set $\textsc{A} \subset \rset$, $f_\sharp\xi(\textsc{A}) \triangleq \xi(f^{-1}\big(\textsc{A})\big)$, $f^{-1}(\textsc{A}) \triangleq \{x\in\Rd : f(x)\in\textsc{A}\}$.
\end{definition}

Since $(\thsss \al, \thsss \beta)$ are two measures supported on $\rset$, $\OT(\thsss \mu, \thsss \nu)$ is defined in terms of a cost function $\Co:\rset\times\rset\rightarrow\rset$, and can be efficiently computed. Therefore, $\SOT(\al, \be)$ can provide significant computational advantages over $\OT(\al, \be)$ in large-scale settings. %
In practice, if $\al=\sum_{i=1}^n \al_i\delta_{x_i}$ and $\be=\sum_{i=1}^n \be_i\delta_{y_i}$ are discrete measures,
the standard procedure for approximating $\SOT(\al, \be)$ consists in sampling $m$ \iid~samples $\{\theta_j\}_{j=1}^m$ from $\unifS$, then computing $\OT\big((\theta_j^\star)_\sharp \al, (\theta_j^\star)_\sharp \be\big)$ for $j=1, \dots, m$.
This second step involves sorting the $n$ support points of $\al$ and $\be$ \cite[Section 2.6]{peyre2019computational}, thus involves $\mathcal{O}(n \log n)$ operations per $\theta_j$.

$\SOT(\al, \be)$ relies on the Kantorovich formulation of OT, thus $\SOT(\al, \be) < +\infty$ only when $m(\al) = m(\be)$, and may not provide meaningful comparisons in presence of outliers. 
To overcome such limitations, prior works have proposed slicing a particular instance of UOT that is partial OT \citep{bonneel2019spot,bai2022sliced}, for which $\D_\varphi$ is the total variation distance. More precisely, noting $\mathrm{POT}$ the UOT problem with $\D_\varphi = \rho\mathrm{TV}$, they consider for $\al,\be\in\Mmp(\R^d)$ the problem
\begin{equation}
    \mathrm{SPOT}(\al,\be) \triangleq \int_{\sphereD} \mathrm{POT}(\thsss\al,\thsss\be)\rmd\unifS(\theta).
\end{equation}
\looseness=-1 For the 1D partial OT problem, \citet{bonneel2019spot} solve a one dimensional injective partial assignment in quasilinear complexity, but which does not allow for mass destruction in the source measure, while \citet{bai2022sliced} proposed an efficient procedure with a quadratic worst case complexity. However, their algorithms only apply to measures whose samples have constant mass (\eg, $\al_i=\be_j=1$). In the next section, we %
generalize their line of work and propose a new way of combining sliced OT and unbalanced OT.

\section{Sliced Unbalanced OT and Unbalanced Sliced OT} \label{sec:theory}

We present two new scalable and robust OT problems, by combining the unbalanced and slicing strategies in two different ways. We conduct a theoretical analysis of both strategies and provide a comparison of the two. For ease of exposition, all proofs of the results in this section are provided in \Cref{app:proof-misc}.

First, we propose to \emph{slice the unbalanced OT problem}: we average the UOT problem over different projections of the compared measures, similar to the approach of sliced partial OT \citep{bonneel2019spot,bai2022sliced}. We refer to this problem as Sliced Unbalanced OT ($\SUOT$) and introduce it in \Cref{subsec:suot}. Next, we explore the reverse strategy, \ie, we \emph{unbalance the sliced OT problem}: the weights of $\SUOT$ are penalized to introduce imbalance, analogous to how $\UOT$ relates to $\OT$. We call this method \emph{Unbalanced Sliced OT} ($\RSOT$) and present it in \Cref{subsec:usot}.

\subsection{Sliced Unbalanced Optimal Transport} \label{subsec:suot}

Our first strategy consists in slicing the unbalanced OT problem and leads to the following definition.

\begin{definition}[Sliced Unbalanced OT] \label{def:suot}
    Let $(\varphi_1, \varphi_2)$ be a pair of entropy functions and $\Co : \rset \times \rset \to \rset$ a cost function. The \emph{sliced unbalanced OT} problem between $(\al, \be) \in \Mmp(\Rd) \times \Mmp(\Rd)$ reads
    \begin{align}
        \SUOT(\al, \be) &\triangleq \int_{\sphereD} \UOT(\thsss\al, \thsss\be) \rmd \unifS(\theta) \,, \label{eq:suot_primal}
    \end{align}
    where for $\theta \sim \unifS$, $\UOT(\thsss \al, \thsss \be) = \inf_{\pi_\theta \in \Mmp(\rset \times \rset)} \int_{\rset \times \rset} \Co(x,y)\rmd\pi_\theta(x,y) + \rmD_{\varphi_1}\big((\pi_\theta)_1|\thsss \al\big) + \rmD_{\varphi_2}\big((\pi_\theta)_2|\thsss \be\big)$ with $(\pi_\theta)_1, (\pi_\theta)_2$ the marginal distributions of $\pi_\theta$.
\end{definition}

By definition, SUOT is a specific instance of the class of sliced probability divergences \citep{nadjahi2020}, where the \emph{base divergence} is chosen as $\UOT$. SUOT can also be interpreted as a general expression of the sliced partial OT problem \citep{bonneel2019spot,bai2022sliced}: while the latter imposes $\D_{\phi_i}=\rho_i\TV$, SUOT allows for the use of arbitrary $\varphi$-divergences. 

In the following, we establish a set of theoretical properties for $\SUOT$ with different choices of $\varphi$-divergences and cost functions $\Co$. First, we identify sufficient conditions for which the solution of \eqref{eq:suot_primal} exists. 

\begin{proposition}[$\SUOT$: Existence of solutions] \label{prop:exist-minimizer-suot}
    Assume that $\Co$ is lower-semicontinuous and that either (i) $\phi_{1,\infty}^\prime=\phi_{2,\infty}^\prime=+\infty$, or (ii) $\Co$ has compact sublevels on $\rset \times \rset$ and $\phi_{1,\infty}^\prime+\phi_{2,\infty}^\prime +\inf \Co >0$.
    Then, the solution of $\SUOT(\al,\be)$ exists, in the sense that for any $\theta \sim \unifS$, there exists $\pi_\theta^* \in \Mmp(\rset \times \rset)$ attaining the infimum in $\UOT(\thsss\al,\thsss\be)$.
\end{proposition}

The assumptions of \Cref{prop:exist-minimizer-suot} are met for some settings of interest, including $\D_{\varphi_1}=\D_{\varphi_2}=\KL$ (since $\varphi_\infty' = +\infty$), or $\D_{\varphi_1}=\D_{\varphi_2}=\TV$ and $C_1(x,y) = |x-y|^p$ ($p \geq 1$) (since $\varphi_\infty'=1$): see \citep[Section 2.1]{sejourne2022unbalanced} for more details.%

Next, we show some topological properties of $\SUOT$. In the next proposition, we prove that $\SUOT$ preserves the metric properties of $\UOT$, which is consistent with \citep[Proposition 1]{nadjahi2020}. In \Cref{subsec:suot_vs_usot}, we  study the metrization of the weak$^*$-topology with SUOT.

\begin{proposition}[SUOT: Metric properties] \label{prop:metric-suot}
    Suppose $\UOT$ is non-negative, symmetric and/or definite on $\Mmp(\rset) \times \Mmp(\rset)$. Then, $\SUOT$ is respectively non-negative, symmetric and/or definite on $\Mmp(\Rd) \times \Mmp(\Rd)$. If there exists $p \in [1, +\infty)$ s.t. for any $\al,\be,\ga\in\Mmp(\rset)$, $\UOT^{1/p}(\al,\be)\leq \UOT^{1/p}(\al,\ga) + \UOT^{1/p}(\ga,\be)$, then $\SUOT^{1/p}(\al,\be)\leq \SUOT^{1/p}(\al,\ga) + \SUOT^{1/p}(\ga,\be)$.
\end{proposition}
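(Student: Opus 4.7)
The plan is to treat parts (i) and (ii) in turn, leveraging that SUOT is an integral over slices while USOT is an infimum of SOT plus divergences.

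For part (i), non-negativity and symmetry are immediate: under the stated hypotheses, the integrand $\theta \mapsto \UOT(\thsss\al, \thsss\be)$ is pointwise non-negative, resp.\ pointwise equal to $\UOT(\thsss\be, \thsss\al)$, so these properties transfer to the integral. For definiteness, the nontrivial implication is $\SUOT(\al,\be)=0 \Rightarrow \al=\be$. Since the integrand is non-negative and $\unifS$ has full support on $\sphereD$, the vanishing of the integral forces $\UOT(\thsss\al,\thsss\be)=0$ for $\unifS$-a.e.\ $\theta$, and by definiteness of $\UOT$ on $\Mmp(\rset)$ we obtain $\thsss\al=\thsss\be$ for $\unifS$-a.e.\ $\theta$. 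A Cram\'er--Wold type argument for positive Radon measures (identifying $\al,\be$ through their one-dimensional projections via Fourier inversion applied to the total variation of $\al-\be$) then yields $\al=\be$; one can invoke e.g.\ the injectivity of the Radon transform on finite signed measures. For the triangle inequality, I apply the hypothesis pointwise in $\theta$ and then use Minkowski's inequality in $L^p(\unifS)$:
\begin{align*}
\SUOT^{1/p}(\al,\be)
&= \Bigl(\int_{\sphereD} \UOT(\thsss\al, \thsss\be)\, \rmd\unifS(\theta)\Bigr)^{1/p} \\
&\le \Bigl(\int_{\sphereD} \bigl(\UOT^{1/p}(\thsss\al,\thsss\ga) + \UOT^{1/p}(\thsss\ga,\thsss\be)\bigr)^p \rmd\unifS(\theta)\Bigr)^{1/p} \\
&\le \SUOT^{1/p}(\al,\ga) + \SUOT^{1/p}(\ga,\be) \,.
\end{align*}

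For part (ii), non-negativity follows because $\SOT$ is non-negative (as the average of non-negative $\OT$ terms, since $\Co \ge 0$ is implicit in the setting) and $\varphi$-divergences are non-negative. Symmetry under $\varphi_1=\varphi_2$ follows by swapping the roles of $\pi_1$ and $\pi_2$ in the infimum \eqref{eq:def-rsot} and using symmetry of $\SOT$ (which itself comes from symmetry of the 1D OT in the integrand). For definiteness, if $\al=\be$ then the choice $\pi_1=\pi_2=\al$ gives a zero objective in \eqref{eq:def-rsot}, so $\RSOT(\al,\al)=0$. Conversely, assume $\RSOT(\al,\be)=0$ and take an infimizing sequence $(\pi_1^n,\pi_2^n)$. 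All three terms being non-negative, each tends to zero; lower semicontinuity of $\rmD_{\varphi_i}$ and $\SOT$ (for $\SOT$, as an integral of l.s.c.\ $\OT$ functionals) combined with tightness arguments gives a limit $(\pi_1^\star,\pi_2^\star)$ satisfying $\rmD_{\varphi_1}(\pi_1^\star|\al)=\rmD_{\varphi_2}(\pi_2^\star|\be)=0$ and $\SOT(\pi_1^\star,\pi_2^\star)=0$. By definiteness of the divergences $\pi_1^\star=\al$ and $\pi_2^\star=\be$, and by definiteness of $\SOT$ (established as part of (i) applied to the balanced case) we conclude $\al=\be$. Alternatively, if existence of minimizers for \eqref{eq:def-rsot} is available from the auxiliary existence result mentioned before the proposition, one can skip the compactness step and argue directly on an optimal pair.

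The main obstacle is the definiteness of $\SUOT$ (and correspondingly of $\SOT$ used for $\RSOT$): one must pass from equality of all one-dimensional projections to equality of the underlying positive measures on $\Rd$. The clean way is to reduce to the classical Cram\'er--Wold/Radon injectivity results; the only care is that $\al,\be$ need not be probability measures, so one works with the finite signed measure $\al-\be$ (whose projections vanish $\unifS$-a.e., hence everywhere by weak continuity of $\theta \mapsto \thsss(\al-\be)$, hence on all of $\Rd$ by injectivity of the Radon transform on the space of finite signed Radon measures).
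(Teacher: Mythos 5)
Your proposal is correct and follows essentially the same route as the paper: pointwise transfer of non-negativity and symmetry, slice-wise vanishing plus injectivity of the one-dimensional projections (the paper invokes Fourier-transform injectivity, which is the same Cram\'er--Wold/Radon argument you use) for definiteness, the Minkowski inequality for the triangle inequality, and, for $\RSOT$, arguing on an optimal pair $(\pi_1,\pi_2)$ whose existence is supplied by the auxiliary existence result — the alternative you mention is exactly what the paper does, so your infimizing-sequence detour is unnecessary but harmless.
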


By \Cref{prop:metric-suot}, establishing the metric axioms of $\UOT$ between \emph{univariate} measures (as detailed in \cite[Section 3.3.1]{sejourne2022unbalanced}) is sufficient to prove the metric properties of $\SUOT$ between \emph{multivariate} measures.
For example, since GHK is a metric for the order $p=2$ \citep{liero2018optimal}%
, so is the induced $\SUOT$.

We move on to the statistical aspects and study the sample complexity of $\SUOT$. For $(\al, \be) \in \Mmp(\R^d) \times \Mmp(\R^d)$, we establish the speed of convergence of $\SUOT(\hat{\al}_n, \hat{\be}_n)$ toward $\SUOT(\al, \be)$, where $\hat{\al}_n, \hat{\be}_n$ denote the empirical measures supported on $n$ independent samples from $\al, \be$ respectively (as defined in \Cref{subsec:notations}). We prove below that $\SUOT$ extends the sample complexity of $\UOT$ from one-dimensional settings to multi-dimensional ones.

\begin{theorem}[SUOT: Sample complexity] \label{thm:sample-comp}
\emph{(i)} Assume for $(\mu,\nu) \in \mathrm{M} \times \mathrm{M}$ with $\mathrm{M} \subset \Mmp(\R)$, $\EE|\UOT(\mu,\nu) -  \UOT(\hat\mu_n,\hat\nu_n)|\leq \kappa(n)$. Then, for $(\al,\be) \in \widetilde{\mathrm{M}} \times \widetilde{\mathrm{M}}$ with $\widetilde{\mathrm{M}} \triangleq \{ \eta \in \mathcal{M}_+(\mathbb{R}^d)\,:\,\forall \theta \in \mathbb{S}^{d-1},\ \theta_\sharp^\star \eta \in \mathrm{M}\}$, $\EE|\SUOT(\al,\be) -  \SUOT(\hat\al_n,\hat\be_n)|\leq \kappa(n)$. 

\emph{(ii)} Assume for $\mu \in \mathrm{M}$ with $\mathrm{M} \subset \Mmp(\rset)$, $\EE|\UOT(\mu,\hat\mu_n)|\leq \xi(n)$. Then, for $\al \in \widetilde{\mathrm{M}}$ with $\widetilde{\mathrm{M}} \triangleq \{ \eta \in \mathcal{M}_+(\mathbb{R}^d)\,:\,\forall \theta \in \mathbb{S}^{d-1},\ \theta_\sharp^\star \eta \in \mathrm{M}\}$, $\EE|\SUOT(\al,\hat\al_n)|\leq \xi(n)$.
\end{theorem}

Note that the expectations in \Cref{thm:sample-comp} are taken with respect to the samples of the empirical measures, which are random. \Cref{thm:sample-comp} shows that $\SUOT$ enjoys a \emph{dimension-free} sample complexity, even when comparing multivariate measures. This advantage is recurrent of sliced divergences \citep{nadjahi2020statistical} and further motivates their use on high-dimensional settings. The sample complexity rates $\kappa(n)$ or $\xi(n)$ can be deduced from the literature on UOT for univariate measures. For instance, in the GHK setting, the rate is given by $\kappa(n) \propto n^{-1/2}$ for measures with compact, convex support and continuously differentiable densities \citep[Corollary 3.4]{vacher2022stability}, and a suitable class $\Tilde{\mathrm{M}}$ can be defined. %

\looseness=-1 Finally, we derive the dual formulation of $\SUOT$ and prove that strong duality holds. This result has important practical implications, as we will leverage it in \Cref{sec:implem} to develop a methodology for computing $\SUOT$. Note that the computation of $\SUOT$ involves integration with respect to $\unifS$, which generally cannot be done in closed from, as is the case for most sliced divergences. Since our goal is to develop a practical and implementable method, we will consider the Monte Carlo approximation commonly used by practitioners to compute sliced divergences \citep{nadjahi2020}: we approximate $\SUOT(\al,\be)$ as $\int_{\sphereD} \UOT(\thsss \al, \thsss \be) \rmd \hat{\unifS}_K(\theta)$, where $\hat{\unifS}_K$ is a discrete distribution supported on $K$ i.i.d. samples drawn from $\unifS$.

\begin{theorem}[$\SUOT$: Strong duality] \label{thm:duality-suot}
    For $i \in \{1, 2\}$, let $\varphi_i$ be an entropy function such that $\dom(\phi_i^*)\cap\rset_{-}$ is non-empty, and either $0\in\dom(\phi_i)$ or $m(\al),m(\be) \in\dom(\phi_i)$.
    Let $\calE\triangleq\{ (f_\theta, g_\theta)_{\theta\in\supp(\hat{\unifS}_K)}\,:\,\forall \theta\in\supp(\hat{\unifS}_K), \f_\theta\oplus\g_\theta\leq\Co \}$. Then, %
    \begin{align}
        \SUOT(\al,\be) =\sup_{(\f_{\theta}, \g_\theta)\in \calE}\, \int_{\sphereD} \calD(\f_\theta, \g_\theta; \theta^\star_\#\al, \theta^\star_\#\be) \rmd\hat{\unifS}_K(\theta) \,. \label{eq:dual-suot} %
    \end{align} 
\end{theorem}

\subsection{Unbalanced Sliced Optimal Transport} \label{subsec:usot}

As a second strategy to make unbalanced OT scalable, we propose to unbalance sliced OT. To this end, we start with the following formulation of UOT \citep{liero2018optimal},
\begin{equation} 
    \UOT(\al, \be) = \inf_{(\pi_1,\pi_2)\in\Mmp(\Rd) \times \Mmp(\Rd)}   \OT(\pi_1,\pi_2) + \rmD_{\varphi_1}(\pi_1|\alpha) + \rmD_{\varphi_2}(\pi_2|\beta)\,,
\end{equation}
and we replace $\UOT$ by its sliced counterpart, $\SOT$. This yields the following definition:

\begin{definition}[Unbalanced Sliced OT] \label{def:suot_usot}
    Let $(\varphi_1, \varphi_2)$ be a pair of entropy functions. The \emph{unbalanced sliced OT} problem between $(\al, \be) \in \Mmp(\Rd) \times \Mmp(\Rd)$ reads
    \begin{align} 
        \RSOT(\al, \be) &\triangleq \inf_{(\pi_1,\pi_2)\in\Mmp(\Rd) \times \Mmp(\Rd)} \SOT(\pi_1,\pi_2)%
        + \rmD_{\varphi_1}(\pi_1|\alpha) + \rmD_{\varphi_2}(\pi_2|\beta) \, .\label{eq:def-rsot}
    \end{align}
\end{definition}

This approach is entirely novel since, to the best of our knowledge, it has never been studied in prior work, even for specific choices of entropy. To gain a better grasp of this new object, $\RSOT$, we examine how the theoretical properties discussed in the previous section apply here.

We first prove that the solution of \eqref{eq:def-rsot} exists under the same conditions as those for SUOT outlined in \Cref{prop:exist-minimizer-suot}.

\begin{proposition}[USOT: Existence of solutions]
\label{prop:exist-minimizer-usot}
    Assume that $\Co$ is lower-semicontinuous and that either (i) $\phi_{1,\infty}^\prime=\phi_{2,\infty}^\prime=+\infty$, or (ii) $\Co$ has compact sublevels on $\rset \times \rset$ and $\phi_{1,\infty}^\prime+\phi_{2,\infty}^\prime +\inf \Co >0$.
    Then, the solution of $\RSOT(\al,\be)$ exists: there exists $(\pi_1^*,\pi_2^*) \in \Mmp(\Rd) \times \Mmp(\Rd)$ attaining the infimum in \eqref{eq:def-rsot}.
\end{proposition}

We then review the conditions under which $\RSOT$ is a (pseudo-)metric, and we prove that strong duality holds. Similar to $\SUOT$, the dual formulation that we derive will enable the design of an algorithm for effectively computing $\RSOT$ in practice.

\begin{proposition}[USOT: Metric properties] 
\label{prop:metric-usot}
    For any $(\al, \be) \in \Mmp(\Rd) \times \Mmp(\Rd)$, $\RSOT(\al, \be) \geq 0$. If $\varphi_1=\varphi_2$, $\RSOT$ is symmetric. If $\D_{\varphi_1}$ and $\D_{\varphi_2}$ are definite, then $\RSOT$ is definite. If $\Co(x,y)=\abs{x-y}$ and $\D_{\varphi_1}=\D_{\varphi_2}=\rho\TV$, then, $\RSOT$ satisfies the triangle inequality. 
\end{proposition}

\begin{theorem}[USOT: Strong duality]
\label{thm:duality-rsot}
    For $i \in \{1, 2\}$, let $\varphi_i$ be an entropy function such that $\dom(\phi_i^*)\cap\rset_{-}$ is non-empty, and either $0\in\dom(\phi_i)$ or $m(\al),m(\be) \in\dom(\phi_i)$.
    Let $\calE\triangleq\{ (f_\theta, g_\theta)_{\theta\in\supp(\hat{\unifS}_K)}\,:\,\forall \theta\in\supp(\hat{\unifS}_K), \f_\theta\oplus\g_\theta\leq\Co \}$. Then,
    \begin{align} \label{eq:dual-rsot}
        \RSOT(\al,\be) = \sup_{(\f_\theta,\g_\theta)\in\calE}\,\calD\left( \int_{\sphereD} f_\theta \circ \theta^\star \rmd \hat{\unifS}_K(\theta), \int_{\sphereD} g_\theta \circ \theta^\star \rmd \hat{\unifS}_K(\theta); \al, \be \right) \,. %
    \end{align} 
\end{theorem}

Since $\RSOT$ does not belong to the class of sliced divergences, establishing its sample complexity is more challenging compared to $\SUOT$. Based on the literature, one standard technique involves deriving covering number bounds on the space of the dual potentials of $\RSOT$. This theoretical question is highly non-trivial given the complex structure of $\mathcal{E}$, and as such is out of the scope of this paper. Nevertheless, %
we investigate the sample complexity on empirical settings: our experimental results presented in \Cref{appendix:samplecp} suggest that $\RSOT$ might also enjoy a dimension-free rate.

\subsection{Comparative Analysis of Sliced Unbalanced and Unbalanced Sliced Optimal Transport} \label{subsec:suot_vs_usot}

In addition to the theoretical analysis previously conducted for $\SUOT$ and $\RSOT$ independently, this section provides further insights to better grasp the differences between these two strategies.

\begin{figure*}[!t]
  \centering
  \includegraphics[width=.9\textwidth]{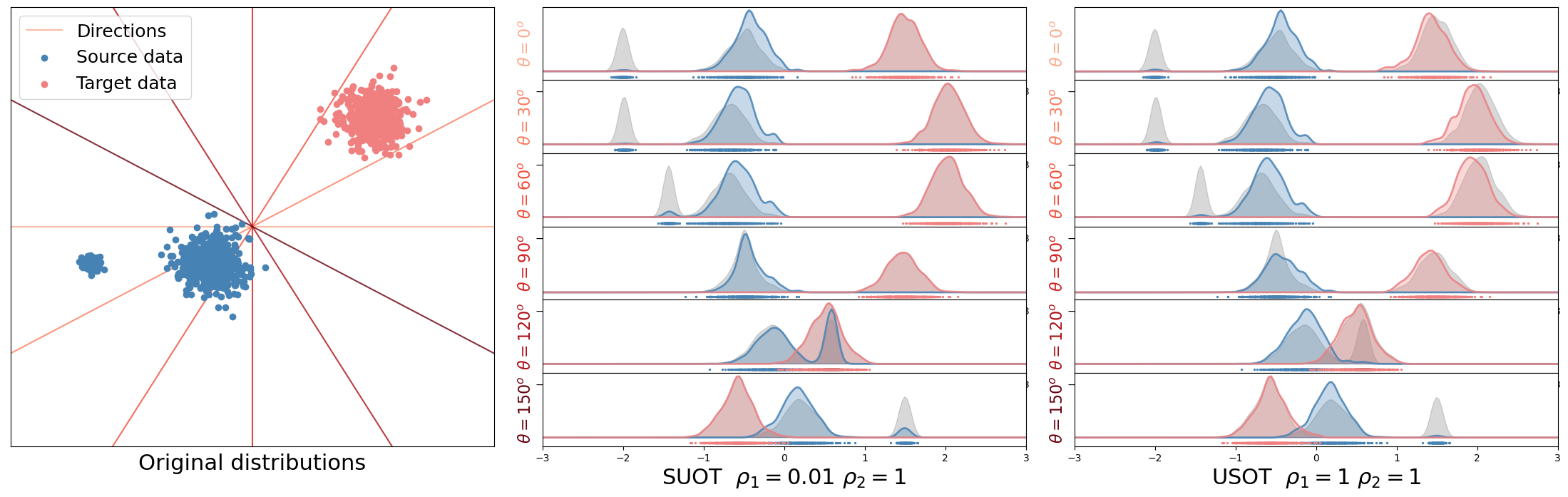}
  \caption{{\bf Toy illustration} on the behaviors of $\SUOT$ and $\RSOT$. {\em (left)} Original 2D samples and slices used for illustration. KDE density estimations of the projected samples: {\color{Gray} grey}, original distributions, {\color{Rhodamine} colored}, distributions reweighed by $\SUOT$  {\em (center)}, and reweighed by $\RSOT$ {\em (right)}.
  }
  \label{fig:intro}
\end{figure*}

First, by comparing \Cref{def:suot} with \Cref{def:suot_usot}, SUOT and USOT clearly differ  at the conceptual level. Specifically, $\SUOT(\al, \be)$ penalizes the marginals of $\pi_\theta$ for $\theta \sim \unifS$, where $\pi_\theta$ is the coupling that transports mass from $\thsss \al$ to $\thsss \be$. In contrast, $\RSOT(\al, \be)$ directly regularizes the marginals of the coupling between $\al$ and $\be$. To illustrate this difference, we consider $(\al,\be) \in \Mmp(\rset^2) \times \Mmp(\rset^2)$ with $\al$ contaminated by outliers, then compute $\SUOT(\al, \be)$ and $\RSOT(\al, \be)$. We plot $(\al, \be)$ and the sampled projections $(\theta_k)_k$ (\Cref{fig:intro}, left), the marginals of $(\pi_{\theta_k})_k$ obtained with $\SUOT(\al, \be)$ (\Cref{fig:intro}, center), and the marginals of $((\theta_k^\star)_\sharp \pi)_k$ with $\RSOT(\al,\be)$ (\Cref{fig:intro}, right). 
We observe that the source outliers in $\al$ have been successfully removed by $\RSOT(\al, \be)$ for all $\theta_k$, while they may still appear with $\SUOT(\al,\be)$ (\eg, \Cref{fig:intro}, center: note the bimodal marginal in blue for $\theta=120^\circ$). This difference is due to the marignal penalization terms in $\RSOT(\al, \be)$, which operate directly w.r.t. $(\al, \be)$ rather than their projections $(\thsss \al, \thsss \be)$, unlike $\SUOT(\al,\be)$.

A question of particular interest regarding probability divergences is how they relate to each other, specifically whether they yield equivalent topologies. We explore this question for $\SUOT$ and $\RSOT$. To do so, we consider a notion of equivalence that is frequently studied in the literature, as in \citep[Theorem 2.3.(i)]{bayraktar2019strong}. 

We start by proving a first set of inequalities that relate $\SUOT$, $\RSOT$ and $\UOT$: our next theorem shows that $\RSOT$ is always greater than $\SUOT$ and that, for appropriate choices of cost functions, $\RSOT$ is upper-bounded by $\UOT$. %

\begin{theorem} \label{thm:ineq_rd}
    For any $(\al, \be) \in \Mmp(\Rd) \times \Mmp(\Rd)$,
    \begin{equation}
        \SUOT(\al, \be) \le \RSOT(\al,\be) \,.
    \end{equation}
    Moreover, suppose that, $\forall (x,y) \in \Rd \times \Rd, \forall \theta \in \sphereD, \Co\big(\theta^\star(x),\theta^\star(y)\big) \le \Cd(x,y)$. Then, for any $(\al, \be) \in \Mmp(\Rd) \times \Mmp(\Rd)$, 
    \begin{equation} \label{eq:equiv_loss}
        \SUOT(\al,\be)\leq\RSOT(\al,\be) \leq \UOT(\al,\be) \,.
    \end{equation}
\end{theorem}

In particular, \Cref{thm:ineq_rd} holds for the following common choice of costs: $\forall (s,t) \in \rset^2$, $\Co(s,t) = \abs{s-t}^p$ and $\forall (x,y) \in \Rd \times \Rd$, $\Cd(x,y) = \| x-y \|^p$, with $p \in [1, +\infty)$.

Next, we prove that $\UOT(\al,\be)$ can be upper-bounded by a functional of $\SUOT(\al,\be)$ when $(\al, \be)$ have compact supports, by adapting the reasoning from \citet[Lemma 5.1.4]{bonnotte2013unidimensional} to our setting and considering the duals of UOT (\Cref{prop:strong_duality_uot}) and SUOT (\Cref{thm:duality-suot}) instead of the dual of OT and SOT. Most arguments in \citep{bonnotte2013unidimensional} adapt well to our setting, but establishing a Lipschitz condition on the integrand of the dual required a more technical approach. To this end, we prove \Cref{lem:uot-dual-pot-bound}, which results in a different constant value, denoted as $c(m(\al), m(\be), \rho, R)$.

\begin{theorem} \label{thm:equiv-loss}
    Let $\mathsf{X} \subset \Rd$ be a compact set with radius $R$. Define the cost functions as $\Co(s,t) = \abs{s-t}^p,\, (s,t) \in \rset^2$, and $\Cd(x,y) = \| x-y \|^p,\, (x,y) \in \Rd \times \Rd$, with $p \in [1, +\infty)$. Assume either, (i) $D_{\varphi_1} = D_{\varphi_2} = \rho \KL$; or (ii) $p=1$ and $\D_{\varphi_1}=\D_{\varphi_2}=\rho\TV$. Then, for any $(\al,\be) \in\Mmp(\mathsf{X}) \times \Mmp(\mathsf{X})$,
    \begin{align*}
        &\UOT(\al,\be) \leq c(m(\al), m(\be), \rho, R)~\SUOT(\al, \be)^{\frac{1}{d+1}} \,,
    \end{align*}
    where $c(m(\al), m(\be), \rho, R)$ is a constant depending on $m(\al), m(\be), \rho, R$, which is non-decreasing in $m(\al)$ and $m(\be)$.
\end{theorem}

We show the equivalence of $\SUOT$, $\RSOT$ and $\UOT$ by combining \Cref{thm:equiv-loss} and \Cref{thm:ineq_rd}, assuming that the constant $c(m(\al), m(\be), \rho, R)$ does not depend on $m(\al), m(\be)$. This occurs, for example, when the masses of $\al$ and $\be$ are uniformly bounded; that is, there exists $M \in \rset_+$ such that $m(\al) \leq M$ and $m(\be) \leq M$.

The equivalence of $\SUOT, \RSOT$ and $\UOT$ is a key result for proving that $\SUOT$ and $\RSOT$ \emph{metrize weak$^*$ convergence}, provided that $\UOT$ does (as in the GHK setting \cite[Theorem 7.25]{liero2018optimal}). Recall that a sequence of positive measures $(\alpha_n)_{n \in \nsets}$ converges weakly to $\al\in\Mmp(\Rd)$ (denoted by $\al_n\rightharpoonup\al$) if, for any continuous and bounded $\f:\Rd\rightarrow\rset$, $\lim_{n \to +\infty} \int\f\rmd\al_n = \int\f\rmd\al$.

\begin{theorem}[Metrizability of the weak$^*$ topology by $\SUOT, \RSOT$]\label{thm:weak-cv}
Assume the conditions in \Cref{thm:equiv-loss} are met.
Let $(\al_n)_{n\in\nsets}$ be a sequence of measures in $\Mmp(\mathsf{X})$ and $\al\in\Mmp(\mathsf{X})$, where $\mathsf{X} \subset \Rd$ is a compact set with radius $R$. 
Then, $\SUOT$ and $\RSOT$ metrize the weak$^*$ convergence, \ie,
$\al_n\rightharpoonup\al%
\Leftrightarrow \lim_{n\to+\infty} \SUOT(\al_n,\al) = 0$, and $\al_n\rightharpoonup\al \Leftrightarrow \lim_{n\to+\infty} \RSOT(\al_n,\al) = 0$.
\end{theorem}

The metrizability of weak$^*$ convergence was not studied in related work, including in existing instances of our framework, such as partial OT \citep{bonneel2019spot,bai2022sliced}. In addition to complementing prior work, our result paves the way for other research directions.
For instance, it can be used to justify the well-posedness of approximating an unbalanced Wasserstein gradient flow~\citep{ambrosio2005gradient} using $\SUOT$, as done for $\SOT$ in \citep{candau_tilh, bonet2021sliced}. Unbalanced Wasserstein gradient flows have been a key tool in deep learning theory, \eg, to prove global convergence of one-hidden layer neural networks~\citep{chizat2018global,rotskoff2019global}.

\section{Computing SUOT and USOT with Frank-Wolfe algorithms} 
\label{sec:implem}

In this section, we propose two algorithms to compute $\SUOT$ and $\RSOT$ in practice. The resulting procedures are given in \Cref{algo:suot-pseudo,algo:rsot-pseudo} respectively, and require smooth penalty terms $(\D_{\phi_1}, \D_{\phi_2})$. This condition is satisfied in the GHK setting ($\D_{\phi_i} = \rho_i \KL$), but not for sliced partial OT ($\D_{\phi_i} = \rho_i\TV$, \citet{bai2022sliced}). Our strategy is inspired by \cite{sejourne2022faster}, where they proposed to solve the unbalanced OT problem between univariate measures using the Frank-Wolfe algorithm (as recalled in \Cref{app:fw_uot}). More precisely, we apply FW to optimize translation-invariant forms of the dual problems derived in \Cref{thm:duality-suot,thm:duality-rsot}.

\subsection{Background: Frank-Wolfe Algorithm and Application to One-Dimensional Unbalanced OT}

FW is a popular iterative first-order optimization algorithm for solving $\max_{x \in \calE} \calH(x)$, where $\calE$ is a compact convex set and $\calH : \calE \to \rset$ a concave, differentiable function. The procedure consists in maximizing a linear approximation of $\calH$ at each iteration: given the current iterate $x_t$, FW solves the \emph{linear oracle} %
$r_{t+1}\in\arg\max_{r\in\calE}\ps{\nabla\calH(x_t)}{r}$, then performs %
$x_{t+1} = (1-\ga_{t+1})x_t +\ga_{t+1}r_{t+1}$ with stepsize $\ga_{t+1}$ typically chosen as $\ga_{t+1} = \frac{2}{2+ t +1}$. We refer to this step as \texttt{FWStep} and report the pseudo-code in \Cref{alg:fwstep}. %

\cite{sejourne2022faster} apply FW to solve a translation-invariant formulation of the dual of $\UOT(\al, \be)$ for $(\al, \be) \in \Mmp(\R) \times \Mmp(\R)$, and show that the linear oracle in \texttt{FWStep} is the dual of $\OT(\al_t, \be_t)$ where $(\al_t,\be_t)$ are normalized versions of $(\al,\be)$, \ie, $m(\al_t) = m(\be_t) = 1$. Therefore, computing $\UOT$ amounts to solve a sequence of $\OT$ problems, which can efficiently be done since $(\al_t, \be_t)$ are univariate probability measures. The expression of $(\al_t, \be_t)$ depend on the input measures $(\al, \be)$, the current iterates $(f_t, g_t)$ and the penalty coefficients $(\rho_1,\rho_2)$. %

\subsection{Frank-Wolfe Solvers for Sliced Unbalanced and Unbalanced Sliced OT}

\paragraph{Translation-invariant duals.} We compute $\SUOT(\al, \be)$ and $\RSOT(\al, \be)$ for any $(\al, \be) \in \Mmp(\Rd) \times \Mmp(\Rd)$ by solving translation-invariant formulations of their duals %
with FW. By \Cref{thm:duality-suot,thm:duality-rsot}, and adapting the reasoning of \cite{sejourne2022faster}, we prove that
\begin{align}
    \SUOT(\al, \be) &= \sup_{(\f_{\theta},\g_{\theta}) \in \calE}\,\int_{\sphereD} \calH(\f_\theta , \g_\theta; \theta^\star_\#\al,\theta^\star_\#\be ) \rmd\hat{\unifS}_K(\theta) \,, \label{eq:ti-dual-suot} \\ %
    \RSOT(\al, \be) &= \sup_{(\f_\theta,\g_\theta)\in\calE}\,\calH \left( \int_{\sphereD} f_\theta \circ \theta^\star \rmd \hat{\unifS}_K(\theta), \int_{\sphereD} g_\theta \circ \theta^\star \rmd \hat{\unifS}_K(\theta) ; \al, \be \right) \label{eq:ti-dual-usot}
\end{align}
where $\calH(f,g;\al,\be) \triangleq \sup_{\lambda \in \R} \calD(f+\lambda, g-\lambda;\al,\be)$. These alternative duals are translation-invariant since, for any $\lambda \in \R$, $\calH(f+\lambda, g-\lambda; \al,\be) = \calH(f, g;\al,\be)$. If $(\phi_1^\circ, \phi_2^\circ)$ are smooth and strictly concave, then the maximizer in $\calH$, denoted by $\lambda^\star(f,g)$, exists and is unique. In particular, when $\D_{\phi_1}=\rho_1\KL$ and $\D_{\phi_2}=\rho_2\KL$, $\lambda^\star(f,g)$ admits an analytical expression, which is given in the normalization routine (\Cref{alg:norm_routine}). This is convenient as it avoids the need for approximate solvers to compute $\calH(f, g; \al,\be)$.

\begin{wrapfigure}{R}{0.35\textwidth}
    \vspace{-2.2em}
    \begin{minipage}{0.35\textwidth}
        \begin{algorithm}[H] 
            \caption{-- $\texttt{Norm}(\al,\be,\f, \g,\rho_1,\rho_2)$}
            \label{alg:norm_routine}
            \small{
                \textbf{Input:} %
                 $\al$, $\be$, $\f$, $\g$, $\rho_1, \rho_2$ \\
                \textbf{Output:}~Normalized measures $(\bar{\al},\bar{\be})$
        
                \begin{algorithmic}
                    \STATE $\la^\star \leftarrow \frac{\rho_1\rho_2}{\rho_1+\rho_2}\log\left( \frac{\int e^{-\f(x)/\rho_1}\rmd\al(x)}{\int e^{-\g(y)/\rho_2}\rmd\be(y)} \right)$
                    \STATE $\bar\al \leftarrow e^{-\frac{(\f(x) + \la^\star)}{\rho_1}} \al$
                    \STATE $\bar\be \leftarrow e^{-\frac{(\g(y) - \la^\star)}{\rho_2}} \be$
                    \STATE Return $(\bar{\al},\bar{\be})$
                \end{algorithmic}
            }
        \end{algorithm}
    \end{minipage}
    \vspace{-1em}
\end{wrapfigure}

\paragraph{Frank-Wolfe iterations.} \looseness=-1 We then apply FW to solve \eqref{eq:ti-dual-suot} and \eqref{eq:ti-dual-usot}. We show that each iteration consists in solving a particular sliced OT problem between probability measures that depend on the input $(\al, \be)$ and the iterates. To clarify this point, we present below the updates of \texttt{FWStep} tailored for each problem, starting with $\SUOT$. 

\begin{proposition}[Frank-Wolfe iterations for $\SUOT$] \label{prop:fw-suot}
    Let $(\al, \be) \in \Mmp(\Rd) \times \Mmp(\Rd)$ and consider solving \eqref{eq:ti-dual-suot} with FW. Assume that $(\phi_1^\circ, \phi_2^\circ)$ are smooth and strictly concave. Given current iterates $(\f^t_\theta, \g^t_\theta)_{\theta \in \supp(\hat{\unifS}_K)} \in \calE$, the solutions of the linear oracle $(r^t_\theta, s^t_\theta)_{\theta \in \supp(\hat{\unifS}_K)}$ %
    are the dual potentials of $\int_{\sphereD} \OT(\thsss\al^t_\theta,\thsss\be^t_\theta)\rmd\hat{\unifS}_K(\theta)$, where $(\al^t_\theta, \be^t_\theta)$ are measures given by $\al^t_\theta = \nabla\phi^\circ\big(\f^t_\theta + \la^\star(\f^t_\theta, \g^t_\theta)\big) \al$ and $\be^t_\theta = \nabla\phi^\circ\big(\g^t_\theta - \la^\star(\f^t_\theta, \g^t_\theta)\big)\be$.
\end{proposition}

\Cref{prop:fw-suot} shows that each FW iteration for solving the translation-invariant dual of $\SUOT(\al,\be)$ reduces to solving a balanced sliced OT problem: by \cite[Proposition 1]{sejourne2022faster}, the measures $(\al^t_\theta, \be^t_\theta)$ have the same mass, \ie, $m(\al^t_\theta) = m(\be^t_\theta)$. When using KL-based penalty terms, the procedure for computing $(\al^t_\theta, \be^t_\theta)$ is detailed in \Cref{alg:norm_routine}, and reports the closed-form expression of $\lambda^\star(f^t_\theta, g^t_\theta)$.

Each iteration requires computing the dual potentials of a sliced OT problem, which is non-trivial: previous implementations related to sliced OT only output the value of the loss, $\SOT(\al, \be)$, typically in the context of training generative models \citep{deshpande2019max, nguyen2020distributional}. We thus design two novel implementations in PyTorch~\citep{paszke2019pytorch} to compute the dual potentials of sliced OT. The first one leverages that the gradient of $\OT(\al,\be)$ w.r.t. $(\al,\be)$ are optimal $(\f,\g)$, which allows to backpropagate $\OT(\thsss\al,\thsss\be)$ w.r.t. $(\al,\be)$ to obtain $(\r_\theta,\s_\theta)$.
The second one computes them in parallel on GPUs using their closed form, which to the best of our knowledge, is a new sliced algorithm.
We call $\texttt{SlicedDual}(\al,\be)$ %
the step returning optimal $(\r_\theta,\s_\theta)$ solving $\OT(\thsss\al,\thsss\be)$ for all $\theta \in \supp(\hat{\unifS}_K)$, and refer to \Cref{appendix:potentials} for the algorithms.

Building on \Cref{prop:fw-suot} and the discussion above, we develop the FW methodology to compute $\SUOT(\al,\be)$ and detail it in \Cref{algo:suot-pseudo}. Next, we derive the FW iterates for $\RSOT(\al,\be)$. 

\begin{proposition}[Frank-Wolfe iterations for $\RSOT$]\label{prop:fw-rsot}
    Let $(\al, \be) \in \Mmp(\Rd) \times \Mmp(\Rd)$ and consider solving \eqref{eq:ti-dual-usot} with FW.  Assume that $(\phi_1^\circ, \phi_2^\circ)$ are smooth and strictly concave. Given current iterates $(\f^t_\theta, \g^t_\theta)_{\theta \in \supp(\hat{\unifS}_K)} \in \calE$, the solutions of the linear oracle $(r^t_\theta, s^t_\theta)_{\theta \in \supp(\hat{\unifS}_K)}$ %
    are the dual potentials of $\SOT(\bar{\al}^t, \bar{\be}^t)$, where $(\bar{\al}_t, \bar{\be}_t)$ are measures given by $\bar\al_t = \nabla\phi^\circ(\f_{avg} + \la^\star(\f_{avg}, \g_{avg})) \al$ and $\bar\be_t = \nabla\phi^\circ(\g_{avg} - \la^\star(\f_{avg}, \g_{avg}))\be$, with $\f_{avg}(x) \triangleq \int_{\sphereD}\f^t_\theta(\theta^\star(x))\rmd\hat{\unifS}_K(\theta)$, $\g_{avg}(y) \triangleq \int_{\sphereD}\g^t_\theta(\theta^\star(y))\rmd\hat{\unifS}_K(\theta)$.%
\end{proposition}

The resulting FW methodology, detailed in \Cref{algo:rsot-pseudo}, also leverages the \texttt{Norm} and \texttt{SlicedDual} routines. The key difference from $\SUOT(\al, \be)$ is in where the integral over $\theta \in \supp(\hat{\unifS}_K)$ is performed, leading to a different balanced sliced OT problem to solve. 

\paragraph{Marginals of $\UOT$/$\RSOT$.} The optimal primal marginals of $\UOT$ and $\RSOT$ %
are geometric normalizations of inputs $(\al,\be)$ with discarded outliers. Their computation involves the $\texttt{Norm}$ routine detailed in \Cref{alg:norm_routine}, using optimal dual potentials. This is how we compute marginals in \Cref{fig:intro} and in the experiments of \Cref{sec:xp}: see \Cref{appendix:sliced_marginals} for more details.

\begin{figure*}
    \begin{minipage}[t]{.45\textwidth}
        \raggedright
        \begin{algorithm}[H]
            \caption{-- $\SUOT$ %
           }
           \label{algo:suot-pseudo}
            \small{
                \textbf{Input:} %
                 $\al$, $\be$, $F$, $(\theta_k)_{k=1}^K$, $\rho_1$, $\rho_2$ \\
                \textbf{Output:}~$\SUOT(\al,\be)$, $(\f_\theta,\g_\theta)$\\
                \vspace*{-1.0em}
                \begin{algorithmic}
                    \STATE %
                        $(\f_\theta,\g_\theta)\leftarrow (0,0)$
                        \FOR{$t = 0, 1, \dots, F-1$} 
                            \FOR{$\theta \in (\theta_k)_{k=1}^K$}
                                \STATE $(\al_\theta, \be_\theta)$ ${}\leftarrow \texttt{Norm}(\thsss\al,\thsss\be,\f_\theta,\g_\theta, \rho_1, \rho_2)$
                                \STATE $(\r_\theta,\s_\theta)$ ${}\leftarrow \texttt{SlicedDual}(\al_\theta, \be_\theta)$
                                \STATE $\f_\theta$ ${}\leftarrow (1-\gamma_t)\f_\theta + \gamma_t \r_\theta \;\;$ \texttt{(FWStep)}
                                \STATE $\g_\theta$ ${}\leftarrow (1-\gamma_t)\g_\theta + \gamma_t \s_\theta \;\;$ \texttt{(FWStep)}
                        \ENDFOR
                    \ENDFOR
                    \STATE Return  $\SUOT(\al,\be)$, $(\f_\theta,\g_\theta)$ as in~\eqref{eq:dual-suot}
                \end{algorithmic}
            }
        \end{algorithm}
    \end{minipage}
    \hfill
    \begin{minipage}[t]{.50\textwidth}
        \raggedleft
        \begin{algorithm}[H]
            \caption{-- $\RSOT$}
            \label{algo:rsot-pseudo}
            \small{
                \textbf{Input:} %
                    $\al$, $\be$, $F$, $(\theta_k)_{k=1}^K$, $\rho_1$, $\rho_2$ \\
                \textbf{Output:}~$\RSOT(\al,\be)$, $(\f_{avg}, \g_{avg})$\\
                \vspace*{-1.0em}
                \begin{algorithmic}
                    \STATE %
                    $(\f_\theta,\g_\theta,\f_{avg},\g_{avg})\leftarrow(0,0,0,0)$
                    \FOR{$t = 0, 1, \dots, F-1$}
                        \FOR{$\theta \in (\theta_k)_{k=1}^K$}
                            \STATE \quad $(\pi_1,\pi_2)$ ${}\leftarrow \texttt{Norm}(\al,\be,\f_{avg},\g_{avg}, \rho_1, \rho_2)$
                            \STATE \quad $(\r_\theta,\s_\theta)$ ${}\leftarrow \texttt{SlicedDual}(\thsss\pi_1,\thsss\pi_2)$
                        \ENDFOR  
                        \STATE $(\r_{avg},\s_{avg})$ ${}\leftarrow \frac1K \sum_{k=1}^K \r_{\theta_k}, \frac1K \sum_{k=1}^K \s_{\theta_k}$
                        \STATE $\f_{avg}$ ${}\leftarrow (1-\gamma_t) \f_{avg} + \gamma_t \r_{avg} \;\;$ \texttt{(FWStep)}
                        \STATE $\g_{avg}$ ${}\leftarrow (1-\gamma_t) \g_{avg} + \gamma_t \s_{avg} \;\;$ \texttt{(FWStep)}
                \ENDFOR
                \STATE Return $\RSOT(\al,\be)$, $(\f_{avg},\g_{avg})$ as in~\eqref{eq:dual-rsot}
            \end{algorithmic}
            }
        \end{algorithm}
    \end{minipage}
\end{figure*}

\subsection{Convergence Properties and Complexity}

{\bfseries Convergence and stochastic Frank-Wolfe.} 
Our theoretical setting verifies the assumptions of~\cite[Theorem 8]{lacoste2015global}, thus ensuring fast convergence of our methods.  The number of FW iterations needed to converge remains low in our experiments. We give in \Cref{app:cv_fw} empirical evidences that few iterations of FW ($F\leq 20$) suffice to reach numerical precision.

Formally, the preceding algorithms assume that the functional  $\calH$ is given through integrals over the hypersphere, describing the set of all possible directions $\theta$. However, in practice, $\SOT$ is computed by Monte-Carlo approximations, {\em i.e.}, drawing a fixed number $K$ of directions $(\theta_k)_{k=1}^K$ and solving independently the different 1D OT problems. In the specific case of $\SUOT$, this does not change much: $K$ FW procedures are ran independently (eventually in parallel) over the fixed set of directions. The case of $\RSOT$ relies on a global FW scheme, where $\f_{avg},\g_{avg}$ are computed w.r.t. a fixed distribution $\hat{\unifS}_K=(1/K)\sum_{k=1}^K \delta_{\theta_k}$. This empirical distribution of directions can be considered fixed throughout the FW iterations, or can be drawn independently for each iteration of the FW procedure.  This actually corresponds to a {\em Stochastic FW} algorithm, which also converges as our setting verifies the assumptions of \citep[Theorem 3]{hazan2016variance}.  We call this procedure \emph{Stochastic $\RSOT$}, which corresponds to \Cref{algo:rsot-pseudo} except that $(\theta_k)_{k=1}^K$ are sampled at each iteration. Since this procedure performs well in our experiments (\eg, \Cref{tab:full_results_doc}) and $\EE_{\theta_k\sim\unifS}[\hat{\unifS}_K] = \unifS$, this suggests the dual in \Cref{thm:duality-rsot} holds for $\unifS$. %

{\bfseries Algorithmic complexity.} FW algorithms and its variants have been widely studied theoretically.
Computing \texttt{SlicedDual} has theoretically a complexity $\mathcal{O}(KN\log N)$, where $N$ is the number of samples, and $K$ the number of projections of $\hat{\unifS}_K$. However, we note that the sorting operation, which yields the super linear complexity, can be computed once for all FW iterations. Consequently, the  overall complexity of $\SUOT$ and $\RSOT$ is thus $\mathcal{O}(KN\log N + FN)$, where $F$ is the number of FW iterations needed to reach convergence, with a $\mathcal{O}(N)$ complexity. Thus, our formulation enjoy a similar complexity than $\SOT$, which is particularly appealing. However, \emph{Stochastic $\RSOT$} is more costly, as each iteration requires sorting data projected along newly-sampled $(\theta_k)_{k=1}^K$. Its complexity is therefore $\mathcal{O}(KF N\log N)$. We finally note that due to the independent nature of the treatments of every projections, computing both $\texttt{Norm}$ and $\texttt{SlicedDual}$ operations can be done in parallel, leveraging GPU computations when available.

{\bfseries Extension to non-Euclidean settings.} Interestingly, our algorithms offer great modularity, in the sense they can easily be used to compute unbalanced versions of existing variants of SOT. Indeed, while such variants differ in the one-dimensional representations of $\al$ and $\be$ they use, they all consist in solving 1D OT problems to compare $\al$ and $\be$, which our FW strategy can solve. To illustrate this point, we combined our FW routine with hyperbolic SOT \citep{bonet2022hyperbolic} to compare measures supported on hyperbolic spaces: see \Cref{appendix:hsw}. %

\section{Experiments} \label{sec:xp}

\newcommand{\myrot}[1]{ \rotatebox{90}{\small \hspace{2mm} #1}}
\newcommand{\myimg}[1]{\includegraphics[width=0.11\textwidth]{#1}}

This section presents a set of numerical experiments, which illustrate the effectiveness, robustness and computational efficiency of $\RSOT$\footnote{The code is available at \url{https://github.com/clbonet/Slicing_Unbalanced_Optimal_Transport}.}. %
We first showcase the benefit of $\RSOT$ over $\SUOT$ and $\SOT$ on a document classification task. Then, we consider experiments in very large scale settings such as color transfer on every pixels and the computation of barycenters of geophysical datasets.

\begin{figure*}[t]
    \centering
    \begin{minipage}{0.6\linewidth}
        \centering
        \captionof{table}{Accuracy on document classification}
        \resizebox{\columnwidth}{!}{
            \begin{tabular}{cccccc}
                & \multicolumn{2}{c}{\textbf{BBCSport} }& \multicolumn{3}{c}{\textbf{Goodreads}} \\ \toprule
                & Acc & t ($\cdot 10^{-3}$s) & Acc (genre) & Acc (like) & t ($\cdot 10^{-3}$s) \\ \midrule
                OT & 94.55 & $3.12_{\pm 1.61}$ & 55.22 & 71.00 & $440.30_{\pm 250}$ \\
                UOT & 96.73 & $243.39_{\pm 9.24}$ & - & -  & - \\
                SinkhUOT & 95.45 & $46.22_{\pm 2.17}$ & 53.55 & 67.81 & $2021.68_{\pm 356}$ \\ \midrule
                SOT & $89.39_{\pm 0.76}$ & $1.80_{\pm 0.22}$ & $50.09_{\pm 0.51}$ & $65.60_{\pm 0.20}$ & $4.49_{\pm 1.44}$ \\
                SUOT & $90.12_{\pm 0.15}$ & $13.9_{\pm 1.21}$ & $50.15_{\pm 0.04}$ & $66.72_{\pm 0.38}$ & $14.32_{\pm 0.95}$ \\
                $\RSOT$ & $93.52_{\pm 0.04}$ & $14.37_{\pm 1.29}$ & $52.67_{\pm 0.62}$ & $67.78_{\pm 0.39}$ & $14.45_{\pm 0.88}$ \\
                SUOT (CV on $\rho$) & $90.00_{\pm 0.59}$ & - & $49.67_{\pm 0.79}$ & $66.43_{\pm 0.44}$ & - \\
                USOT (CV on $\rho$) & $92.61_{\pm 0.55}$ & - & $52.06_{\pm 7.20}$ & $66.61_{\pm 0.72}$ & - \\
                \bottomrule
            \end{tabular}
            }
        \label{tab:table_acc}
    \end{minipage}
    \hfill
    \begin{minipage}{0.38\linewidth}
        \centering
        \includegraphics[width=.8\columnwidth]{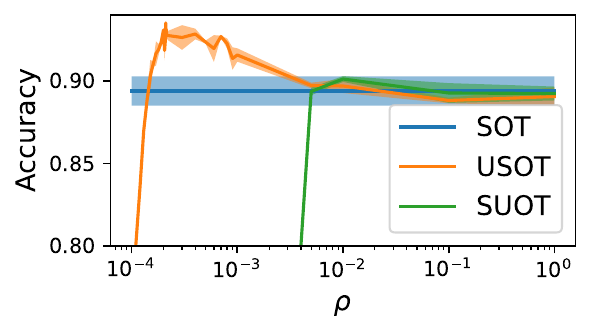}
        \caption{Ablation on BBCSport of $\rho$.}
        \label{fig:ablation_rho}
    \end{minipage}
\end{figure*}

\subsection{Document classification}

We first consider a document classification problem \citep{kusner2015word}.
Documents are represented as distributions of words embedded with \emph{word2vec} \citep{mikolov2013distributed} in dimension $d=300$. 
Let $D_k$ be the $k$-th document and $x_1^k,\dots,x_{n_k}^k\in \Rd$ be the set of words in $D_k$. 
Then, $D_k = \sum_{i=1}^{n_k} w_i^k \delta_{x_i^k}$ where $w_i^k$ is the frequency of $x_i^k$ in $D_k$ normalized s.t. $\sum_{i=1}^{n_k} w_i^k = 1$. 
Given a loss function $\mathrm{L}$, the document classification task is solved by computing the matrix $\big(\mathrm{L}(D_k, D_\ell)\big)_{k,\ell}$, then using a k-nearest neighbor classifier.  
The aim of this experiment is to show that by discarding possible outliers using a well chosen parameter $\rho$, $\RSOT$ is able to outperform SOT and SUOT on this task. 
Since a word typically appears several times in a document, the measures are not uniform and sliced partial OT \citep{bonneel2019spot, bai2022sliced} cannot be used in this setting. We detail in \Cref{appendix:xp} additional experiments without normalizing histograms to compare with \citep{bai2022sliced} (\ie, $\sum_{i=1}^{n_k} w_i^k$ is the sentence length).
We consider the BBCSport dataset \citep{kusner2015word}, a standard benchmark with small documents for which OT can be used effectively, and the Goodreads dataset \citep{maharjan2017multi} on two tasks (genre and likability predictions), a dataset with large-scale documents for which the computational burden of performing OT and $\UOT$ is substantial. 
We report on \Cref{tab:table_acc} the accuracy and average runtimes of OT, UOT computed with the majorization minimization algorithm \citep{chapel2021unbalanced} or approximated with the Sinkhorn algorithm (SinkhUOT) \citep{pham20a}, as well as SOT, SUOT and USOT. All the benchmark methods are computed using the Python OT library \citep{flamary2021pot} on a Nvidia Tesla V100 GPU.
For sliced methods, we average over 3 computations of the loss matrix and report the standard deviation in \Cref{tab:table_acc}. The number of neighbors was selected via cross validation. The results for UOT, SinkhUOT, SUOT and USOT are reported for $\rho$ yielding the best accuracy among a grid (see \Cref{appendix:doc_classif} for more details), and we display an ablation of this parameter on the BBCSport dataset in \Cref{fig:ablation_rho}. We also add on \Cref{tab:table_acc} the results obtained with a cross validation (CV) on $\rho$ for USOT and SUOT. Our findings demonstrate that our approaches surpass SOT in performance, incurring only a minor computational overhead. Moreover, our methods closely rival the performance of OT, while being 40 times faster on large-scale datasets. This highlight their practical significance, particularly when OT is computationally unfeasible.

\begin{figure*}[t]
    \centering
    \includegraphics[width=1\linewidth]{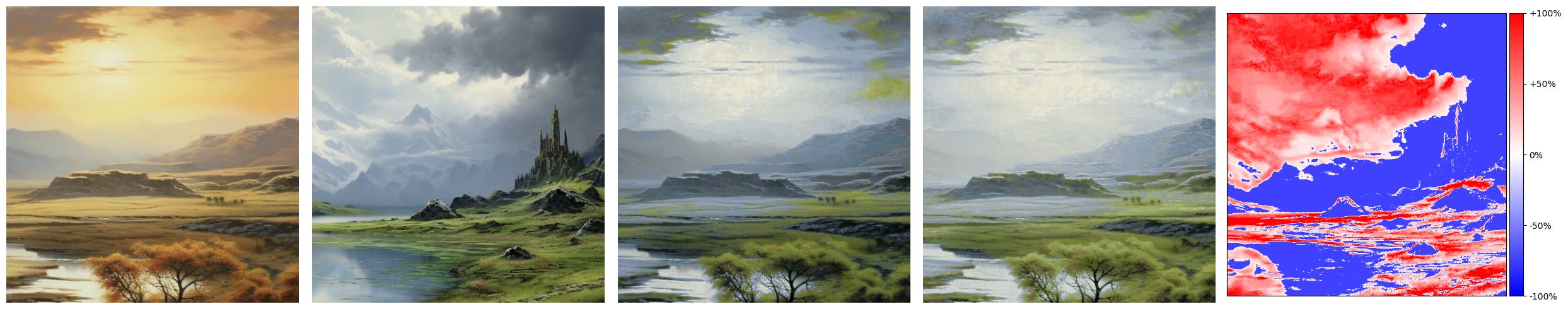}
    \includegraphics[width=1\linewidth]{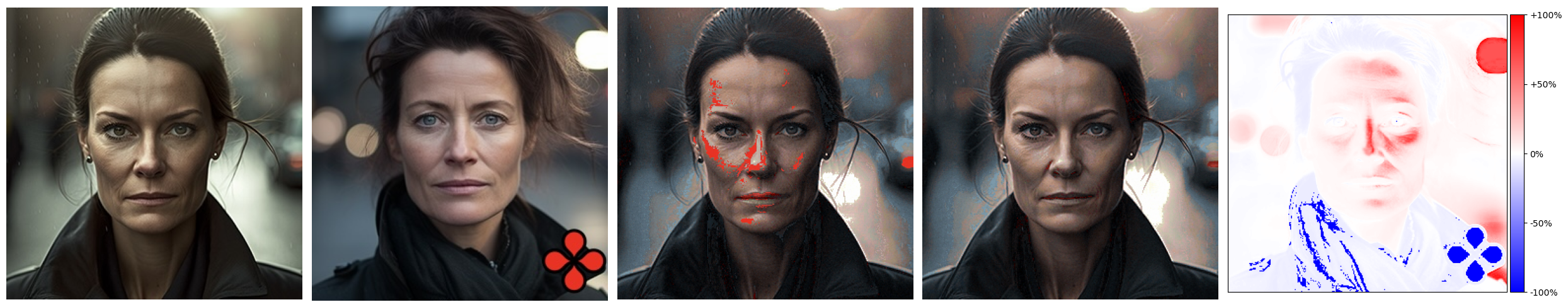}
    \vspace{-4mm}
    \caption{\textbf{Color transfer} between a source and a target image ({\em first and second columns}). We compare $\SOT$ gradient flows operated in the color space ({\em third column}) and the same procedure with a reweighing of the distributions by $\RSOT$ ({\em fourth column}). The last column shows a percentage of mass change given by $\RSOT$, {\em i.e.}, $\frac{(\pi^*_2 - \beta)}{\beta}$, where {\color{red}{\em red}} indicates mass creation and {\color{blue}{\em blue}} mass destruction.}
    \label{fig:color}
    \vspace{-2mm}
\end{figure*}

\subsection{Color transfer} 

Color transfer is a long-standing problem in OT, which dates back to the seminal work of~\citet{Rabin10}. It consists in aligning the color distributions of two images. While previous works, {\em e.g.}~\citep{ferradans13,Bonneel16}, considered color palettes to deal with the complexity of OT, we illustrate the scalability of our methods by considering here the full distributions of pixels within images, in a way similar to~\citep{bonneel2019spot}. We express the color transfer as a gradient flow, where every pixel is a sample in the 3D RGB color space.  Formally, let $\alpha(t) = \frac1N \sum_{i=1}^{N} \delta_{x_i(t)}, \beta = \frac1M \sum_{j=1}^{M} \delta_{y_j}$, where $\alpha$ (resp. $\beta$) represents the color distribution of the source (resp. target) image. The SOT gradient flow performing color transfer consists in iterating the following scheme: %
$X(t+1) = X(t) - \gamma \nabla_{X} \SOT(\alpha(t),\beta)$, where $\alpha(t)$ is the color distribution of the source image at iterations $t$, supported by pixels from $X(t)$. One of the major problem is color bleeding: potential color artefacts are likely to appear since the object proportions between images are likely to  differ.  We propose to correct this issue in a two steps procedure, relying on $\RSOT$: {\em i)} we first obtain optimal marginals $(\pi^*_1,\pi^*_2)$ by solving~\eqref{eq:def-rsot} and using the \texttt{Norm} routine, then {\em ii)} we solve the classical $\SOT$ gradient flow with the measures $(\pi^*_1,\pi^*_2)$.

We present results on $300 \times 300$ images produced by the generative model Midjourney \citep{Midjourney}. The images correspond to two landscapes and photo-realistic portraits of two women (see first and second column of Figure~\ref{fig:color}). For every results, we iterate the gradient flow for $100$ iterations, and the learning rate $\gamma$ is set to $10^{-2}$. The computation of the final result is produced in less than one minute with a commodity GPU, while it is out of reach for OT or UOT solvers on this distributions size ($90K$ pixels). 
The third column shows the color transfer using SOT. It reveals instances of color bleeding: green clouds appear in the sky of landscape scenes, and a red superimposed logo results in unwanted red pixels in the portrait. Contrasting, the fourth column displays the outcomes achieved through our approach, effectively addressing the color bleeding issue. We chose $\rho_1=10^4$ and $\rho_2=0.02$ for this task and we observe the resulting change in mass distribution on the target image in the last column of Figure~\ref{fig:color}. It provides insightful indications of the discarded colors (the darkened landscape in the first case and the removal of the logo in the second), which is only possible when all the pixels are considered in the transfer.

\begin{figure*}[t]
  \centering
  \includegraphics[width=0.9\textwidth]{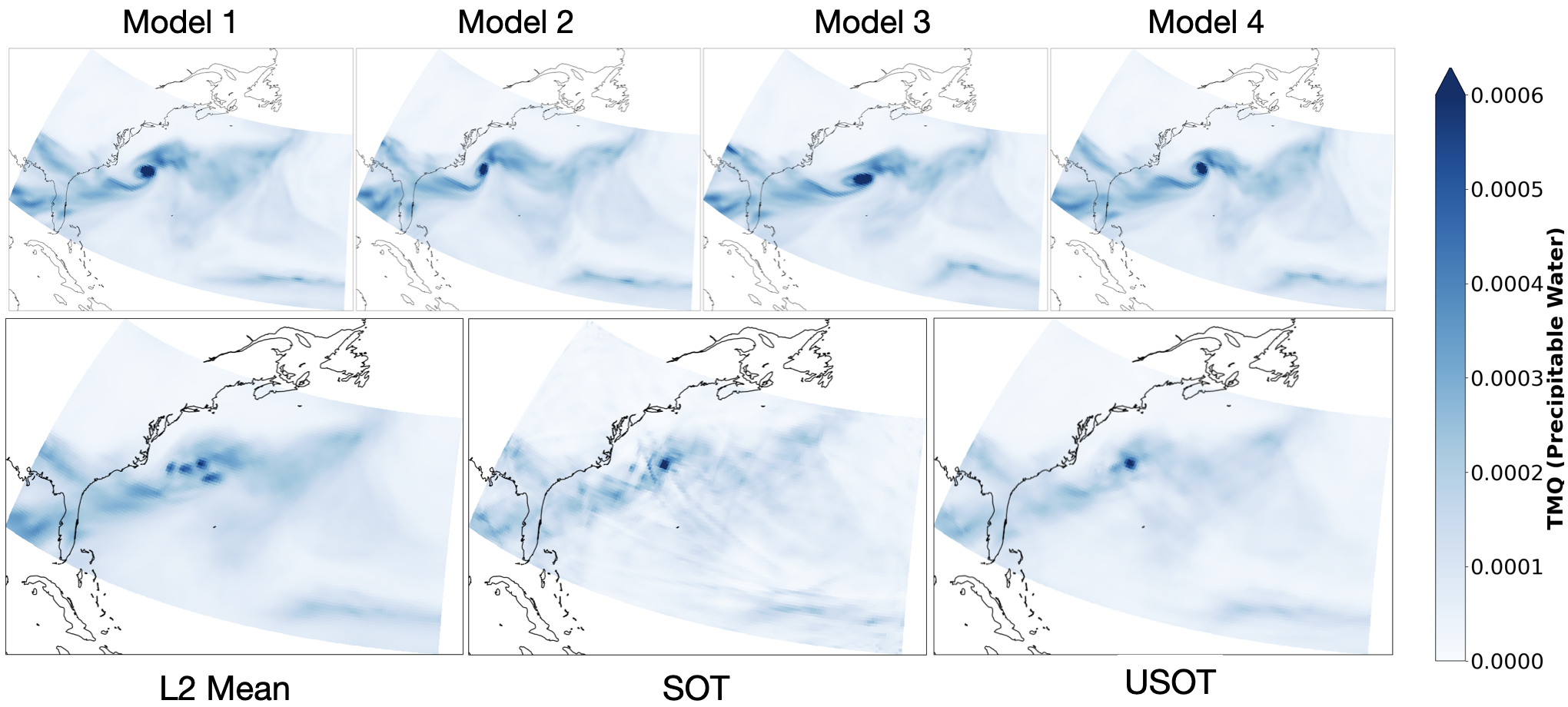}
  \vspace{0mm}
  \caption{{\bf Barycenter of geophysical data}. ({\em First row}) Simulated output of 4 different climate models depicting different scenarios for the evolution of a tropical cyclone ({\em Second row}) Results of different aggregation strategies.\vspace*{-0.3cm}}
  \label{fig:bary}
\end{figure*}

\subsection{Barycenter of geophysical data}

OT barycenters are an important topic of interest~\citep{Le21} for their ability to capture mass changes and spatial deformations over several reference measures. In order to compute barycenters under the $\RSOT$ geometry on a fixed grid, we employ a mirror-descent strategy similar to~\cite[Algorithm (1)]{cuturi14} and described more in depth in Appendix~\ref{appendix:xp}. We compute unbalanced sliced OT barycenter for climate model data. Ensembles of multiple models are commonly employed to reduce biases and evaluate uncertainties in climate projections \citep{sanderson2015,Thao22}. The commonly used Multi-Model Mean approach assumes models are centered around true values and averages the ensemble with equal or varying weights. However, spatial averaging may fail in capturing specific characteristics of the physical system at stake, and we propose to use $\RSOT$ barycenter instead. We use the ClimateNet dataset~\citep{climatenet21}, and more specifically the TMQ (precipitable water) indicator. The ClimateNet dataset is a human-expert-labeled curated dataset which captures tropical cyclones (TCs), among other things. To simulate the output of several climate models, we take a specific instant (first date of 2011) and apply the elastic deformation from TorchVision~\citep{paszke2019pytorch} in an area close to the eastern part of the U.S.A. As a result, we obtain 4 different TCs, as shown in the first row of \Cref{fig:bary}. The classical L2 spatial mean is displayed on the second row of \Cref{fig:bary} and reveals 4 different TCs centers/modes, which is undesirable. As the total TMQ mass in the considered zone varies between the different models, a direct application of $\SOT$ is impossible, or requires a normalization of the mass that has undesired effect as can be seen on the second row. Finally, we show the result of the $\RSOT$ barycenter with $\rho_1 = 1e1$ (related to the data) and $\rho_2 = 1e4$ (related to the barycenter). This barycenter has only one apparent mode, which is the expected behaviour. The considered measures have a size of $100\times 200$, and we run the barycenter algorithm for $500$ iterations (with $K=64$ projections), which takes $3$ minutes on a commodity GPU. $\UOT$ barycenters for this size of problems are intractable, and to the best of our knowledge, our experiment is the very first instance where unbalanced OT barycenters can be computed on such a large scale. %

\section{Conclusion}
We proposed two losses merging unbalanced and sliced OT, with theoretical guarantees and an efficient and modular Frank-Wolfe algorithm. We illustrate the performance improvement over SOT on various experiments, and described novel applications of unbalanced OT barycenters of positive measures, with a new case study on geophysical data. 
These novel results and algorithms pave the way to numerous new applications of sliced variants of OT, and we believe that our contributions will motivate practitioners to further explore their use in general ML applications, without the cumbersome task of pre-processing probability measures.

\subsubsection*{Acknowledgments}
We thank the anonymous reviewers for their valuable comments.
KF was supported by NSERC Discovery grant (RGPIN-2019-06512) and a Samsung grant.
CB was supported by project DynaLearn from Labex CominLabs and Région Bretagne ARED DLearnMe, and by the ANR PEPR PDE-AI. 
NC was supported by the ANR AI Chair  OTTOPIA ANR-20-CHIA-0030

\bibliography{main}
\bibliographystyle{tmlr}

\newpage

\appendix

\section{Postponed proofs for \Cref{sec:theory}}
\label{app:proof-misc}
\subsection{Existence of minimizers: Proof of \Cref{prop:exist-minimizer-suot} and \Cref{prop:exist-minimizer-usot}} %

We provide the formal statement and detailed proof on the existence of a solution for both $\SUOT$ and $\RSOT$, as mentioned in \Cref{sec:theory}. %

\begin{proposition}{\textbf{(Existence of minimizers)}}
    Assume that $\Co$ is lower-semicontinuous and that either (i) $\phi_{1,\infty}^\prime=\phi_{2,\infty}^\prime=+\infty$, or (ii) $\Co$ has compact sublevels on $\rset \times \rset$ and $\phi_{1,\infty}^\prime+\phi_{2,\infty}^\prime +\inf \Co >0$.
    Then the solution of $\SUOT(\al,\be)$ and $\RSOT(\al,\be)$ exist, %
    \ie, the infimum in \eqref{eq:suot_primal} and \eqref{eq:def-rsot} is attained.
    More precisely, there exists $(\pi_1,\pi_2)$ which attains the infimum for $\RSOT(\al,\be)$ (see~\Eqref{eq:def-rsot}).
    Concerning $\SUOT(\al,\be)$, there exists for any $\theta\in\supp(\unifS)$ a plan $\pi_\theta$ attaining the infimum in $\UOT(\thsss\al,\thsss\be)$ (see~\Eqref{eq:primal-uot}).
\end{proposition}

\begin{proof}
    We leverage \cite[Theorem 3.3]{liero2018optimal} to prove this proposition.
    In the setting of $\SUOT$, if such assumptions (i) or (ii) are satisfied for $(\al,\be)$, then they also hold for $(\thsss\al,\thsss\be)$ for any $\theta\in\sphereD$.
    Hence, $\UOT(\thsss\al,\thsss\be)$ admits a solution $\pi^\theta$.

    Concerning $\RSOT$, note that one necessarily has $m(\pi_1)=m(\pi_2)$, otherwise $\SOT(\pi_1,\pi_2)=+\infty$.
    From~\cite[Equation (3.10)]{liero2018optimal}, for any admissible $(\pi_1,\pi_2,\pi)$, one has
    \begin{align*}
        \RSOT(\al,\be) \geq m(\pi)\inf \Co + m(\al)\phi_1(\tfrac{m(\pi)}{m(\al)}) + m(\be)\phi_2(\tfrac{m(\pi)}{m(\be)}).
    \end{align*}
    In both settings the above bounds implies coercivity of the functional of $\RSOT$ w.r.t. the masses of the measures $(\pi_1,\pi_2,\pi)$.
    Thus there exists $M>0$ such that $m(\pi_1)=m(\pi_2)=m(\pi)<M$, otherwise $\RSOT(\al,\be)=+\infty$.
    By the Banach-Alaoglu theorem, the set of bounded measures $(\pi_1,\pi_2)$ is compact, and the set of plans $\pi$ with such marginals is also compact because $\Rd$ is Polish and $\Co$ is lower-semicontinuous \cite[Theorem 1.7]{santambrogio2015optimal}.
    Because the functional of $\RSOT$ is lower-semicontinuous in $(\pi_1,\pi_2,\pi)$ and we can restrict optimization over a compact set, we have existence of minimizers for $\RSOT$ by standard proofs of calculus of variations.
\end{proof}

\subsection{Strong duality: Proof of \Cref{thm:duality-suot} and \Cref{thm:duality-rsot}}
\label{app:proof-duality}

Note that the result for $\SUOT$ (\Cref{thm:duality-suot}) is proved in \Cref{lem:swap-int-suot}. Thus we focus on the proof of duality for $\RSOT$.

\begin{proof}[Proof of \Cref{thm:duality-rsot}]
    We start from the definition of $\RSOT$, reformulate it to apply the strong duality result of Proposition~\ref{prop:strong-duality} and obtain our reformulation.
    We first have that
    \begin{align*}
    \RSOT(\al,\be)= \inf_{(\pi_1,\pi_2)\in\Mmp(\rset^d)^2} \big\{  &\SOT(\pi_1,\pi_2) + \rmD_{\varphi_1}(\pi_1|\alpha) + \rmD_{\varphi_2}(\pi_2|\beta) \big\},\\
            = \inf_{(\pi_1,\pi_2)\in\Mmp(\rset^d)^2} \Bigg\{  &\int_{\sphereD} \Bigg[\,\sup_{\f_\theta\oplus\g_\theta\leq \Co} \int\f_\theta\rmd(\thsss\pi_1) + \int\g_\theta\rmd(\thsss\pi_2)\Bigg]\rmd\hat{\unifS}_K(\theta) \\
            &+ \sup_{\tilde\f\in\calE(\rset^d)} \int \phi_1^\circ(\tilde\f(x))\rmd\al(x) - \int\tilde\f(x)\rmd\pi_1(x)\\
            &+ \sup_{\tilde\g\in\calE(\rset^d)} \int \phi_2^\circ(\tilde\g(y))\rmd\be(y) - \int\tilde\g(y)\rmd\pi_2(y)\Bigg\},\\
            = \inf_{(\pi_1,\pi_2)\in\Mmp(\rset^d)^2} \Bigg\{  &\sup_{\f_\theta\oplus\g_\theta\leq \Co}\int_{\sphereD} \Bigg[\, \int\f_\theta\rmd(\thsss\pi_1) + \int\g_\theta\rmd(\thsss\pi_2)\Bigg]\rmd\hat{\unifS}_K(\theta)\\
            &+ \sup_{\tilde\f\in\calE(\rset^d)} \int \phi_1^\circ(\tilde\f(x))\rmd\al(x) - \int\tilde\f(x)\rmd\pi_1(x)\\
            &+ \sup_{\tilde\g\in\calE(\rset^d)} \int \phi_2^\circ(\tilde\g(y))\rmd\be(y) - \int\tilde\g(y)\rmd\pi_2(y)\Bigg\},
    \end{align*}
    where $\calE(\rset^d)$ denotes a set of lower-semicontinuous functions, and the last equality holds thanks to Lemma~\ref{lem:swap-sup-int}.

    We focus now on verifying that Proposition~\ref{prop:strong-duality} holds, so that we can swap the infimum and the supremum.
    Define the functional
    \begin{align*}
        \calL((\pi_1,\pi_2), ((\f_\theta)_\theta, (\g_\theta)_\theta, \tilde\f,\tilde\g))&\triangleq
        \int_{\sphereD} \Bigg[\, \int\f_\theta\rmd(\thsss\pi_1) + \int\g_\theta\rmd(\thsss\pi_2)\Bigg]\rmd\hat{\unifS}_K(\theta)\\
        &+ \int \phi_1^\circ(\tilde\f(x))\rmd\al(x) - \int\tilde\f(x)\rmd\pi_1(x)\\
        &+ \int \phi_2^\circ(\tilde\g(y))\rmd\be(y) - \int\tilde\g(y)\rmd\pi_2(y) \,.
    \end{align*}
    One has that,
    \begin{itemize}
        \item For any $((\f_\theta)_\theta, (\g_\theta)_\theta, \tilde\f,\tilde\g)$, $\calL$ is linear (thus convex) and lower-semicontinuous.
        \item For any $(\pi_1,\pi_2)$, $\calL$ is concave in $((\f_\theta)_\theta, (\g_\theta)_\theta, \tilde\f,\tilde\g)$ because $\phi_i^\circ$ is concave and thus $\calL$ is a sum of linear or concave functions.
    \end{itemize}
    Furthermore, since we assumed that $0\in\dom(\phi)$, then
    \begin{align*}
        \sup_{((\f_\theta)_\theta, (\g_\theta)_\theta, \tilde\f,\tilde\g)} \inf_{(\pi_1,\pi_2)\in\Mmp(\rset^d)^2} \calL \leq \RSOT(\al,\be)\leq \phi_1(0) m(\al) +\phi_2(0) m(\be) \,,
    \end{align*}
    because the marginals $(\pi_1,\pi_2)=(0,0)$ are admissible and suboptimal. 
    If we consider instead that $(m(\al),m(\be))\in\dom(\phi)$, then we take the marginals $\pi_1 = \al / m(\al)$ and $\pi_2 = \be / m(\be)$, which yields an upper-bound by $m(\al)\phi_1(\tfrac{1}{m(\al)}) + m(\be)\phi_2(\tfrac{1}{m(\be)})$.
    Then we consider an anchor dual point $b^\star=((\f_\theta)_\theta, (\g_\theta)_\theta, \tilde\f,\tilde\g)$ to bound $\calL$ over a compact set.
    We take $\f_\theta=0$, $\g_\theta=0$, which are always admissible since we take $\Co(x,y)\geq 0$.
    Then, since we assume there exists $p_i\leq 0$ in $\dom(\phi_i^*)$, we take $\tilde\f = p_1$ and $\tilde\g=p_2$.
    For these potentials one has:
    \begin{align*}
        \calL((\pi_1,\pi_2), b^\star)&= \phi_1^\circ(p_1)m(\al) - p_1 m(\pi_1) +\phi_2^\circ(p_2)m(\al) - p_2 m(\pi_2).
    \end{align*}
    Note that the functional at this point only depends on the masses of the marginals $(\pi_1,\pi_2)$.
    Since $(p_1,p_2)\geq 0$ the set of $(\pi_1,\pi_2)$ such that $\calL((\pi_1,\pi_2), b^\star)\leq \phi_1(0) m(\al) +\phi_2(0) m(\be)$ is non-empty (at least in a neighbourhood of $(\pi_1,\pi_2)=(0,0)$, and that $(m(\pi_1),m(\pi_2))$ are uniformly bounded by some constant $M>0$.
    By the Banach-Alaoglu theorem, such set of measures is compact for the weak* topology.

    Therefore, \Cref{prop:strong-duality} holds and we have strong duality, \ie,  
    \begin{align*}
        \RSOT(\al,\be)= \sup_{\begin{Bmatrix} \f_\theta\oplus\g_\theta\leq \Co \\ (\tilde\f,\tilde\g)\in\calE(\rset^d) \end{Bmatrix}}\inf_{(\pi_1,\pi_2)\in\Mmp(\rset^d)^2} \calL((\pi_1,\pi_2), ((\f_\theta)_\theta, (\g_\theta)_\theta, \tilde\f,\tilde\g)).
    \end{align*}

    To achieve the proof, note that taking the infimum in $(\pi_1,\pi_2)$ (for fixed dual variables) reads
    \begin{align*}
        \inf_{\pi_1,\pi_2\geq 0} & \int\Bigg(\int_{\sphereD}\f_\theta(\theta^\star(x))\rmd\hat{\unifS}_K(\theta)\Bigg)\rmd\pi_1(x)- \int\tilde\f(x)\rmd\pi_1(x)\\
        &+ \int\Bigg(\int_{\sphereD}\g_\theta(\theta^\star(y))\rmd\hat{\unifS}_K(\theta)\Bigg)\rmd\pi_2(y) - \int\tilde\g(y)\rmd\pi_2(y).
    \end{align*}
    Note that we applied Fubini's theorem here, which holds here because all measures have compact support, thus all quantities are finite.
    It allows to rephrase the minimization over $\pi_1,\pi_2\geq 0$ as the following constraint
    \begin{align*}
        \int_{\sphereD}\f_\theta(\theta^\star(x))\rmd\hat{\unifS}_K(\theta)\geq \tilde\f(x),\qquad \int_{\sphereD}\g_\theta(\theta^\star(y))\rmd\hat{\unifS}_K(\theta)\geq \tilde\g(y),
    \end{align*}
    otherwise the infimum is $-\infty$.
    However, the function $\phi^\circ$ is non-decreasing (see~\cite[Proposition 2]{sejourne2019sinkhorn}).
    Thus the maximization in $(\tilde\f,\tilde\g)$ is optimal when the above inequality is actually an equality, \ie,
    \begin{align*}
        \int_{\sphereD}\f_\theta(\theta^\star(x))\rmd\hat{\unifS}_K(\theta)= \tilde\f(x),\qquad \int_{\sphereD}\g_\theta(\theta^\star(y))\rmd\hat{\unifS}_K(\theta)= \tilde\g(y).
    \end{align*}
    Plugging the above relation in the functional $\calL$ yields the desired result on the dual of $\RSOT$ and ends the proof.
    
\end{proof}

We mention a strong duality result which is very general and which we use in the proof of~\Cref{thm:duality-rsot}. This result is taken from~\cite[Theorem 2.4]{liero2018optimal} which itself takes it from~\citep{simons2006minimax}.
\begin{proposition}{\cite[Theorem 2.4]{liero2018optimal}}
\label{prop:strong-duality}
Consider two sets $A$ and $B$ be nonempty convex sets of some vector spaces. Assume $A$ is endowed with a Hausdorff topology.
Let $L:A\times B\rightarrow\rset$ be a function such that
\begin{enumerate}
    \item $a\mapsto L(a,b)$ is convex and lower-semicontinuous on $A$, for every $b\in B$
    \item $b\mapsto L(a,b)$ is concave on $B$, for every $a\in A$.
\end{enumerate}
If there exists $b_\star \in B$ and $\kappa >\sup_{b\in B}\inf_{a\in A} L(a,b)$ such that the set $\{ a\in A,\, L(a,b_\star)<\kappa\}$ is compact in $A$, then
\begin{align*}
    \inf_{a\in A}\sup_{b\in B} L(a,b) =\sup_{b\in B}\inf_{a\in A} L(a,b)
\end{align*}
\end{proposition}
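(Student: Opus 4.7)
The plan is as follows. The inequality $\sup_{b \in B} \inf_{a \in A} L(a,b) \leq \inf_{a \in A} \sup_{b \in B} L(a,b)$ is the standard max-min inequality, which holds automatically, so only the reverse direction is at stake. Set $v = \sup_b \inf_a L(a,b)$, so by hypothesis $v < \kappa$; the goal is $\inf_a \sup_b L(a,b) \leq v$. The overall strategy is a compactness-plus-finite-intersection argument: for each $\epsilon > 0$, I aim to produce an element $a^\star$ that simultaneously satisfies $L(a^\star, b) \leq v + \epsilon$ for every $b \in B$.

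For $\epsilon \in (0, \kappa - v)$, define $F(b) = \{a \in A : L(a, b) \leq v + \epsilon\}$. Each $F(b)$ is closed in $A$ by lower semicontinuity of $L(\cdot, b)$ and is nonempty because $\inf_a L(a, b) \leq v$. In particular $F(b_\star)$ sits inside $\{a : L(a, b_\star) < \kappa\}$, which is compact by hypothesis (and closed, since $A$ is Hausdorff), so $F(b_\star)$ is itself compact. It therefore suffices to show that the family $\{F(b_\star) \cap F(b)\}_{b \in B}$ of closed subsets of the compact set $F(b_\star)$ has the finite intersection property: a common point $a^\star \in \bigcap_b F(b)$ then exists, giving $\sup_b L(a^\star, b) \leq v + \epsilon$, and letting $\epsilon \to 0$ yields the desired inequality.

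To verify the finite intersection property for a finite collection $b_1, \dots, b_n \in B$, I would introduce the function $\Phi(a, \lambda) = \sum_i \lambda_i L(a, b_i)$ on the compact convex product $F(b_\star) \times \Delta_n$, where $\Delta_n$ is the standard simplex. The map $a \mapsto \Phi(a, \lambda)$ is convex and lsc, and $\lambda \mapsto \Phi(a, \lambda)$ is affine and continuous, so a classical finite-dimensional minimax theorem (Sion or Ky Fan, provable directly from the KKM lemma) gives
\[
\inf_{a \in F(b_\star)} \max_i L(a, b_i) \;=\; \sup_{\lambda \in \Delta_n} \inf_{a \in F(b_\star)} \sum_i \lambda_i L(a, b_i).
\]
Concavity of $L(a, \cdot)$ together with convexity of $B$ yields $\sum_i \lambda_i L(a, b_i) \leq L(a, \sum_i \lambda_i b_i)$ with $\sum_i \lambda_i b_i \in B$, and combining this with $\inf_{a \in A} L(a, b) \leq v$ for every $b \in B$ one concludes $\sup_\lambda \inf_{a \in F(b_\star)} \Phi(a, \lambda) \leq v + \epsilon$. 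This produces $a^\star \in F(b_\star)$ with $\max_i L(a^\star, b_i) \leq v + \epsilon$, so $a^\star \in \bigcap_i F(b_i)$.

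The main obstacle is the finite-dimensional minimax applied in the last step: while classical, it ultimately rests on the KKM lemma (equivalently Brouwer's fixed-point theorem), and there is a subtle point in its use here, namely that one needs the inner infimum over $F(b_\star)$, not over all of $A$, to be controlled by $v + \epsilon$. This is precisely where the coercivity afforded by the compactness of $\{L(\cdot, b_\star) < \kappa\}$ together with the strict inequality $v + \epsilon < \kappa$ is used: near-minimizers of $L(\cdot, \sum \lambda_i b_i)$ are forced to lie inside $F(b_\star)$, since any $a$ with $L(a, b_\star) \geq \kappa$ would contribute at least $\kappa > v + \epsilon$ after taking the convex combination with mass at $b_\star$. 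A minor ancillary point is reconciling the strict-inequality form of the compactness hypothesis with the closed sublevel sets needed throughout, which I accommodate by intercalating the level $v + \epsilon < \kappa$.
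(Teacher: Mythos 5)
First, a remark on context: the paper does not prove this proposition at all — it is imported verbatim from \cite[Theorem 2.4]{liero2018optimal}, which in turn attributes it to Simons — so there is no in-paper proof to compare against. That said, your overall architecture is exactly the standard proof of this statement: the automatic inequality $\sup\inf\le\inf\sup$, the sublevel sets $F(b)=\{a\in A: L(a,b)\le v+\epsilon\}$ (closed by lower semicontinuity, nonempty since $\inf_a L(a,b)\le v$), compactness of $F(b_\star)$ as a closed set contained in the compact $\{L(\cdot,b_\star)<\kappa\}$, and the reduction of the finite intersection property to a finite-dimensional minimax over a simplex followed by the concavity step $\sum_i\lambda_iL(a,b_i)\le L(a,\sum_i\lambda_ib_i)$. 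All of those steps are correct.

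The gap is precisely in the step you flag as delicate. Having applied Sion/Ky Fan on $F(b_\star)\times\Delta_n$, you must show $\sup_{\lambda}\inf_{a\in F(b_\star)}\sum_i\lambda_i L(a,b_i)\le v+\epsilon$, and your justification — that any $a$ with $L(a,b_\star)\ge\kappa$ ``contributes at least $\kappa$'' once $b_\star$ carries positive mass, so near-minimizers of $L(\cdot,b_\lambda)$ are forced into $F(b_\star)$ — does not hold. Nothing in the hypotheses bounds $L$ from below: a point $a$ with $L(a,b_\star)\ge\kappa$ can still make $\lambda_0L(a,b_\star)+\sum_{i\ge1}\lambda_iL(a,b_i)$ arbitrarily small if some $L(a,b_i)$ is very negative, and the near-minimizer $a_0$ of $L(\cdot,b_\lambda)$ furnished by $\inf_{a\in A}L(a,b_\lambda)\le v$ need not satisfy $L(a_0,b_\star)\le v+\epsilon$, so $\inf_{a\in F(b_\star)}L(a,b_\lambda)$ is not controlled by $v+\epsilon$. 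The standard repair keeps your skeleton but reorders the quantifiers: include $b_\star$ itself in the finite family, $b_0=b_\star,b_1,\dots,b_n$, and apply the finite-dimensional minimax over $A\times\Delta_{n+1}$ rather than $F(b_\star)\times\Delta_n$ — Sion's theorem requires only one of the two sets to be compact, and the simplex is compact while $\Phi(a,\cdot)$ is affine and continuous on it. Concavity then gives
\begin{align*}
\inf_{a\in A}\max_{0\le i\le n}L(a,b_i)\;=\;\sup_{\lambda\in\Delta_{n+1}}\inf_{a\in A}\sum_{i}\lambda_iL(a,b_i)\;\le\;\sup_{\lambda}\inf_{a\in A}L\Big(a,\sum_i\lambda_ib_i\Big)\;\le\;v,
\end{align*}
and any $a$ with $\max_{0\le i\le n}L(a,b_i)\le v+\epsilon$ lies in $F(b_\star)$ automatically because $b_\star$ is one of the $b_i$. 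With that substitution the finite intersection property, the compactness argument in $F(b_\star)$, and the limit $\epsilon\to0$ all go through as you wrote them.
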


We also consider the following to swap the supremum in the integral which defines sliced-UOT (and in particular sliced-OT).
In what follows we note sliced potentials as functions $\f_\theta(z)$ with $(\theta,z)\in\sphereD\times\rset$, such that
\begin{align*}
    \SUOT(\al,\be) = \int_{\sphereD} \Big[\,\sup_{\f_\theta\oplus\g_\theta\leq \Co} \int\phi^\circ\circ\f_\theta\rmd(\thsss\al) + \int\phi^\circ\circ\g_\theta\rmd(\thsss\be)\Big]\rmd\hat{\unifS}_K(\theta).
\end{align*}
Note that with the above definition, $z\mapsto\f_\theta(z)$ is continuous for any $\theta$, but $\theta\mapsto\f_\theta(z)$ is only $\hat{\unifS}_K$-measurable.

\begin{lemma}
\label{lem:swap-sup-int}
    Consider two sets $X$ and $Y$, a measure $\sigma$ such that $\sigma(X)<+\infty$. Assume $Y$ is compact.
    Consider a function $\calF:X\times Y\rightarrow\rset$. Assume there exists a sequence $(y_n)$ in $Y$ such that $\calF(\cdot,y_n)\rightarrow\sup_{y\in Y}\calF(\cdot, y)$ uniformly.
    Then one has
    \begin{align*}
        \sup_{y\in Y} \int_X \calF(x,y)\rmd\sigma(x) = 
         \int_X \sup_{y\in Y}\calF(x,y)\rmd\sigma(x).
    \end{align*}
\end{lemma}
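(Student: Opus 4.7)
The plan is to prove the two inequalities separately. The direction $\sup_{y\in Y}\int_X\calF(x,y)\rmd\sigma(x)\leq\int_X\sup_{y\in Y}\calF(x,y)\rmd\sigma(x)$ is immediate: for every fixed $y\in Y$ and every $x\in X$ one has $\calF(x,y)\leq \sup_{y'\in Y}\calF(x,y')$, so integrating against the positive measure $\sigma$ and then taking the supremum over $y\in Y$ on the left preserves the inequality. Note this step requires no assumption beyond measurability of the integrand.

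For the reverse inequality, I would introduce $G(x)\triangleq\sup_{y\in Y}\calF(x,y)$ and exploit the assumed uniform maximizing sequence $(y_n)$. By uniform convergence, for every $\veps>0$ there exists $N$ such that for all $n\geq N$, $\sup_{x\in X}|G(x)-\calF(x,y_n)|\leq\veps$. Since $\sigma(X)<+\infty$, this yields
\begin{align*}
\Big|\int_X G(x)\rmd\sigma(x) - \int_X \calF(x,y_n)\rmd\sigma(x)\Big|\leq \veps\,\sigma(X),
\end{align*}
so $\int_X\calF(\cdot,y_n)\rmd\sigma \to \int_X G\rmd\sigma$. As $\int_X\calF(\cdot,y_n)\rmd\sigma\leq\sup_{y\in Y}\int_X\calF(\cdot,y)\rmd\sigma$ for every $n$, passing to the limit gives the desired bound. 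Combining both inequalities yields the claimed identity.

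The proof is quite short once the hypotheses are in place; the main subtlety is not in the lemma itself but in its use inside the proof of Theorem~\ref{thm:duality-rsot}, where it is applied with $X=\sphereD$, $\sigma=\hat{\unifS}_K$, and $Y$ a set of admissible dual potential pairs $(\f_\theta,\g_\theta)_\theta$ subject to $\f_\theta\oplus\g_\theta\leq\Co$. There the uniform-maximizing-sequence hypothesis is easy to check: since $\hat{\unifS}_K$ is supported on the finitely many directions $\{\theta_1,\dots,\theta_K\}$, one can select a near-optimal pair $(\f_{\theta_k}^{(n)},\g_{\theta_k}^{(n)})$ independently for each of the $K$ slices and aggregate them. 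Finiteness of the support then trivially upgrades pointwise to uniform convergence, which is precisely why the discretization of $\unifS$ into $\hat{\unifS}_K$ was introduced in the first place; in the continuous case one would instead have to invoke a measurable selection theorem. The finiteness $\sigma(X)<+\infty$ used in my argument above is immediate here since $\hat{\unifS}_K$ is a probability measure.
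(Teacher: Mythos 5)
Your proof is correct and follows essentially the same route as the paper: the paper sets $\calH(x,y)=\sup_{y'}\calF(x,y')-\calF(x,y)\geq 0$ (your easy inequality) and bounds $\int_X\calH(\cdot,y_n)\rmd\sigma\leq\sigma(X)\,\|\calH(\cdot,y_n)\|_\infty\to 0$, which is exactly your uniform-convergence-plus-finite-measure limit argument.
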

\begin{proof}
    Define $\calG(x) = \sup_{y\in Y}\calF(x,y)$ and $\calH(x,y) \triangleq \calG(x) - \calF(x,y)$. One has $\calH\geq 0$ by definition, and the desired equality can be rewritten as
    \begin{align*}
        &\sup_{y\in Y} \int_X \calF(x,y)\rmd\sigma(x) = 
         \int_X \sup_{y\in Y}\calF(x,y)\rmd\sigma(x)\\
         &\Leftrightarrow\inf_{y\in Y} \int_X \calH(x,y)\rmd\sigma(x) = 0.
    \end{align*}

    Since the integral involving $\calH$ is non-negative, the infimum is zero if and only if we have a sequence $(y_n)$ such that $\int_X \calH(\cdot,y_n)\rmd\sigma\rightarrow 0$.
    By assumption, one has $\calF(\cdot,y_n)\rightarrow\sup_{y\in Y}\calF(\cdot, y)$ uniformly, \ie, $||\calH(\cdot,y_n)||_\infty\rightarrow 0$.
    This implies thanks to Holder's inequality that
    \begin{align*}
        0 \leq \int_X \calH(\cdot,y_n)\rmd\sigma \leq \sigma(X)||\calH(\cdot,y_n)||_\infty
    \end{align*}
    Thus by assumption one has $\int_X\calF(\cdot,y_n)\rmd\sigma\rightarrow\int_X\calG\rmd\sigma$, which indeed means that we have the desired permutation between supremum and integral.
    
\end{proof}

\begin{lemma}\label{lem:swap-int-suot}
    Let $p \in [1, +\infty)$ and assume that $\Co(x,y)=|x-y|^p$. Consider two positive measures $(\al,\be)$ with compact support.
    Assume that the measure $\hat{\unifS}_K$ is discrete, \ie, $\hat{\unifS}_K = \tfrac{1}{K}\sum_{i=1}^K\delta_{\theta_i}$ with $\theta_i \in \sphereD$, $i = 1, \dots, n$. Then, one can swap the integral over the sphere and the supremum in the dual formulation of $\SUOT$, such that 
    \begin{align*}
        \SUOT(\al,\be) = \sup_{\f_\theta\oplus\g_\theta\leq \Co}\int_{\sphereD} \Big[\, \int\phi^\circ\circ\f_\theta\rmd(\thsss\al) + \int\phi^\circ\circ\g_\theta\rmd(\thsss\be)\Big]\rmd\hat{\unifS}_K(\theta).
    \end{align*}
    In particular, this result is valid for $\SOT$.
\end{lemma}
\begin{proof}
The proof consists in applying \Cref{lem:swap-sup-int} for $(X,Y)$ chosen as $X=\supp(\hat{\unifS}_K)\subset\sphereD$ and
\begin{align*}
    Y = \big\{\forall\theta\in\supp(\hat{\unifS}_K),\,\, \f_\theta: \rset \rightarrow\rset,\,\,\g_\theta:\rset\rightarrow\rset,\,\, \f_\theta(x)+\g_\theta(y)\leq \Co(x,y) \big\}.
\end{align*}
The functions in $Y$ are dual potentials, and by definition are continuous for any $\theta$.
Let $\calF : X\times Y\rightarrow\rset$ be the functional defined as
\begin{align*}
    \calF:(\theta, (\f_\theta)_\theta, (\g_\theta)_\theta)\mapsto\int\f_\theta\rmd(\thsss\al) + \int\g_\theta\rmd(\thsss\be) \,.
\end{align*}

Since the measures $(\al,\be)$ have compact support, then by \Cref{lem:lip-pot-indep-theta}, the supremum is attained over a subset of dual potentials of $Y$ such that for any fixed $\theta\in X$, $(\f_\theta, \g_\theta)$ are Lipschitz-continuous and bounded, thus uniformly equicontinuous functions (with constants independent of $\theta$). By the Ascoli-Arzela theorem, the set of uniformly equicontinuous functions is compact for the uniform convergence.
Hence, for any $\theta\in X$, there exists a sequence of dual potentials $(\f_{\theta,n},\g_{\theta,n})$ which uniformly converges to optimal dual potentials $(\f_\theta,\g_\theta)$ (up to extraction of subsequence).
Besides, we have $\OT(\thsss\al,\thsss\be)=\calF(\theta, \f_\theta,\g_\theta)$ and $\calF(\theta, (\f_{\theta,n})_\theta,(\g_{\theta,n})_\theta)\to\OT(\thsss\al,\thsss\be)$ as $n \to +\infty$.
Denote $\calF_n(\theta)\triangleq\calF(\theta, (\f_{\theta,n})_\theta,(\g_{\theta,n})_\theta)$ and $\OT(\theta)\triangleq\OT(\thsss\al,\thsss\be)$.
In order to apply \Cref{lem:swap-sup-int}, we need to prove that the convergence of $(\calF_n(\theta))_{n\in\nsets}$ to $\OT(\thsss\al,\thsss\be)$ is uniform w.r.t. $\theta$, \ie, $\sup_{\theta\in X}|\calF_n(\theta) - \OT(\theta)|\to 0$ as $n\to+\infty$.

First, note that for any $\theta\in X$,
\begin{align*}
    |\calF_n(\theta) - \OT(\theta)|\leq m(\al) \|\f_{\theta,n} - \f_{\theta}\|_\infty + m(\be) \|\g_{\theta,n} - \g_\theta\|_\infty.
\end{align*}
Since for a fixed $\theta\in X$, $(f_{\theta,n}, g_{\theta,n})_{n\in\nsets}$ uniformly converge to $(\f_\theta,\g_\theta)$, this means that
\begin{align*}
    \forall\theta\in X,\, \forall\epsilon>0,\,\, \exists N(\epsilon,\theta), \forall n\geq N(\epsilon,\theta),\,\, m(\al) \|\f_{\theta,n} - \f_{\theta}\|_\infty + m(\be) \|\g_{\theta,n} - \g_\theta\|_\infty<\epsilon.
\end{align*}

Since we assume that $\sigma$ is supported on a discrete set, then the cardinal of $X$ is finite and one can define $N(\epsilon) \triangleq \max_{\theta\in X} N(\epsilon,\theta)$. This yields,
\begin{align*}
   \forall\epsilon>0,\,\, \exists N(\epsilon), \forall n\geq N(\epsilon), \sup_{\theta\in X}|\calF_n(\theta) - \OT(\theta)|<\epsilon.
\end{align*}
which means that $\sup_{\theta\in X}|\calF_n(\theta) - \OT(\theta)|\to 0$, thus concludes the proof.

\end{proof}

\begin{lemma}\label{lem:lip-pot-indep-theta}
    Let $p \in [1, +\infty)$ and $\Co(x,y)=|x - y|^p$.
    Consider two positive measures $(\al,\be)\in\Mmp(\rset^d)$ whose support is such that $\Cd(x,y)=||x-y||^p\leq R$.
    Then for any $\theta\in\sphereD$, one can restrict without loss of generality the problem $\UOT(\thsss\al,\thsss\be)$  as a supremum over dual potentials satisfying $\f_\theta(x)+\g_\theta(y)\leq \Co(x,y)$, uniformly bounded by $M$ and uniformly $L$-Lipschitz, where $M$ and $L$ do not depend on $\theta$.
\end{lemma}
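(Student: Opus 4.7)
\textbf{Proof plan for \Cref{lem:lip-pot-indep-theta}.} The goal is to obtain dual potentials for $\UOT(\thsss\al,\thsss\be)$ that are bounded and Lipschitz with constants independent of $\theta\in\sphereD$. The key reduction is geometric: since $\|\theta\|=1$, the map $\theta^\star$ is $1$-Lipschitz, hence for $(x,y)\in\supp(\al)\times\supp(\be)$ we have $|\theta^\star(x)-\theta^\star(y)|\leq\|x-y\|\leq R^{1/p}$, so the projected supports $\theta^\star(\supp(\al))$ and $\theta^\star(\supp(\be))$ are contained in an interval $I_\theta\subset\rset$ of diameter at most $D\triangleq R^{1/p}$, and moreover $\Co(u,v)=|u-v|^p\leq R$ for all $(u,v)\in\theta^\star(\supp(\al))\times\theta^\star(\supp(\be))$. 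These bounds are $\theta$-free.

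First I would derive the uniform $L^\infty$ bound. Applying \Cref{lem:uot-dual-pot-bound} to the pair $(\thsss\al,\thsss\be)$ with the 1D cost $\Co$, which is bounded by $R$ on the relevant product of supports, yields explicit bounds on the optimal dual pair $(\f_\theta,\g_\theta)$ depending only on $R$, $\rho_1$, $\rho_2$, $m(\thsss\al)=m(\al)$ and $m(\thsss\be)=m(\be)$. None of these quantities depend on $\theta$, so one obtains a uniform constant $M$ with $\|\f_\theta\|_\infty,\|\g_\theta\|_\infty\leq M$.

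Second, for the Lipschitz property I would invoke the standard $c$-transform argument, adapted to UOT. Given admissible $(\f_\theta,\g_\theta)$, define $\g_\theta^c(u)\triangleq\inf_v\{\Co(u,v)-\g_\theta(v)\}$ and replace $\f_\theta$ by $\g_\theta^c$; then one re-$c$-transforms $\g_\theta$. Since $\f_\theta\oplus\g_\theta\leq\Co$ is preserved and $\g_\theta^c\geq\f_\theta$ pointwise, the UOT dual objective increases because $\phi_1^\circ$ is non-decreasing (cf.\ \cite{sejourne2019sinkhorn}), and analogously for the second marginal. Hence one may restrict optimization to $c$-conjugate pairs without loss of generality. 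On the bounded interval $I_\theta$ of length $\leq D$, the function $v\mapsto|u-v|^p$ is $L$-Lipschitz with $L=p\,D^{p-1}$ for $p>1$ (and $L=1$ for $p=1$), so $\g_\theta^c$ inherits the same Lipschitz constant as an infimum of $L$-Lipschitz functions, and symmetrically for the second potential. The constant $L$ depends only on $p$ and $R$, hence is $\theta$-independent.

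The main obstacle is ensuring compatibility of the two reductions: one has to check that after passing to $c$-conjugates, the uniform $L^\infty$ bound $M$ is preserved (possibly up to an additive constant controlled by $D$ and the bound on $\Co$). This is straightforward because $c$-conjugation on the compact interval $I_\theta$ shifts $\|\f_\theta\|_\infty$ by at most $\sup\Co\leq R$, so one may enlarge $M$ by $R$ and the final bound remains $\theta$-free. Combining the two steps gives dual potentials that are simultaneously uniformly bounded by $M+R$ and uniformly $L$-Lipschitz, concluding the proof.
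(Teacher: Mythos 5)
Your argument is essentially sound and reaches the same conclusion, but it routes the boundedness step differently from the paper. The paper works with the translation-invariant dual functional $\calH(\f,\g)=\sup_{\la\in\rset}\calD(\f+\la,\g-\la)$ of \cite{sejourne2022faster}: since $\calH$ enjoys the same invariance $(\f,\g)\mapsto(\f+\la,\g-\la)$ as the balanced dual, it transplants the whole of \cite[Proposition 1.11]{santambrogio2015optimal} (c-transform plus anchoring) to get $\f_\theta\in[0,R]$, $\g_\theta\in[-R,R]$ and $L$-Lipschitz potentials, with constants that involve only the cost bound; the $\theta$-independence is then exactly your Cauchy--Schwarz observation $|\ps{\theta}{x-y}|^p\leq\|x-y\|^p\leq R$ and the corresponding bound on the derivative of the projected cost. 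You instead obtain the sup-norm bound by invoking \Cref{lem:uot-dual-pot-bound} on $(\thsss\al,\thsss\be)$ and then run the monotone c-transform argument ($\g_\theta^c\geq\f_\theta$ and $\phi^\circ$ non-decreasing) explicitly for the Lipschitz part, patching the $L^\infty$ bound by $+R$ after conjugation; all of these steps are valid, and your Lipschitz constant $p R^{(p-1)/p}$ is in fact the more careful one. The trade-off is generality: \Cref{lem:uot-dual-pot-bound} (via \Cref{lem:optim-transl}) is specific to $\D_{\phi_1}=\D_{\phi_2}=\rho\KL$ and produces constants depending on $\rho$ and the masses, whereas the paper's translation-invariance route is entropy-agnostic and mass-free. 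This matters because the lemma is consumed inside the proof of strong duality (\Cref{thm:duality-rsot}), which is stated for general entropy functions and is later applied in the $\TV$ setting (\Cref{lem:dual-tv}); so while your proof covers the GHK case the paper emphasizes, you would need to replace the appeal to \Cref{lem:uot-dual-pot-bound} by an invariance/anchoring argument (or some other entropy-independent bound) to support all the uses the paper makes of this lemma.
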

\begin{proof}
    We adapt the proof of~\cite[Proposition 1.11]{santambrogio2015optimal}, and focus on showing that the uniform boundedness and Lipschitz constant are independent of $\theta\in\sphereD$ in this setting.
    Here we consider the translation-invariant formulation of $\UOT$ from~\citep{sejourne2022faster}, \ie, $\UOT(\al,\be) = \sup_{\f\oplus\g\leq\Cd}\calH(\f,\g)$, where $\calH(\f,\g)=\sup_{\la\in\rset}\calD(\f+\la,\g-\la)$.
    It is proved in~\cite[Proposition 9]{sejourne2022faster} that the above problem has the same primal and is thus equivalent to optimize $\calD$.
    By definition one has $\calH(\f,\g)=\calH(\f+\la,\g-\la)$ for any $\la\in\rset$, \ie, this formulation shares the same invariance as Balanced OT.
    Thus we can reuse all arguments from~\cite[Proposition 1.11]{santambrogio2015optimal}, such that for $\UOT(\al,\be)$, one can use the constraint $\f(x)+\g(y)\leq \Cd(x,y)$ and the assumption $\Cd(x,y)\leq R$ to prove that without loss of generality, on can restrict to potentials such that
    $\f(x)\in[0,R]$ and $\g(y)\in[-R,R]$.
    Furthermore if the cost satisfies in $\rset^d$
    \begin{align*}
        |\Cd(x,y) - \Cd(x',y')|\leq L (||x-x'|| + ||y-y'||),
    \end{align*}
    then one can also restrict w.l.o.g. to potentials which are $L$-Lipschitz.
    For the cost $\Cd(x,y)=||x-y||^p$ with $p\geq 1$, this holds with constant $L=p R^{p-1}$ because the support is bounded and the gradient of $\Cd$ is radially non-decreasing.

    Regarding $\OT(\thsss\al,\thsss\be)$, the bounds $(M_\theta,L_\theta)$ could be refined by considering the dependence in $\theta\in\sphereD$.
    However we prove now these constants can be upper-bounded by a finite constant independent of $\theta$.
    In this setting we consider the cost
    \begin{align*}
        \Co(\thh(x), \thh(y)) = |\ps{\theta}{x-y}|^p\leq ||\theta||^p ||x-y||^p\leq||x-y||^p,
    \end{align*}
    by Cauchy-Schwarz inequality.
    Therefore, if $(\al,\be)$ have supports such that $||x-y||^p\leq R$, then $(\thsss\al,\thsss\be)$ also have supports bounded by $R$ in $\rset$.
    Similarly note that the derivative of $h(x)=x^p$ is non-decreasing for $p\geq 1$.
    Hence the cost $\Co(\thh(x), \thh(y))$ has a bounded derivative, which reads
    \begin{align*}
        p|\ps{\theta}{x-y}|^{p-1}\leq p||\theta||^{p-1} ||x-y||^{p-1}\leq p|x-y||^{p-1}\leq pR^{p-1}.
    \end{align*}
    Thus on the supports of $(\thsss\al,\thsss\be)$ one can also bound the Lipschitz constant of the cost $\Co(x,y)=|x-y|^p$ by the same constant $L$.
\end{proof}

\paragraph{Remark: Extending \Cref{thm:duality-rsot}.}
We conjecture that \Cref{thm:duality-rsot} also holds when $\unifS$ is the uniform measures over $\sphereD$, since the above holds for any $N \in \nsets$ and $\hat{\unifS}_N$ converges weakly* to $\unifS$. Proving this result would require that potentials $(\f_\theta,\g_\theta)$ are also regular (\ie, Lipschitz and bounded) w.r.t $\theta\in\sphereD$.
This regularity is proved in~\citep{xi2022distributional} assuming $(\al,\be)$ have densities, but remains unknown for discrete measures.
Since discretizing $\unifS$ corresponds to the computational approach, we assume it to be discrete, so that no additional assumption than boundedness on $(\al,\be)$ is required.
For instance, such result remains valid for semi-discrete UOT computation.

\subsection{Metric properties: Proof of \Cref{prop:metric-suot} and \Cref{prop:metric-usot}}
\begin{proof}[Proof of \Cref{prop:metric-suot}]
    \textbf{Metric properties of $\SUOT$.}
    Symmetry and non-negativity are immediate.
    Assume $\SUOT(\al,\be)=0$. 
    Since $\unifS$ is the uniform distribution on $\sphereD$, then for any $\theta\in\sphereD$, $\UOT(\thsss \alpha, \thsss \beta)=0$, and since $\UOT$ is assumed to be definite, then $\thsss \alpha = \thsss \beta$.
    By \cite[Proposition 3.8.6]{bogachev2007measure}, this implies that $\al$ and $\be$ have the same Fourier transform.
    By injectivity of the Fourier transform, we conclude that $\al=\be$, hence $\SUOT$ is definite.
    The triangle inequality results from applying the Minkowski inequality then the triangle inequality for $\UOT^{1/p}$ for $p \in [1, +\infty)$: for any $\al, \be, \gamma \in \Mmp(\Rd)$,
    \begin{align*}
        &\SUOT^{1/p}(\alpha, \beta) \\
        &= \Bigg( \int_{\sphereD} \UOT(\thsss \alpha, \thsss \beta) \rmd \unifS(\theta)\Bigg)^{1/p}\\
        &\leq \Bigg( \int_{\sphereD} \big[\UOT^{1/p}(\thsss\al,\thsss\ga) + \UOT^{1/p}(\thsss\ga,\thsss\be)\big]^{p} \rmd \unifS(\theta)\Bigg)^{1/p}\\
        &\leq \Bigg( \int_{\sphereD} \big[\UOT^{1/p}(\thsss\al,\thsss\ga)\big]^{p} \rmd \unifS(\theta)\Bigg)^{1/p} + \Bigg( \int_{\sphereD} \big[\UOT^{1/p}(\thsss\ga,\thsss\be)\big]^{p} \rmd \unifS(\theta)\Bigg)^{1/p}\\
        &= \SUOT^{1/p}(\alpha, \ga) + \SUOT^{1/p}(\ga, \beta).
    \end{align*}
\end{proof}

\begin{proof}[Proof of \Cref{prop:metric-usot}]   
    \textbf{Metric properties of $\RSOT$.}
    Let $(\al, \be) \in \Mmp(\Rd)$. Non-negativity is immediate, as $\RSOT$ is defined as a program minimizing a sum of positive terms. 
    $\SOT$ is symmetric, thus when $\phi_1=\phi_2$, we obtain symmetry of the functional w.r.t. $(\al,\be)$.
    Assume $\D_\phi$ is definite, \ie, $\D_\phi(\al|\be)=0$ implies $\al=\be$.
    Assume now that $\RSOT(\al,\be)=0$, and denote by $(\pi_1,\pi_2)$ the optimal marginals attaining the infimum in \eqref{eq:def-rsot}.
    $\RSOT(\al,\be)=0$ implies that $\SOT(\pi_1,\pi_2)=0$, $\D_\phi(\pi_1|\al)=0$ and $\D_\phi(\pi_2|\be)=0$.
    These three terms are definite, which yields $\al=\pi_1=\pi_2=\be$, hence the definiteness of $\RSOT$.
    The Partial OT setting (\ie, $\D_\phi=\rho\TV$) is treated in Appendix~\ref{app:tv-case}.
    
\end{proof}

\subsection{Sample complexity: Proof of \Cref{thm:sample-comp}}

\Cref{thm:sample-comp} is obtained by adapting \cite[Theorems 4 and 5]{nadjahi2020statistical}. We provide the detailed derivations below.

\begin{proof}[Proof of \Cref{thm:sample-comp}]
  Let $(\al, \be) \in \widetilde{\mathrm{M}} \times \widetilde{\mathrm{M}}$ with respective empirical approximations $\hat\al_n, \hat\be_n$ over $n$ samples. 
  By using the definition of $\SUOT$, the triangle inequality %
  and the assumed sample complexity of $\UOT$ for univariate measures, we show that
  \begin{align}
    &\EE \abs{\SUOT(\al, \be) - \SUOT(\hat\al_n, \hat\be_n)}\\ 
    &= \EE\left|\int_{\sphereD} \big\{ \UOT(\thsss \al, \thsss \be) - \UOT(\thsss \hat\al_n, \thsss \hat\be_n) \big\} \rmd \unifS(\theta) \right| \\
    &\leq \EE\left\{ \int_{\sphereD} \big| \UOT(\thsss \al, \thsss \be) - \UOT(\thsss \hat\al_n, \thsss \hat\be_n) \big| \rmd \unifS(\theta) \right\} \\
    &\leq \int_{\sphereD} \EE \big| \UOT(\thsss \al, \thsss \be) - \UOT(\thsss \hat\al_n, \thsss \hat\be_n) \big| \rmd \unifS(\theta) \\
    &\leq \int_{\sphereD} \kappa(n) \rmd \unifS(\theta) = \kappa(n) \,,
  \end{align}
  which completes the proof for the first setting.

  Next, let $\al \in \widetilde{\mathrm{M}}$ with corresponding empirical approximation $\hat\al_n$. Then, using the definition of $\SUOT$, the triangle inequality (w.r.t. integral) and the assumed convergence rate in $\UOT$, 
  \begin{align}
    &\EE\abs{\SUOT(\hat\al_n, \al)}\\ 
    &= \EE \abs{\int_{\sphereD} \UOT(\thsss \hat\al_n, \thsss \al) \rmd \unifS(\theta)} \leq \EE \left\{ \int_{\sphereD} \abs{\UOT(\thsss \hat\al_n, \thsss \al)} \rmd \unifS(\theta) \right\} \\
    &\leq \int_{\sphereD} \EE\abs{\UOT(\thsss \hat\al_n, \thsss \al)} \rmd \unifS(\theta) \leq \int_{\sphereD} \xi(n) \rmd \unifS(\theta) = \xi(n) \eqsp . \label{eq:rateofconv_sliced}
  \end{align}

\end{proof}

\begin{corollary}
    Assume for $\mu\in\Mmp(\R)$, $\EE|\UOT(\mu,\hat{\mu}_n)| \le \xi(n)$ and that for $p\ge 1$, $\UOT^{1/p}$ satisfies non-negativity, symmetry and the triangle inequality on $\Mmp(\rset) \times \Mmp(\rset)$. Then, for $\al,\be \in\Mmp(\R^d)$,
    \begin{equation}
        \EE \abs{\SUOT^{1/p}(\al, \be) - \SUOT^{1/p}(\hat\al_n, \hat\be_n)} \le 2 \xi(n)^{1/p}.
    \end{equation}
\end{corollary}
\begin{proof}
    Since $\UOT^{1/p}$ satisfies non-negativity, symmetry and the triangle inequality on $\Mmp(\rset) \times \Mmp(\rset)$, $\SUOT^{1/p}$ verifies these three metric properties on $\Mmp(\Rd) \times \Mmp(\Rd)$ by \Cref{prop:metric-suot}, and we can derive its sample complexity as follows. For any $\al, \be$ in $\Mmp(\Rd)$ with respective empirical approximations $\hat\al_n, \hat\be_n$, applying the triangle inequality yields for $p \in [1, +\infty)$,
  \begin{align}
    \abs{\UOT^{1/p}(\al, \be) - \UOT^{1/p}(\hat\al_n, \hat\be_n)} \leq \UOT^{1/p}(\hat\al_n, \al) + \UOT^{1/p}(\hat\be_n, \be) \,. \label{eq:sliced_sample_complexity_0}
  \end{align}
  Taking the expectation of \eqref{eq:sliced_sample_complexity_0} with respect to $\hat\al_n, \hat\be_n$ gives,
  \begin{align}
    \EE \abs{\SUOT^{1/p}(\al, \be) - \SUOT^{1/p}(\hat\al_n, \hat\be_n)} &\leq \EE | \SUOT^{1/p}(\hat\al_n, \al) | + \EE | \SUOT^{1/p}(\hat\be_n, \be) | \\
    &\leq \left\{ \EE \abs{\SUOT(\hat\al_n, \al)} \right\}^{1/p} + \{ \EE |\SUOT(\hat\be_n, \be)|\}^{1/p} \label{eq:sliced_sample_complexity_1} \\
    &\leq \xi(n)^{1/p} + \xi(n)^{1/p} = 2 \xi(n)^{1/p} \eqsp , \label{eq:sliced_sample_complexity_2}
  \end{align}

  where \eqref{eq:sliced_sample_complexity_1} is immediate if $p =1$, and results from applying H\"older's inequality on $\sphereD$ if $p > 1$, and \eqref{eq:sliced_sample_complexity_2} follows from \eqref{eq:rateofconv_sliced}.
\end{proof}

\subsection{Comparison of $\SUOT$, $\RSOT$, $\SOT$, and proof of \Cref{thm:ineq_rd} and \Cref{thm:equiv-loss}}
\label{app:proof-equiv}

In this section, we establish several bounds to compare $\SUOT$, $\RSOT$ and $\SOT$ on the space of compactly-supported measures. We provide the detailed derivations and auxiliary lemmas needed for the proofs. The Partial OT setting (\ie, $\D_\phi=\rho\TV$) is treated in Appendix~\ref{app:tv-case}.

\paragraph{Proof of \Cref{thm:ineq_rd}.} \Cref{thm:ineq_rd} is a direct consequence from \Cref{thm:suot_leq_usot,thm:usot_leq_uot}.

\begin{theorem} \label{thm:suot_leq_usot}
    Let $(\al, \be) \in \Mmp(\Rd) \times \Mmp(\Rd)$. Then, $\SUOT(\al,\be)\leq\RSOT(\al,\be)$.
\end{theorem}

\begin{proof}
    To show that $\SUOT(\al,\be)\leq\RSOT(\al,\be)$, we use a sub-optimality argument.
    Let $\pi$ be the solution $\RSOT(\al, \be)$ and denote by $(\pi_1,\pi_2)$ the marginals of $\pi$. For any $\theta\in\sphereD$, denote by $\pi_\theta$ the solution of $\OT(\thsss \pi_1, \thsss \pi_2)$. By definition of $\RSOT$, the marginals of $\pi_\theta$ are given by $(\thsss\pi_1,\thsss\pi_2)$. Since the sequence $(\pi_\theta)_\theta$ is suboptimal for the problem $\SUOT(\al,\be)$, one has
    \begin{align}
        \SUOT(\al,\be)&\leq \int_{\sphereD} \big\{ \int\Co\rmd\pi_\theta + \rmD_{\varphi_1}(\thsss\pi_1|\thsss \alpha) + \rmD_{\varphi_2}(\thsss\pi_2|\thsss \beta) \big\} \rmd\unifS(\theta) \\
        &\leq \int_{\sphereD} \int\Co\rmd\pi_\theta \rmd\unifS(\theta) + \rmD_{\varphi_1}(\pi_1| \alpha) + \rmD_{\varphi_2}(\pi_2|\beta) \\
        &=\RSOT(\al,\be),
    \end{align}
    where the second inequality results from \Cref{lem:phidiv-proj}, and the last equality follows from the definition of $\RSOT(\al, \be)$. %
\end{proof}

\begin{theorem} \label{thm:usot_leq_uot}
    Let $(\al, \be) \in \Mmp(\Rd) \times \Mmp(\Rd)$. Additionally, let $p \in [1, +\infty)$ and assume that for all $x,y\in\R^d, \ \theta\in \sphereD,\ \Co\big(\theta^\star(x),\theta^\star(y)\big)\le \Cd(x,y)$. Then, $\RSOT(\al,\be)\leq \UOT(\al,\be)$. 
\end{theorem}

\begin{proof}
    By \cite[Proposition 5.1.3]{bonnotte2013unidimensional}, $\SOT(\mu,\nu)\leq ~\OT(\mu,\nu)$ as $\Co\big(\theta^\star(x),\theta^\star(y)\big)\le \Cd(x,y)$ for all $x,y\in\R^d$, $\theta\in\sphereD$. Let $\pi$ be the solution of $\UOT(\al,\be)$ with marginals $(\pi_1,\pi_2)$. These marginals are sub-optimal for $\RSOT(\al,\be)$, we have
    \begin{align}
        \RSOT(\al,\be) &\leq \SOT(\pi_1,\pi_2) + \rmD_{\varphi_1}(\pi_1|\alpha) + \rmD_{\varphi_2}(\pi_2|\beta) \,, \\
        &\leq \OT(\pi_1,\pi_2) + \rmD_{\varphi_1}(\pi_1|\alpha) + \rmD_{\varphi_2}(\pi_2|\beta) \,,\\
        &= \UOT(\al,\be) \,,
    \end{align}
    where the last equality is obtained because $\pi$ is optimal in $\UOT(\al,\be)$.
    
\end{proof}

\paragraph{Proof of \Cref{thm:equiv-loss}.}

\begin{theorem} \label{thm:uot_leq_suot}
    Let $\mathsf{X}$ be a compact subset of $\Rd$ with radius $R$ and consider $\al, \be \in \Mmp(\mathsf{X})$. Additionally, let $p \in [1, +\infty)$ and assume $\Co(x,y) = | x - y |^p$ for $(x,y) \in \rset$ and $\Cd(x,y) = \| x-y \|^p$ for $(x,y) \in \Rd$. 
    Let $\rho > 0$ and assume $D_{\varphi_1} = D_{\varphi_2} = \rho \KL$. Then, $\UOT(\al,\be)\leq c\,\SUOT(\al,\be)^{1/(d+1)}$, where $c = c(m(\al), m(\be), \rho, R)$ is a non-decreasing function of $m(\al)$ and $m(\be)$.
\end{theorem}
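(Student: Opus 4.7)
The plan is to bound $\UOT(\al,\be)$ above by $\SUOT(\al,\be)^{1/(d+1)}$ by reducing to the classical Bonnotte inequality $\OT \le C\,\SOT^{1/(d+1)}$ for balanced transport, applied to the optimal primal marginals of $\UOT(\al,\be)$, and then transferring the resulting sliced bound back to $\SUOT$ via the data-processing inequality for $\KL$ under the projections $\theta^\star$.

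I would first invoke \Cref{prop:exist-minimizer} to pick primal optimizers $(\pi_1^\star,\pi_2^\star,\pi^\star)$ of $\UOT(\al,\be)$, with common mass $m^\star = m(\pi_1^\star) = m(\pi_2^\star)$. The primal identity
\[
\UOT(\al,\be) = \OT(\pi_1^\star,\pi_2^\star) + \rho\KL(\pi_1^\star|\al) + \rho\KL(\pi_2^\star|\be),
\]
together with the trivial a priori bound $\UOT(\al,\be)\le \rho(m(\al)+m(\be))$ (dual choice $f\equiv g\equiv 0$, using $\phi^\circ(0)=0$ for KL) and the coercivity estimate \cite[Eq.~(3.10)]{liero2018optimal}, quantitatively controls $m^\star$ and each KL-penalty by a function of $m(\al),m(\be),\rho,R$ that is non-decreasing in the masses. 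This is precisely what will account for the dependence of $c$ in the statement.

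Next, I would apply Bonnotte's inequality \cite[Proposition~5.1.5]{bonnotte2013unidimensional} to the probability normalizations $\pi_i^\star/m^\star$, and invert the normalization through the homogeneities $\OT(c\mu,c\nu)=c\,\OT(\mu,\nu)$ and $\SOT(c\mu,c\nu)=c\,\SOT(\mu,\nu)$, obtaining
\[
\OT(\pi_1^\star,\pi_2^\star) \;\le\; C(d,R,p)\,(m^\star)^{d/(d+1)}\,\SOT(\pi_1^\star,\pi_2^\star)^{1/(d+1)}.
\]
It then remains to relate $\SOT(\pi_1^\star,\pi_2^\star)$ to $\SUOT(\al,\be)$: for each $\theta\in\sphereD$ the pair $(\thsss\pi_1^\star,\thsss\pi_2^\star)$ is admissible for the primal of $\UOT(\thsss\al,\thsss\be)$, and the data-processing inequality for $\varphi$-divergences under linear projections (\Cref{lem:phidiv-proj}) gives $\int\KL(\thsss\pi_i^\star|\thsss\al)\,\rmd\unifS(\theta) \le \KL(\pi_i^\star|\al)$, hence
\[
\SUOT(\al,\be) \;\le\; \SOT(\pi_1^\star,\pi_2^\star) + \rho\bigl[\KL(\pi_1^\star|\al) + \KL(\pi_2^\star|\be)\bigr].
\]

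The main obstacle is that this last display only lower-bounds $\SOT(\pi_1^\star,\pi_2^\star)$ in terms of $\SUOT(\al,\be)$ up to quantitatively controlled KL-terms, whereas the Bonnotte estimate requires an upper bound on $\SOT(\pi_1^\star,\pi_2^\star)$. I would resolve this by a two-regime analysis on the size of $\SUOT(\al,\be)$: when $\SUOT(\al,\be)$ exceeds a threshold $\tau=\tau(m(\al),m(\be),\rho,R)$, the trivial a priori bound $\UOT(\al,\be)\le \rho(m(\al)+m(\be))$ already yields the theorem; in the small-$\SUOT$ regime, I combine the crude diameter estimate $\SOT(\pi_1^\star,\pi_2^\star)\le (2R)^p\,m^\star$ (valid since both measures are supported in a ball of radius $R$ and have common mass $m^\star$) with the Bonnotte bound of the previous step, substitute back into the primal identity for $\UOT(\al,\be)$, and trade off the exponents to produce $\UOT(\al,\be)\le c(m(\al),m(\be),\rho,R)\,\SUOT(\al,\be)^{1/(d+1)}$. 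The hardest part is this splitting, which relies essentially on the GHK structure (boundedness of $\phi^\circ$, compactness of the support, and the uniform control of $m^\star$ via the coercivity estimate) in order to convert the ``wrong direction'' bound between $\SOT(\pi_1^\star,\pi_2^\star)$ and $\SUOT(\al,\be)$ into a usable inequality.
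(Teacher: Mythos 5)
Your route is genuinely different from the paper's — the paper never touches the primal marginals of $\UOT(\al,\be)$; it adapts the \emph{proof} of Bonnotte's lemma on the dual side, mollifying $(\al,\be)$ at scale $\lambda$, showing via a Fourier/polar-coordinates computation that the dual objective at $(\al_\lambda,\be_\lambda)$ is bounded by $c(2R+\lambda)^d\lambda^{-d}\,\SUOT(\al,\be)$, controlling $\UOT(\al,\be)-\UOT(\al_\lambda,\be_\lambda)$ by a term linear in $\lambda$ through uniform bounds on the KL dual potentials (\Cref{lem:uot-dual-pot-bound}, where the GHK form $\phi^\circ(x)=\rho(1-e^{-x/\rho})$ is used), and finally optimizing $\lambda\sim\SUOT(\al,\be)^{1/(d+1)}$. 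Unfortunately, your primal approach has a genuine gap precisely at the step you flag, and the two-regime split does not close it.

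Concretely: for $\SUOT(\al,\be)<\tau$ you must show that $\OT(\pi_1^\star,\pi_2^\star)+\rho\KL(\pi_1^\star|\al)+\rho\KL(\pi_2^\star|\be)$ is $O\bigl(\SUOT(\al,\be)^{1/(d+1)}\bigr)$, and none of your ingredients controls any of these three terms by a quantity that vanishes with $\SUOT$. The data-processing inequality only yields $\SUOT(\al,\be)\le\SOT(\pi_1^\star,\pi_2^\star)+\rho[\KL\text{-terms}]$, i.e.\ an upper bound on $\SUOT$, which gives no information on $\SOT(\pi_1^\star,\pi_2^\star)$ or on the KL penalties when $\SUOT$ is small; the diameter bound $\SOT(\pi_1^\star,\pi_2^\star)\le(2R)^p m^\star$ and the coercivity control of $m^\star$ are constants, so substituting them into Bonnotte's estimate only gives $\UOT(\al,\be)\le\mathrm{const}$, which is useless as $\SUOT\to0$. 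Worse, the KL penalties can carry essentially all of $\UOT(\al,\be)$: in configurations where, in $\Rd$, it is cheaper to destroy and recreate mass than to transport it while all one-dimensional projections nearly overlap, one has $\rho[\KL(\pi_1^\star|\al)+\KL(\pi_2^\star|\be)]\approx\UOT(\al,\be)$ with $\SUOT(\al,\be)$ small, so bounding these penalties by a power of $\SUOT$ is essentially equivalent to the theorem itself and cannot follow from the balanced inequality applied to $(\pi_1^\star,\pi_2^\star)$. This circularity is why the paper abandons the primal decomposition and runs the mollification argument directly on the dual of $\UOT$, where slicing appears naturally through the Fourier transform and the only error to control is the mollification error, made linear in $\lambda$ by the explicit potential bounds.
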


\begin{proof}

    We adapt the proof of \cite[Lemma 5.1.4]{bonnotte2013unidimensional}, which establishes a bound between $\OT$ and $\SOT$. The first step consists in bounding from above the distance between two regularized measures. %
    
    Let $\psi : \Rd \to \rset_+$ be a smooth and radial function such that $\supp(\psi) \subseteq B_d({\bf0}, 1)$ and $\int_{\Rd} \psi(x) \rmd \Leb(x) = 1$. 
    Let %
    $\psi_{\lambda}(x) = \lambda^{-d} \psi(x/\lambda)$. 
    For any function $f$ defined on $\rset^s$ ($s \geq 1$), denote by $\calF[f]$ the Fourier transform of $f$ defined for $x \in \rset^s$ as $\calF[f](x) = \int_{\rset^s} f(w) e^{-\rmi \ps{w}{x}} \rmd w$. 
    Let $\al_\lambda = \al \ast \varphi_\lambda $ and $\be_\lambda = \be \ast \varphi_\lambda$ where $\ast$ is the convolution operator.
    Let $(f,g)$ such that $f \oplus g \leq \Cd$. By using the isometry properties of the Fourier transform and the definition of $\psi_\lambda$, then representing the variables with polar coordinates, we have
    \begin{align}
        \int_{\Rd} \phi^\circ(\f(x))\rmd\al_\lambda(x) &= \int_{\Rd} \calF[\phi^\circ \circ \f] (w) \calF[\al](w) \calF[\psi](\lambda w) \rmd w \\
        &= \int_{\sphereD} \int_0^{+\infty} \calF[\phi^\circ \circ \f](r\theta) \calF[\al](r\theta) \calF[\psi](\lambda r) r^{d-1} \rmd r \rmd \theta \,.
\end{align}
    Since $\phi^\circ \circ \f$ is a real-valued function, $\calF[\phi^\circ \circ \f]$ is an even function, then
\begin{align}
  &\int_{\Rd} \phi^\circ(\f(x))\rmd\al_\lambda(x) \\
  &= \frac12 \int_{\sphereD} \int_\rset \calF[\phi^\circ \circ \f] (r\theta) \calF[\al](r \theta)\calF[\psi](\lambda r) |r|^{d-1} \rmd r \rmd \theta \\
  &= \frac12 \int_{\sphereD} \int_\rset\calF[\phi^\circ \circ \f] (r\theta) \calF[\thsss\al] (r) \calF[\psi](\lambda r) |r|^{d-1} \rmd r \rmd \theta \label{eq:lb_proof_0} \\
  &= \frac12 \int_{\sphereD} \int_\rset  \calF[\phi^\circ \circ \f] (r\theta) \Bigg(\int_{-R}^Re^{- \rmi r u} \rmd\thsss\al(u) \Bigg)\calF[\psi](\lambda r) \abs{r}^{d-1}  \rmd r \rmd \theta \label{eq:lb_proof_1} \\
  &= \frac1{2} \int_{\sphereD} \int_\rset \Bigg(\int_{\rset^d} \int_{-R}^R \phi^\circ(\f(x)) e^{- \rmi r(u + \langle \theta, x \rangle) }  \rmd\thsss\al(u) \Bigg)\calF[\psi](\lambda r) \abs{r}^{d-1} \rmd x \rmd r \rmd \theta \label{eq:lb_proof_2} \eqsp.
\end{align}
\Eqref{eq:lb_proof_0} follows from the property of push-forward measures, \eqref{eq:lb_proof_1} results from the definition of the Fourier transform
and $u \in [-R,R]$, and \eqref{eq:lb_proof_2} results from the definition of the Fourier transform and Fubini's theorem. 
By making a change of variables ($x$ becomes $x - u \theta$), we obtain
\begin{align}
  &\int_{\Rd} \phi^\circ(\f(x))\rmd\al_\lambda(x) \\
  &= \frac12 \int_{\sphereD} \int_\rset \int_{\rset^d} \int_{-R}^R  \phi^\circ(\f(x - u \theta)) e^{- \rmi r \langle \theta, x \rangle } \rmd\thsss\al(u) \calF[\psi](\lambda r) \abs{r}^{d-1}  \rmd x \rmd r \rmd \theta \\
  &= \frac12 \int_{\sphereD} \int_\rset \int_{B_d({\bf0}, 2R+\lambda)} \int_{-R}^R  \phi^\circ(\f(x - u \theta)) e^{- \rmi r \langle \theta, x \rangle } \rmd\thsss\al(u) \calF[\psi](\lambda r) \abs{r}^{d-1}  \rmd x \rmd r \rmd \theta \eqsp, \label{eq:int_f}
\end{align}
where \eqref{eq:int_f} follows from $\supp(\al) \subseteq B_d({\bf0}, R)$. Indeed, this implies that $\supp(\al_\lambda) \subseteq B_d({\bf0}, R + \lambda)$, thus the domain of $x \mapsto \phi^\circ \circ f(x - u \theta)$ is contained in $B_d({\bf0},2R+\lambda)$.

Similarly, one can show that
\begin{align}
  &\int_{\Rd} \phi^\circ(\g(y))\rmd\be_\lambda(y) \\
  &= \frac12 \int_{\sphereD} \int_\rset \int_{B_d({\bf0},2R+\lambda)} \int_{-R}^R  \phi^\circ(\g(y - u \theta)) e^{- \rmi r \langle \theta, y \rangle } \rmd\thsss\be(u) \calF[\psi](\lambda r) \abs{r}^{d-1}  \rmd y \rmd r \rmd \theta \eqsp. \label{eq:int_g}
\end{align}

By \eqref{eq:int_f} and \eqref{eq:int_g}, %
we obtain %
\begin{align}
&\int_{\Rd} \phi^\circ(\f(x)) \rmd \al_\lambda(x) + \int_{\Rd} \phi^\circ(\g(y)) \rmd \be_\lambda(y) \\
&= \frac12  \int_{\sphereD} \int_{\rset} \int_{B_d({\bf0},2R+\lambda)} \Big\{ \int_{-R}^R\phi^\circ(\f(x - u \theta)) \rmd \thsss\al (u) \nonumber\\
&\qquad\qquad\qquad\qquad\qquad +  \int_{-R}^R \phi^\circ(\g(x - u \theta)) \rmd \thsss\be (u) \Big\} e^{-\rmi r \langle \theta, x \rangle }  \calF[\psi](\lambda r) \abs{r}^{d-1} \rmd x \rmd r  \rmd \theta \,, \label{eq:lb_proof_3}
\end{align}
and,
\begin{align}
&\left| \int_{B_d({\bf0},2R+\lambda)} \left\{ \int_{-R}^R \phi^\circ(\f(x-u\theta)) \rmd \thsss \al(u) + \int_{-R}^R \phi^\circ(\g(x-u\theta)) \rmd \thsss \be(y) \right\} e^{-ir\ps{\theta}{x}} \rmd x \right| \label{eq:lb_proof_3} \\
&\leq \int_{B_d({\bf0},2R+\lambda)} \UOT(\thsss \al, \thsss \be) | e^{-ir\ps{\theta}{x}} | \rmd x \label{eq:lb_proof_4} \\
&\leq (2R+\lambda)^d \UOT(\thsss \al, \thsss \be) \,,
\end{align}
where~\eqref{eq:lb_proof_4} is obtained by taking the supremum of \eqref{eq:lb_proof_3} over the set of potentials $(\tilde\f, \tilde\g)$ such that for $u \in [-R, R]$, $\exists (x, \theta) \in B_d({\bf0},2R+\lambda) \times \sphereD$, $\tilde\f(u) = \f(x-u\theta),\, \tilde\g(u) = \g(x-u\theta)$, which is included in the set of potentials $(f',g')$ s.t. $f' : \rset \to \rset$, $g' : \rset \to \rset$ and $f' \oplus g' \leq \Co$. Therefore,
\begin{align}
&\int_{\Rd} \phi^\circ(\f(x)) \rmd \al_\lambda(x) + \int_{\Rd} \phi^\circ(\g(y)) \rmd \be_\lambda(y) \nonumber \\ &\leq \frac12 (2R+\lambda)^d \mathcal{A}(\sphereD) \int_{\sphereD} \UOT(\thsss \al, \thsss \be) \rmd \unifS(\theta) \int_\rset \lambda^{-d} \big| \calF[\psi](r) \abs{r}^{d-1} \big| \rmd r \\
&\leq c (2R+\lambda)^d \lambda^{-d} \SUOT(\al, \be) \,,
\end{align}
where $c = \frac12\mathcal{A}(\sphereD) \int_\rset \big| \calF[\psi](r) \abs{r}^{d-1} \big|  \rmd r$. 
We deduce from the dual formulation of UOT (\ref{eq:dual-uot}) and \eqref{eq:lb_proof_4} that,
\begin{equation} \label{eq:bound_uot_lambda}
    \UOT(\al_\lambda, \be_\lambda) \leq c (2R+\lambda)^d \lambda^{-d} \SUOT(\al, \be) \,.
\end{equation}

The last step of the proof consists in relating $\UOT(\al_\lambda, \be_\lambda)$ with $\UOT(\al, \be)$. For any $(f,g)$ such that $f \oplus g \leq \Cd$, we have 
\begin{align}
  &\int_{\Rd} \phi^\circ(f(x)) \rmd\al(x) +  \int_{\Rd} \phi^\circ(g(y)) \rmd\be(y) - \UOT(\al_\lambda, \be_\lambda) \\
  &\leq \int_{\Rd} \phi^\circ(f(x)) \rmd\al(x) +  \int_{\Rd} \phi^\circ(g(x)) \rmd\be(x) - \int_{\Rd} \phi^\circ(f(x)) \rmd\al_\lambda(x) -  \int_{\Rd} \phi^\circ(g(y)) \rmd\be_\lambda(y) \\
  &\leq \int_{\Rd} \{ \phi^\circ(f(x)) - \psi_\lambda \ast \phi^\circ(f(x)) \} \rmd\al(x) + \int_{\Rd} \{  \phi^\circ(g(y)) - \psi_\lambda \ast \phi^\circ(g(y)) \} \rmd\be(y) \,.
\end{align}

For $x \in \Rd$,
\begin{align}
  \phi^\circ(f(x)) - \psi_\lambda \ast \phi^\circ(f(x))  &= \lambda^{-d} \int_{\Rd} \big( \phi^\circ(f(x)) - \phi^\circ(f(y)) \big) \psi\left(\frac{x-y}{\lambda}\right)  \rmd y \\
  &\leq \lambda^{-d} \int_{\Rd} \bigl| \phi^\circ(f(x)) - \phi^\circ(f(y)) \bigr| \psi\left(\frac{x-y}{\lambda}\right) \rmd y \,, \label{eq:lb_proof_6}
\end{align}
Since $\D_\phi=\rho\KL$, then for $z \in \rset$, $\phi^\circ(z) = \rho(1 - e^{-z/\rho})$, so for $(x,y) \in \Rd \times \Rd$,
\begin{align}
    \phi^\circ(f(x)) - \phi^\circ(f(y)) &= \rho(e^{-f(y)/\rho} - e^{-f(x)/\rho}) \label{eq:phi_f_difference}
\end{align}

By \Cref{lem:uot-dual-pot-bound}, the potentials $(f,g)$ are bounded by constants depending on $m(\al), m(\be)$. Therefore, we can bound \eqref{eq:phi_f_difference} as follows.
\begin{equation}
    | \phi^\circ(f(x)) - \phi^\circ(f(y)) | \leq e^{-\lambda^\star/\rho} \| x - y \| \,.
\end{equation}
We thus derive the following upper-bound on \eqref{eq:lb_proof_6}.
\begin{align}
  \phi^\circ(f(x)) - \psi_\lambda \ast \phi^\circ(f(x)) &\leq \lambda^{-d} e^{-\lambda^\star/\rho} \int_{\Rd}  \| x - y \| \psi\left(\frac{x-y}{\lambda}\right) \rmd y \\
  &\leq \lambda^{-d+1} e^{-\lambda^\star/\rho} \int_{\Rd}  \frac{\| x - y \|}{\lambda} \psi\left(\frac{x-y}{\lambda}\right) \rmd y
\end{align}
By doing the change of variables $z = (y-x)/\lambda$ and using the fact that $\psi$ is a radial function, we obtain
\begin{align}
  \phi^\circ(f(x)) - \psi_\lambda \ast \phi^\circ(f(x)) &\leq \lambda e^{-\lambda^\star/\rho} \int_{\Rd} \| z \| \psi(z) \rmd z \,.
\end{align}

Similarly, using the bounds on $g$ in \Cref{lem:uot-dual-pot-bound}, one can show that
\begin{align}
    | \phi^\circ(g(x)) - \phi^\circ(g(y)) | \leq e^{(\lambda^\star + R)/\rho} \| x - y \| \,,
\end{align}
therefore,
\begin{align}
  \phi^\circ(g(x)) - \psi_\lambda \ast \phi^\circ(g(x)) &\leq \lambda e^{(\lambda^\star+R) / \rho} \int_{\Rd} \| z \| \psi(z) \rmd z \,. 
\end{align}

We conclude that,
\begin{align}
  \int_{\Rd} \phi^\circ(f(x)) \rmd\al(x) +  \int_{\Rd} \phi^\circ(g(y)) \rmd\be(y) - \UOT(\al_\lambda, \be_\lambda) \leq \left( e^{-\lambda^\star/\rho} + e^{(\lambda^\star+R)/\rho} \right) \lambda \mathrm{M}_1(\psi) \,,
\end{align}
where $\mathrm{M}_1(\psi) \triangleq \int_{\Rd} \|z\| \psi(z) \rmd z$. Taking the supremum on both sides over $(f,g)$ such that $f \oplus g \leq \Cd$ yields,
\begin{align}
  \UOT(\al, \be) - \UOT(\al_\lambda, \be_\lambda) \leq \left( e^{-\lambda^\star/\rho} + e^{(\lambda^\star+R)/\rho} \right) \lambda \mathrm{M}_1(\psi) \,.
\end{align}

Finally, by combining \eqref{eq:bound_uot_lambda} with the above inequality, we obtain
\begin{align}
    \UOT(\al, \be) &\leq \left( e^{-\lambda^\star/\rho} + e^{(\lambda^\star+R)/\rho} \right) \lambda \mathrm{M}_1(\psi) + c(2R+\lambda)^d\lambda^{-d} \SUOT(\al,\be) \\
    &\leq c' \lambda \big( 1 + (2R+\lambda)^d \lambda^{-(d+1)} \SUOT(\al, \be)\big) \,, \label{eq:lb_proof_7}
\end{align}
where $c'$ is a constant satisfying $c' \geq c$ and $c' \geq \left( e^{-\lambda^\star/\rho} + e^{(\lambda^\star+R)/\rho} \right) \mathrm{M}_1(\psi)$. By choosing $\lambda = R^{d/(d+1)} \SUOT(\al, \be)^{1/(d+1)}$, \eqref{eq:lb_proof_7} becomes
\begin{align}
    \UOT(\al, \be) &\leq c' R^{d/(d+1)} \SUOT(\al, \be)^{1/(d+1)} \big( 1 + (2R+\lambda)^d R^{-d} \big) \,.
\end{align}
We conclude using that $\SUOT(\al, \beta)$ is bounded from above. Indeed, $\SUOT(\al,\be)\leq\rho(m(\al) + m(\be))$ since on the one hand, $\pi$ is suboptimal in~\eqref{eq:dual-uot} thus $\UOT(\al,\be)\leq\rho(m(\al) + m(\be))$, and on the other hand, $m(\al) = m(\thsss\al)$ for any $\theta \in \sphereD$. This yields $\lambda \leq R^{d/(d+1)} \rho^{1/(d+1)} (m(\al) + m(\be))^{1/(d+1)}$, hence 
\begin{equation}
    \UOT(\al, \be) \leq c(m(\al), m(\be), \rho, R) \SUOT(\al, \be)^{1/(d+1)} \,,
\end{equation}
where $c(m(\al), m(\be), \rho, R)$ is a non-decreasing function of $m(\al)$ and $m(\be)$.

\end{proof}

\paragraph{Additional Lemmas.}

\begin{lemma}\label{lem:phidiv-proj}
For any $\theta\in\sphereD$ and $\al, \be \in \Mmp(\Rd)$, $\D_\phi(\thsss\al|\thsss\be)\leq\D_\phi(\al|\be)$.
\end{lemma}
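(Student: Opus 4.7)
The plan is to establish this bound as a standard data-processing inequality for $\varphi$-divergences, using disintegration of $\beta$ along the fibers of $\theta^\star$ combined with Jensen's inequality for the conditional expectation. The projection $\theta^\star:\Rd\to\rset$ is a measurable map, and the lemma is the familiar statement that $\varphi$-divergences can only decrease under push-forwards.

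First, I would apply the Lebesgue decomposition of $\al$ relative to $\be$, writing $\al = r\be + \al^{\bot}$ with $r=\rmd\al/\rmd\be$ and $\al^\bot \bot \be$. Pushing forward yields $\thsss\al = \thsss(r\be) + \thsss\al^\bot$. Next, invoking the disintegration theorem, since $\thsss\be$ is a $\sigma$-finite Borel measure on $\rset$, there exists a family of probability measures $\{\be(\cdot\mid t)\}_{t\in\rset}$ concentrated on the fibers $(\theta^\star)^{-1}(\{t\})$ such that $\be = \int \be(\cdot\mid t)\,\rmd\thsss\be(t)$. Defining the fiber-average $\bar r(t)\triangleq \int r(x)\,\rmd\be(x\mid t)$, a direct computation gives $\thsss(r\be) = \bar r\cdot\thsss\be$, so this part is absolutely continuous with respect to $\thsss\be$.

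Then I would decompose the residual push-forward as $\thsss\al^\bot = h\cdot\thsss\be + \mu^\bot$ with $\mu^\bot \bot \thsss\be$, which gives the Lebesgue decomposition $\thsss\al = (\bar r + h)\thsss\be + \mu^\bot$. Writing out $\D_\varphi(\thsss\al\mid\thsss\be)$ from \Cref{def:phi-div} and using the convexity inequality $\varphi(u+v)\leq \varphi(u) + \varphi'_\infty v$ for $v\geq 0$ (which follows from monotonicity of the slopes of a convex function and the definition of $\varphi'_\infty$), I obtain
\begin{equation*}
\D_\varphi(\thsss\al\mid\thsss\be) \leq \int\varphi(\bar r(t))\,\rmd\thsss\be(t) + \varphi'_\infty \Bigl(\int h\,\rmd\thsss\be + \int\rmd\mu^\bot\Bigr) = \int\varphi(\bar r(t))\,\rmd\thsss\be(t) + \varphi'_\infty\,m(\al^\bot),
\end{equation*}
where the last equality uses that the bracket equals $m(\thsss\al^\bot) = m(\al^\bot)$ by mass conservation of push-forwards.

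Finally, applying Jensen's inequality to each probability measure $\be(\cdot\mid t)$ yields $\varphi(\bar r(t)) \leq \int \varphi(r(x))\,\rmd\be(x\mid t)$, and integrating against $\thsss\be$ together with the disintegration identity gives $\int \varphi(\bar r)\,\rmd\thsss\be \leq \int \varphi(r)\,\rmd\be$. Combining with the previous display produces $\D_\varphi(\thsss\al\mid\thsss\be)\leq \D_\varphi(\al\mid\be)$, as required. The only subtle point is the singular-part bookkeeping, since $\thsss\al^\bot$ need not stay singular with respect to $\thsss\be$; this is exactly what the auxiliary recession inequality $\varphi(u+v)\leq\varphi(u)+\varphi'_\infty v$ is designed to control, and it is the main non-cosmetic step of the argument.
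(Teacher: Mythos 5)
Your argument is correct, but it follows a genuinely different route from the paper's proof. The paper proves the lemma in three lines via the dual (variational) characterization of $\phi$-divergences from \cite[Theorem 2.7]{liero2018optimal}: writing $\D_\phi(\thsss\al|\thsss\be)$ as a supremum over lower semi-continuous test functions on $\rset$ and unfolding the definition of the push-forward, the admissible test functions become exactly those of the composite form $\f\circ\theta^\star$, which form a subset of the admissible class for $\D_\phi(\al|\be)$; the inequality is then just ``supremum over a smaller set''. You instead work directly with the primal definition: Lebesgue decomposition $\al = r\be + \al^\bot$, disintegration of $\be$ along the fibers of $\theta^\star$, the identity $\thsss(r\be)=\bar r\,\thsss\be$ for the fiber average $\bar r$, Jensen's inequality for the conditional (probability) measures, and the recession inequality $\phi(u+v)\leq\phi(u)+\phi'_\infty v$ to absorb the fact that $\thsss\al^\bot$ may acquire an absolutely continuous part. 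That bookkeeping is exactly the delicate point, and you handle it correctly (in the only problematic regime, $\phi'_\infty=+\infty$ with $m(\al^\bot)>0$, the right-hand side is $+\infty$ and the bound is vacuous). What each approach buys: the paper's proof is shorter and sidesteps all measure-theoretic decompositions, at the price of invoking the duality theorem as a black box; yours is self-contained at the level of the primal definition and makes the data-processing mechanism explicit, at the price of invoking the disintegration theorem (which, minor point, requires the finiteness of $\be$ as a Radon measure and a standard Borel setting rather than the $\sigma$-finiteness of $\thsss\be$ you cite --- harmless here since all measures in $\Mmp(\Rd)$ considered have finite mass). Both arguments apply verbatim to any measurable map, not just the projections $\theta^\star$.
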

\begin{proof}
    For $\al, \be \in \Mmp(\rset^s)$ with $s \geq 1$, the dual characterization of $\phi$-divergences reads \cite[Theorem 2.7]{liero2018optimal} 
    \begin{align*}
        \D_{\varphi}(\al|\be)=\sup_{\f\in\calE(\rset^s)} \int_{\rset^s} \phi^\circ(\f(x))\rmd\be(x) - \int_{\rset^s} \f(x)\rmd\al(x),
    \end{align*}
    where $\calE(\rset^s)$ denotes the space of lower semi-continuous functions from $\rset^s$ to $\rset\cup\{+\infty\}$. Therefore, for any $\theta \in \sphereD$ and $\al, \be \in \Mmp(\Rd)$,
    \begin{align}
        \D_\phi(\thsss\al|\thsss\be)&= \sup_{\f\in\calE(\rset)} \int_{\rset} \phi^\circ(\f(t))\rmd(\thsss\be)(t) - \int_{\rset} \f(t)\rmd(\thsss\al)(t) \\
        &= \sup_{\g : \Rd \to \rset\,s.t.\,\exists f \in \calE(\rset),\, \g = \f \circ \theta^\star} \int_{\Rd} \phi^\circ(g(x))\rmd\be(x) - \int_{\Rd} g(x) \rmd\al(x) \label{eq:dphi_bound}
    \end{align}
    where \eqref{eq:dphi_bound} results from the definition of push-forward measures. We conclude the proof by observing that the supremum in \eqref{eq:dphi_bound} is taken over a subset of $\calE(\Rd)$.
    
\end{proof}

\begin{lemma}{\cite[Proposition 1.11]{santambrogio2015optimal}}
\label{lem:anchor-bound}
    Let $p \in [1, +\infty)$ and assume $\Cd(x,y)=\|x-y\|^p$.
    Let $\al,\be $ with compact support, such that $\Cd(x,y)\leq R^p$ for $(x,y)\in\supp(\al)\times\supp(\be)$.
    Then without loss of generality the dual potentials $(\f,\g)$ of $\UOT(\al,\be)$ satisfy $\f(x)\in[0, R]$ and $\g(y)\in[-R,R]$.
\end{lemma}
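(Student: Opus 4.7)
The plan is to combine the classical $\Cd$-conjugation for dual potentials with the translation invariance of the admissibility constraint, exploiting the monotonicity of $\phi_i^\circ$ to ensure each modification only improves the dual value. First I would note that $\phi_i^\circ(x) = -\phi_i^*(-x)$ is non-decreasing: since $\phi_i^*$ is the Legendre transform of the convex entropy $\phi_i$, it is convex and non-increasing on the relevant part of its domain, so its reflection-and-negation is non-decreasing. Consequently, replacing $(\f,\g)$ by any admissible pair that dominates $(\f,\g)$ pointwise on $\supp(\al) \times \supp(\be)$ only increases the UOT dual objective in \eqref{eq:dual-uot}.

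Starting from an arbitrary admissible pair $(\f,\g)$ with $\f \oplus \g \leq \Cd$, I would apply the standard $\Cd$-conjugation:
\begin{equation*}
\bar \g(y) \triangleq \inf_{x \in \supp(\al)} \{\Cd(x,y) - \f(x)\}, \qquad \bar \f(x) \triangleq \inf_{y \in \supp(\be)} \{\Cd(x,y) - \bar \g(y)\}.
\end{equation*}
The pair $(\bar \f, \bar \g)$ remains admissible, dominates $(\f,\g)$ on the relevant supports, and — using $0 \leq \Cd(x,y) \leq R^p$ on the joint support — has oscillation at most $R^p$ on each support (directly from the defining infima). I would then use the translation freedom: the constraint $\f \oplus \g \leq \Cd$ is invariant under $(\f,\g) \mapsto (\f+\lambda, \g-\lambda)$ for any $\lambda \in \rset$, so one may shift $\bar \f$ so that $\inf_{\supp(\al)} \bar \f = 0$, placing $\bar \f \in [0, R^p]$. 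The admissibility bound combined with $\bar \f \geq 0$ then forces $\bar \g \leq R^p$, while the oscillation estimate yields $\bar \g \in [-R^p, R^p]$. In the $p=1$ regime used in the downstream application this recovers the stated bounds $[0,R]$ and $[-R,R]$.

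The main obstacle is that, unlike balanced OT, the UOT dual objective is \emph{not} translation-invariant: shifting by $(\lambda, -\lambda)$ modifies $\int \phi_1^\circ(\f)\rmd\al + \int \phi_2^\circ(\g)\rmd\be$. I would reconcile this with the ``without loss of generality'' phrasing by the following reading. Starting from a maximising sequence for the UOT dual, $\Cd$-conjugation improves each pair and bounds its oscillation; an additional one-dimensional optimisation of the smooth concave map
\begin{equation*}
\lambda \mapsto \int \phi_1^\circ(\bar \f + \lambda)\rmd\al + \int \phi_2^\circ(\bar \g - \lambda)\rmd\be,
\end{equation*}
concave by convexity of $\phi_i^*$, selects a canonical translated representative lying in the announced range without decreasing the dual value. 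Hence the supremum in \eqref{eq:dual-uot} may be restricted to pairs $(\f,\g)$ satisfying $\f(x)\in[0,R]$ and $\g(y)\in[-R,R]$, as claimed.
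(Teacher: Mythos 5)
Your $\Cd$-conjugation and oscillation argument is exactly the classical mechanism behind the cited \cite[Proposition 1.11]{santambrogio2015optimal}, and the monotonicity of $\phi_i^\circ$ correctly justifies that conjugation does not decrease the dual value. The gap is in your final reconciliation step: it is not true that the value-preserving translation ``lies in the announced range.'' For $\D_{\phi_i}=\rho\KL$ the optimal translation of \Cref{lem:optim-transl} contains the term $\tfrac{\rho}{2}\log\tfrac{m(\al)}{m(\be)}$, which is unbounded in the mass ratio; consequently the maximizers of the original dual \eqref{eq:dual-uot} are genuinely shifted outside $[0,R]\times[-R,R]$ when $m(\al)\neq m(\be)$. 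A one-point example makes this concrete: take $\al=m_1\delta_{x_0}$, $\be=m_2\delta_{x_0}$, so the constraint is $\f(x_0)+\g(x_0)\leq 0$; the dual value is $\rho(\sqrt{m_1}-\sqrt{m_2})^2$, attained only at $\f(x_0)=\tfrac{\rho}{2}\log(m_1/m_2)$, and restricting to $\f\in[0,R]$, $\g\in[-R,R]$ with small $R$ strictly decreases the supremum. So the statement you are trying to prove, read literally for the dual \eqref{eq:dual-uot}, is false in general, and no choice of $\lambda$ after conjugation can rescue it.

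The intended reading, which the paper makes explicit in \Cref{lem:uot-dual-pot-bound} and \Cref{lem:lip-pot-indep-theta}, is that the bounds $\f\in[0,R]$, $\g\in[-R,R]$ hold without loss of generality for the \emph{translation-invariant} dual functional $\calH(\f,\g)=\sup_{\la\in\rset}\calD(\f+\la,\g-\la)$ of \eqref{eq:trans-inv-dual} (from \cite{sejourne2022faster}), which \emph{is} invariant under $(\f,\g)\mapsto(\f+\la,\g-\la)$, so Santambrogio's normalization applies verbatim there. The effect of the non-invariance of \eqref{eq:dual-uot} is then accounted for separately: \Cref{lem:optim-transl} gives the closed-form optimal translation $\la^\star$, and \Cref{lem:uot-dual-pot-bound} concludes that potentials for the original dual live in the shifted intervals $[\la^\star,\la^\star+R]$ and $[-\la^\star-R,-\la^\star+R]$ --- deliberately not $[0,R]$ and $[-R,R]$. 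Your proof would be repaired by stating and proving the lemma for $\calH$ rather than for $\calD$, and deferring the translation to that second step. (The mismatch between the hypothesis $\Cd\leq R^p$ and the conclusion in terms of $R$ is a typo shared with the paper's statement and is not the issue.)
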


\begin{lemma}{\cite[Proposition 2]{sejourne2022faster}}
\label{lem:optim-transl}
    Define the translation-invariant dual formulation
    \begin{align}\label{eq:trans-inv-dual}
        \UOT(\al,\be) = \sup_{\f\oplus\g\leq \Cd} \sup_{\lambda\in\rset} \int\phi_1^\circ(\f +\lambda)\rmd\al + \int\phi_2^\circ(\g - \lambda)\rmd\be.
    \end{align}
    Let $\rho > 0$ and assume $\D_{\phi_1} = \D_{\phi_2} = \rho\KL$. %
    Take optimal potentials $(\f,\g)$ in \eqref{eq:trans-inv-dual}. 
    Then optimal potentials in \eqref{eq:dual-uot} are given by $(\f +\lambda^\star(\f,\g), \g -\lambda^\star(\f,\g))$,
    where the optimal translation $\lambda^\star$ reads
    \begin{align*}
        \lambda^\star(\f,\g)\triangleq \frac{1}{2}\Bigg[ \S^\be_\rho(\g) - \S^\al_\rho(\f) \Bigg], \quad \S^\al_\rho(\f)\triangleq -\rho\log\int e^{-\f / \rho}\rmd\al,
    \end{align*}
    and we call $\S^\al_\rho(\f)$ the soft-minimum of $\f$.
    When $m(\al)=1$ and $m\leq \f(x) \leq M$, then $m\leq \S^\al_\rho(\f) \leq M$.
\end{lemma}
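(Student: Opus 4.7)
\textbf{Proof plan for \Cref{lem:optim-transl}.}

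The plan is to dispatch the three claims separately: first establish the translation-invariant dual reformulation \eqref{eq:trans-inv-dual}, then explicitly solve the inner maximization in $\lambda$ under the KL choice, and finally bound the soft-minimum.

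For the reformulation, I would observe that the constraint $\f \oplus \g \leq \Cd$ is invariant under the joint translation $(\f,\g) \mapsto (\f+\lambda, \g-\lambda)$ for any $\lambda \in \rset$. Hence, given any admissible pair $(\f,\g)$ for the dual \eqref{eq:dual-uot}, the translate $(\f+\lambda, \g-\lambda)$ remains admissible, so the supremum over $(\f,\g,\lambda)$ coincides with the one over $(\f,\g)$ alone. This immediately yields \eqref{eq:trans-inv-dual} and shows that if $(\f,\g)$ is optimal in the translation-invariant form with optimal translation $\lambda^\star$, then $(\f+\lambda^\star,\g-\lambda^\star)$ is optimal in \eqref{eq:dual-uot}.

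For the optimal translation, I plug in $\phi_i^\circ(x)=\rho(1-e^{-x/\rho})$, set $A \triangleq \int e^{-\f/\rho}\rmd\al$, $B \triangleq \int e^{-\g/\rho}\rmd\be$, and rewrite the inner objective as
\begin{equation*}
h(\lambda) \;=\; \rho\bigl(m(\al)+m(\be)\bigr) \;-\; \rho\, e^{-\lambda/\rho} A \;-\; \rho\, e^{\lambda/\rho} B.
\end{equation*}
Since $h$ is strictly concave in $\lambda$, the first-order condition $h'(\lambda)= e^{-\lambda/\rho}A - e^{\lambda/\rho}B = 0$ admits the unique maximizer $\lambda^\star = \tfrac{\rho}{2}\log(A/B)$. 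Rewriting $\log A = -\S^\al_\rho(\f)/\rho$ and $\log B = -\S^\be_\rho(\g)/\rho$ gives precisely $\lambda^\star(\f,\g) = \tfrac{1}{2}\bigl[\S^\be_\rho(\g)-\S^\al_\rho(\f)\bigr]$.

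For the soft-minimum bounds, I use that $\f(x)\in[m,M]$ and $m(\al)=1$ give $e^{-M/\rho} \leq \int e^{-\f/\rho}\rmd\al \leq e^{-m/\rho}$, and composing with the decreasing map $t\mapsto -\rho\log t$ yields $m \leq \S^\al_\rho(\f) \leq M$. None of these steps is genuinely hard; the only subtlety is to be careful that concavity of $h$ ensures the stationary point is a global maximizer, and that the existence of optimal potentials in \eqref{eq:trans-inv-dual} (needed for the identity to be non-vacuous) can be invoked from the existence theory, e.g. via \Cref{prop:exist-minimizer} applied to the primal and strong duality of \cite[Corollary 4.12]{liero2018optimal}.
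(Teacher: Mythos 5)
Your proof is correct. Note that the paper does not prove this lemma at all — it is imported verbatim from \cite[Proposition 2]{sejourne2022faster} — so there is no internal proof to compare against; your argument is the natural one and is consistent with how the paper itself uses the result (your closed form $\lambda^\star=\tfrac{\rho}{2}\log(A/B)$ is exactly the specialization of \eqref{eq:fw-transl} to $\rho_1=\rho_2=\rho$, and the identity $\log A=-\S^\al_\rho(\f)/\rho$ recovers the stated soft-minimum expression). All three steps hold: translation-invariance of the constraint $\f\oplus\g\leq\Cd$ gives the equivalence of \eqref{eq:trans-inv-dual} with \eqref{eq:dual-uot} (take $\lambda=0$ for one inequality, admissibility of the translated pair for the other), strict concavity of $h(\lambda)=\rho(m(\al)+m(\be))-\rho e^{-\lambda/\rho}A-\rho e^{\lambda/\rho}B$ makes the stationary point the unique global maximizer (provided $A,B\in(0,+\infty)$, which holds for nonzero measures and bounded potentials), and monotonicity of $t\mapsto-\rho\log t$ gives the bound on $\S^\al_\rho(\f)$ when $m(\al)=1$. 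The reliance on existence of an optimal pair in \eqref{eq:trans-inv-dual} is harmless since the lemma's statement already assumes such a pair is given.
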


\begin{lemma}
\label{lem:uot-dual-pot-bound}
    Assume $(\al,\be)$ have compact support such that, for $(x,y) \in \supp(\al) \times \supp(\be)$, $C(x,y)\leq R$. Then, without loss of generality, one can restrict the optimization of the dual formulation (\ref{eq:dual-uot}) of $\UOT(\al,\be)$ over the set of potentials satisfying for $(x,y)\in\supp(\al) \times \supp(\be)$,
    \begin{align*}
        \f(x)\in[\la^\star, \la^\star +R]\,,\quad g(y)\in[-\la^\star -R, -\la^\star +R] \,,
    \end{align*}
    where $\la^\star\in [-R+ \tfrac{\rho}{2} \log\tfrac{m(\al)}{m(\be)}, \tfrac{R}{2} + \tfrac{\rho}{2} \log\tfrac{m(\al)}{m(\be)}]$. In particular, one has
    \begin{align*}
        \f(x)\in[-R+ \tfrac{\rho}{2} \log\tfrac{m(\al)}{m(\be)},  \tfrac{3R}{2}+ \tfrac{\rho}{2} \log\tfrac{m(\al)}{m(\be)}]\,,\quad g(y)\in[ -\tfrac{3R}{2} - \tfrac{\rho}{2} \log\tfrac{m(\al)}{m(\be)},  2R - \tfrac{\rho}{2} \log\tfrac{m(\al)}{m(\be)}]
    \end{align*}
\end{lemma}
\begin{proof}
Consider the translation-invariant dual formulation (\ref{eq:trans-inv-dual}): if $(\f,\g)$ are optimal, then for any $\la\in\rset$, $(\f+\la,\g - \la)$ are also optimal.
We leverage the structure of the dual constraint $\f\oplus\g\leq \Cd$ with \Cref{lem:anchor-bound}. Since for $(x,y) \in \supp(\al) \times \supp(\be)$, $\Cd(x,y)\leq R$, then without loss of generality, $\f(x)\in[0, R]$ and $\g(y)\in[-R,R]$. The potentials $(\f, \g)$ are optimal for the translation-invariant dual energy, and we need a bound for the original dual functional (\ref{eq:dual-uot}).
To this end, we leverage \Cref{lem:optim-transl} to compute the optimal translation, such that $(\f,\g) = (\f +\lambda^\star(\f,\g), \g -\lambda^\star(\f,\g))$.
Let $\bar\al = \al / m(\al)$ and $\bar\be = \be / m(\be)$ be the normalized probability measures. The translation can be written as,
\begin{align} \label{eq:lambda_star_2}
    \lambda^\star(\f,\g) = \frac{1}{2}\Bigg[ \S^{\bar\be}_\rho(\g) - \S^{\bar\al}_\rho(\f) \Bigg] + \frac{\rho}{2} \log\frac{m(\al)}{m(\be)},
\end{align}
where the functional $\S^\al_\rho$ is defined in \Cref{lem:optim-transl}. Since $\bar\al$ and $\bar\be$ are probability measures, then by~\cite[Proposition 1]{genevay2019sample}, $\f(x)\in[0,R]$ and $\g(x)\in[-R,R]$ respectively imply $\S^{\bar\al}_\rho(\f)\in[0,R]$ and  $\S^{\bar\be}_\rho(\g)\in[-R,R]$.
Combining these bounds on $\S^{\bar\al}_\rho(\f),\ \S^{\bar\be}_\rho(\g)$ with the expression of $\la^\star(\f,\g)$, \eqref{eq:lambda_star_2} yields the desired bounds on the optimal potentials $(\f,\g)$ of the dual formulation (\ref{eq:dual-uot}). 

\end{proof}

\subsection{Metrizing weak$^*$ convergence: Proof of \Cref{thm:weak-cv}}
\label{app:proof-weak-cv}
The Kullback-Leibler setting is treated here.
The Partial OT setting (\ie, $\D_\phi=\rho\TV$) is treated in Appendix~\ref{app:tv-case}.

\begin{proof}
    Let $(\al_n)$ be a sequence of measures in $\Mmp(\mathsf{X})$ and $\al\in\Mmp(\mathsf{X})$, where $\mathsf{X} \subset \Rd$ is compact with radius $R > 0$. First, we assume that $\al_n\rightharpoonup\al$. Then, by \cite[Theorem 2.25]{liero2018optimal}, under our assumptions, $\al_n\rightharpoonup\al$ is equivalent to $\lim_{n\to+\infty}  \UOT(\al_n,\al) =  0$. This implies that $\lim_{n\to+\infty} \SUOT(\al_n,\al) = 0$ and $\lim_{n\to+\infty} \RSOT(\al_n,\al) = 0$, since by \Cref{thm:equiv-loss} and non-negativity of $\SUOT$ (\Cref{prop:metric-suot}),
    \begin{align*}
        0\leq \SUOT(\al_n,\al)\leq\RSOT(\al_n,\al)\leq\UOT(\al_n,\al) \,.
    \end{align*}

    Conversely, assume either that $\lim_{n\to+\infty}\SUOT(\al_n,\al) = 0$ or $\lim_{n\to+\infty} \RSOT(\al_n,\al) = 0$. 
    First assume there exists $M > 0$ such that for large enough $n \in \nsets, m(\al_n) \leq M$, then by \Cref{thm:equiv-loss}, there exists $c > 0$ such that $\UOT(\al_n,\al)\leq c \big(\SUOT(\al_n,\al)\big)^{1/(d+1)}$.
    Since $c$ is doesn't depend on the masses $(m(\al_n),m(\al))$, it does not depend on $n$. 
    By \Cref{thm:equiv-loss}, it yields metric equivalence between $\SUOT$, $\RSOT$ and $\UOT$, thus $\lim_{n\to+\infty}\UOT(\al_n,\al) = 0$.
    By~\cite[Theorem 2.25]{liero2018optimal}, we eventually obtain $\al_n\rightharpoonup\al$, which is the desired result.

    The remaining step thus consists in proving that the sequence of masses $(m(\al_n))_{n\in\nsets}$ is indeed uniformly bounded by $M>0$ for large enough $n$.
    Note that for any $(\al,\be)\in\Mmp(\rset^d)$, one has $\UOT(\al,\be)\geq \rho(\sqrt{m(\al)} - \sqrt{m(\be)})^2$.
    Indeed one has $\UOT(\al,\be)\geq \calD(\lambda,-\la)$, where $\calD$ denotes the dual functional (\ref{eq:dual-uot}) and $\la=\tfrac{\rho}{2}\log\tfrac{m(\al)}{m(\be)}$.
    Note that the pair $(\la,-\la)$ are feasible dual potentials for the constraint $\f\oplus\g\leq \Cd$, because the cost $\Cd$ is positive in our setting.
    The property of push-forwards measures means that for any $\theta\in\sphereD$, one has $m(\thsss\al)=m(\al)$. Therefore, we obtain the following bounds for $n$ large enough.
    \begin{align*}
        \RSOT(\al_n,\al)\geq \SUOT(\al_n,\al)&\geq \int_{\sphereD} \rho\bigg(\sqrt{m(\thsss\al_n)} - \sqrt{m(\thsss\al)}\bigg)^2\rmd\unifS(\theta),\\
        &=\rho(\sqrt{m(\al_n)} - \sqrt{m(\al)})^2.
    \end{align*}
    Hence, $\lim_{n\to+\infty} \SUOT(\al_n,\al) = 0$ or $\lim_{n\to+\infty} \RSOT(\al_n,\al) = 0$ implies $\lim_{n\to+\infty} m(\al_n) = m(\al)$.
    In other terms the mass of sequence converges and is thus uniformly bounded for large enough $n$.
    Since we proved that $m(\al_n)<M$ and $m(\al)$ is finite, it ends the proof.
\end{proof}

\subsection{Properties of sliced partial OT}
\label{app:tv-case}
We provide in this subsection the proofs of \Cref{prop:metric-suot}, \Cref{thm:equiv-loss,thm:weak-cv} for the setting of sliced partial OT. To this end, we rely on a formulation for $\SUOT$ and $\RSOT$ when $\D_{\varphi_1} = \D_{\varphi_2} = \rho \TV$, which we prove below. \Eqref{eq:uot_dual_tv} is proved in~\citep{piccoli2014generalized} and can then be applied to $\SUOT$: we include it for completeness. \Eqref{eq:usot_dual_tv} is our contribution and is specific to $\RSOT$.

\begin{lemma}\label{lem:dual-tv}
    Let $\rho > 0$ and assume $\D_{\phi_1}=\D_{\phi_2}=\rho\TV$ and $\Cd(x,y)=||x-y||$. Then, for any $(\al,\be)\in\Mmp(\Rd)$,
    \begin{align} \label{eq:uot_dual_tv}
        \UOT(\al,\be) = \sup_{\f\in \calE} \int\f(x)\rmd(\al - \be)(x),
    \end{align}
    where $\calE = \{ \f:\rset^d\rightarrow\rset ,\,\, ||\f||_{Lip}\leq 1,\; ||\f||_{\infty}\leq \rho\}$, $||\f||_{\infty}\triangleq \sup_{x\in\rset^d} |\f(x)|$ and $||\f||_{Lip}\triangleq \sup_{(x,y)\in\rset^d}\tfrac{|\f(x) - \f(y)|}{\Cd(x,y)}$.
    
    Furthermore, for $\Co(x,y)=|x-y|$ and an empirical approximation $\hat{\unifS}_N=\tfrac{1}{N}\sum_{i=1}^N\delta_{\theta_i}$ of $\unifS$, one has
    \begin{align} \label{eq:usot_dual_tv}
        \RSOT(\al,\be) = \sup_{(\f_\theta)\in \calE} \int_{\Rd} \bigg( \int_{\sphereD}\f_\theta(\theta^\star(x))\rmd\hat{\unifS}_N(\theta) \bigg)\rmd(\al - \be)(x) \,,
    \end{align}
    where
    \begin{align*}
        \calE = \{ \forall\theta\in\supp(\hat{\unifS}_N),\,\,    \f_\theta:\rset\rightarrow\rset,\,\, ||\f_\theta||_{Lip}\leq 1,\; ||\int_{\sphereD}\f_\theta\circ\theta^\star\rmd\hat{\unifS}_N(\theta)||_{\infty}\leq \rho\},
    \end{align*}
    and the Lipschitz norm here is defined w.r.t. $\Co$ as $||\f||_{Lip}\triangleq \sup_{(x,y)\in\rset^d}\tfrac{|\f(x) - \f(y)|}{\Co(x,y)}$
\end{lemma}

\begin{proof}
    We start with the formulation of Equation~\ref{eq:dual-uot} and Theorem~\ref{thm:duality-rsot}.
    For $\RSOT$ one has
    \begin{align*}
        \RSOT(\al,\be) = \sup_{\f_\theta(\cdot)\oplus\g_\theta(\cdot)\leq \Co}
        &\int \phi_1^\circ\Big(\int_{\sphereD} \f_\theta(\theta^\star(x))\rmd\sigma_N(\theta)\Big)\rmd\al(x)\\
        &+ \int \phi_2^\circ\Big(\int_{\sphereD} \g_\theta(\theta^\star(y))\rmd\sigma_N(\theta)\Big)\rmd\be(y) \,.
    \end{align*}
    When $\D_\phi=\rho\TV$, the function $\phi^\circ$ reads $\phi^\circ(x)=x$ for $x\in[-\rho,\rho]$, $\phi^\circ(x)=\rho$ when $x\geq\rho$, and $\phi^\circ(x)=-\infty$ otherwise.
    Noting $\f_{avg}(x) = \int_{\sphereD} \f_\theta(\theta^\star(x))\rmd\sigma_N(\theta)$ and $\g_{avg}(x) = \int_{\sphereD} \g_\theta(\theta^\star(x))\rmd\sigma_N(\theta)$.
    This formula on $\phi^\circ$ imposes $\f_{avg}(x)\geq-\rho$ and $\g_{avg}(x)\geq-\rho$.
    Furthermore, since we perform a supremum w.r.t. $(\f_{avg},\g_{avg})$ where $\phi^\circ$ attains a plateau, then without loss of generality, we can impose the constraint $\f_{avg}(x)\leq\rho$ and $\g_{avg}(x)\geq\rho$, as it will have no impact on the optimal dual functional value.
    Thus we have that $||\f_{avg}||_\infty\leq\rho$ and $||\g_{avg}||_\infty\leq\rho$.
    To obtain the Lipschitz property, we use the constraint that $\f_\theta(\cdot)\oplus\g_\theta(\cdot)\leq \Co$ for any $\theta\in\supp(\sigma_N)$, as well as~\cite[Proposition 3.1]{santambrogio2015optimal}.
    Thus by using c-transform for the cost $\Co(x,y)=|x-y|$, we can take w.l.o.g $\f_\theta(\cdot) = -\g_\theta(\cdot)$ with $\f_\theta(\cdot)$ a $1$-Lipschitz function.
    Thus w.l.o.g we can perform the supremum over $(\f_\theta)_\theta\in\calE$, and rephrase the functional as desired, since we have that $\phi^\circ(\f_{avg}) = \f_{avg}$.

    The proof for $\UOT$ is exactly the same, except that our inputs are $(\f,\g)$ instead of $(\f_\theta,\g_\theta)$.
    
\end{proof}

We can now prove \Cref{prop:metric-suot}, \Cref{thm:equiv-loss,thm:weak-cv} in the setting of sliced Partial OT.
All those results are summarized in the following statement.

\begin{theorem}{\textbf{(Properties of Sliced Partial OT)}}\label{thm:tv-metric}
    Assume $\Co(x,y)=\abs{x-y}$ and $\D_{\varphi_1}=\D_{\varphi_2}=\rho\TV$.
    Then, $\RSOT$ satisfies the triangle inequality. 
    Additionally, for any $(\al, \be) \in \Mmp(\mathsf{X})$ where $\mathsf{X} \subset \Rd$ is compact with radius $R$, $\UOT(\al,\be)\leq c(\rho,R)\,\SUOT(\al,\be)^{1/(d+1)}$, and $\RSOT$ and $\SUOT$ both metrize the weak$^*$ convergence.
\end{theorem}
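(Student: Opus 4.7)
The statement contains three assertions: (i) the triangle inequality for $\RSOT$, (ii) the quantitative equivalence $\UOT(\al,\be)\leq c(\rho,R)\,\SUOT(\al,\be)^{1/(d+1)}$, and (iii) weak$^*$ metrization by both $\SUOT$ and $\RSOT$. I would handle them in this order, exploiting the flat-norm structure induced by the $\rho\TV$ penalty together with the general templates already developed in \Cref{thm:equiv-loss,thm:weak-cv}.

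\textbf{(i) Triangle inequality for $\RSOT$.} Fix $\al,\ga,\be\in\Mmp(\mathsf{X})$ and $\varepsilon>0$, and pick $\varepsilon$-optimal intermediate marginals $(\pi_1,\pi_2)$ for $\RSOT(\al,\ga)$ and $(\pi_3,\pi_4)$ for $\RSOT(\ga,\be)$. Under the $\rho\TV$ penalty, the standard partial-OT ``restriction'' property lets me assume without loss of generality $\pi_1\leq\al$, $\pi_2,\pi_3\leq\ga$, $\pi_4\leq\be$, while finiteness of the $\SOT$ terms forces $m(\pi_1)=m(\pi_2)$ and $m(\pi_3)=m(\pi_4)$. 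The key construction is to glue the two transports via the lattice infimum $\eta=\pi_2\wedge\pi_3\leq\ga$: restricting $\pi_1$ to the portion that transports to $\eta$ and $\pi_4$ to the portion receiving mass from $\eta$ yields a candidate pair $(\tilde\pi_1,\tilde\pi_4)$ of equal mass admissible for $\RSOT(\al,\be)$. The triangle inequality of $\SOT$, which is a metric on measures of equal mass when $\Co(x,y)=|x-y|$, gives $\SOT(\tilde\pi_1,\tilde\pi_4)\leq\SOT(\tilde\pi_1,\eta)+\SOT(\eta,\tilde\pi_4)$, and the ``discarded'' pieces $\pi_2-\eta$ and $\pi_3-\eta$ are absorbed by the TV penalties at cost $\rho$ per unit mass. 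Adding everything up produces $\RSOT(\al,\be)\leq\RSOT(\al,\ga)+\RSOT(\ga,\be)+O(\varepsilon)$, and letting $\varepsilon\to 0$ concludes.

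\textbf{(ii)--(iii) Equivalence and metrization.} The chain $\SUOT\leq\RSOT\leq\UOT$ is immediate from the definitions, exactly as in the KL setting of \Cref{thm:equiv-loss}. For the nontrivial upper bound on $\UOT$, I would mimic the proof of \Cref{thm:equiv-loss} in the TV setting: reduce TV-penalized $\UOT$ to a balanced $\OT$ problem between appropriately truncated measures via its flat-norm dual, then invoke the classical Wasserstein versus Sliced Wasserstein inequality with exponent $1/(d+1)$. The sublinearity of the TV penalty, namely $\UOT(\al,\be)\leq\rho(m(\al)+m(\be))$, makes the constant depend only on $\rho$ and the support radius $R$ rather than on $m(\al), m(\be)$. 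Weak$^*$ metrization then follows exactly as in \Cref{thm:weak-cv}: $\UOT$ with $|x-y|$ cost and $\rho\TV$ penalty is known to coincide with a weighted flat norm, which metrizes the weak$^*$ topology on $\Mmp(\mathsf{X})$ when $\mathsf{X}$ is compact, and combining this with the equivalence above transfers the property to $\SUOT$ and $\RSOT$.

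\textbf{Main obstacle.} The hard part is the triangle inequality of $\RSOT$. The two intermediate measures $\pi_2$ and $\pi_3$ are distinct approximations of $\ga$ whose masses do not coincide in general, so a naive chaining through $\ga$ cannot use the triangle inequality of $\SOT$ directly. The right mass-matching through the lattice infimum $\pi_2\wedge\pi_3$, together with a careful bookkeeping of the surplus via the TV penalty, is the crux of the argument. The remaining two items (quantitative equivalence and weak$^*$ metrization) are close adaptations of arguments already spelled out for the KL case in \Cref{thm:equiv-loss,thm:weak-cv}.
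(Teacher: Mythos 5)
Your parts (ii)--(iii) are broadly in line with the paper, though the paper does not rework the mollification argument of \Cref{thm:equiv-loss}: it first proves (its \Cref{lem:dual-tv}) that under $\D_{\varphi_i}=\rho\TV$ and $\Co(x,y)=|x-y|$ both $\UOT$ and $\RSOT$ admit dual representations of the form $\sup_{\f\in\calE}\int \f\,\rmd(\al-\be)$ over a fixed class of bounded $1$-Lipschitz (averaged sliced) potentials, i.e.\ they are integral probability metrics; the bound $\UOT\leq c(\rho,R)\,\SUOT^{1/(d+1)}$ then follows by citing the sliced-IPM comparison theorem of Nadjahi et al., and weak$^*$ metrization follows as in \Cref{thm:weak-cv}, with the simplification (which you correctly note) that no uniform mass bound is needed. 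Your suggested reduction ``to a balanced OT between truncated measures'' is shakier than either of these routes, but the spirit is acceptable.

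The genuine gap is in your part (i), which is also where you diverge most from the paper. Your gluing argument needs, from $\eta=\pi_2\wedge\pi_3$, a single pair of measures $(\tilde\pi_1,\tilde\pi_4)$ on $\Rd$ with $\SOT(\tilde\pi_1,\eta)$ and $\SOT(\eta,\tilde\pi_4)$ controlled by $\SOT(\pi_1,\pi_2)$ and $\SOT(\pi_3,\pi_4)$ plus TV terms. But the ``portion of $\pi_1$ that transports to $\eta$'' is not well defined: in $\RSOT$ there is no global coupling between $\pi_1$ and $\pi_2$, only one optimal 1D plan per projection $\theta$. Restricting each slicewise plan to second marginal $\thsss\eta$ produces first marginals $\mu_\theta\leq\thsss\pi_1$ that depend on $\theta$ and are in general not the projections of any common measure $\tilde\pi_1\leq\pi_1$; and restriction of marginals \emph{not} along a plan can strictly increase $\SOT$ (take $\pi_1=\pi_2=\delta_x+\delta_y$, $\tilde\pi_1=\delta_x$, $\eta=\delta_y$). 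So the key inequality $\SOT(\tilde\pi_1,\eta)\leq\SOT(\pi_1,\pi_2)+\rho\cdot(\text{discarded mass})$ is unsubstantiated, and the whole chaining collapses at this step. This is precisely the difficulty the paper avoids: its dual formula \eqref{eq:usot_dual_tv} makes $\RSOT(\al,\be)$ a supremum of functionals \emph{linear} in $\al-\be$ over a class of potentials independent of the inputs, so the triangle inequality is immediate from $\al-\ga=(\al-\be)+(\be-\ga)$, with no primal gluing or mass-matching needed. To repair your argument you would either have to prove a lifting lemma producing a $\theta$-independent restriction (which is essentially false in general), or switch to the dual route.
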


\begin{proof}[Proof of Sliced Partial OT properties]
    First we prove that in that setting $\RSOT$ is a metric.
    Reusing Lemma~\ref{lem:dual-tv}, we have that for any measures $(\al,\be,\ga)$
    \begin{align*}
        \RSOT(\al,\ga) &= \sup_{(\f_\theta)_\theta\in \calE} \int\bigg( \int_{\sphereD}\f_\theta(\theta^\star(x))\rmd\sigma_N \bigg)\rmd(\al - \ga)(x)\\
        &=\sup_{(\f_\theta)_\theta\in \calE} \int\bigg( \int_{\sphereD}\f_\theta(\theta^\star(x))\rmd\sigma_N \bigg)\rmd(\al -\be + \be - \ga)(x)\\
        &\leq \sup_{(\f_\theta)_\theta\in \calE} \int\bigg( \int_{\sphereD}\f_\theta(\theta^\star(x))\rmd\sigma_N \bigg)\rmd(\al -\be )(x)\\
        &\qquad+ \sup_{(\f_\theta)_\theta\in \calE} \int\bigg( \int_{\sphereD}\f_\theta(\theta^\star(x))\rmd\sigma_N \bigg)\rmd(\be - \ga)(x)\\
        &=\RSOT(\al,\be) +\RSOT(\be,\ga).
    \end{align*}

    Note that reusing Lemma~\ref{lem:dual-tv}, we have that $\SUOT$ is a sliced integral probability metric over the space of bounded and Lipschitz functions.
    More precisely, we satisfy the assumptions of~\cite[Theorem 3]{nadjahi2020statistical}, so that one has $\UOT(\al,\be)\leq c(\rho,R)(\SUOT(\al,\be))^{1 / (d+1)}$.

To prove that $\RSOT$ and $\SUOT$ metrize the weak* convergence, the proof is very similar to that of Theorem~\ref{thm:weak-cv} detailed above.
Assuming that $\al_n\rightharpoonup\al$ implies $\SUOT(\al_n,\al)\rightarrow 0$ and $\RSOT(\al_n,\al)\rightarrow 0$ is already proved in Appendix~\ref{app:proof-weak-cv}.
To prove the converse, the proof is also the same, \ie, we use the property that $\SUOT$, $\RSOT$ and $\UOT$ are equivalent metrics, which holds as we assumed that supports of $(\al,\be)$ are compact in a ball of radius $R$.
Note that since the bound $\UOT(\al,\be)\leq c(\rho,R)(\SUOT(\al,\be))^{1 / (d+1)}$ holds independently of the measure's masses, we do not need to uniformly bound $m(\al_n)$, compared to the $\KL$ setting of Theorem~\ref{thm:weak-cv}.

\end{proof}

\section{Additional details for \Cref{sec:implem}}
\label{app:proof-fw}

\subsection{Postponed Proofs for \Cref{sec:implem}}

\begin{proof}[Proof of \Cref{prop:fw-suot}]
Our goal is to compute the first order variation of the $\SUOT$ functional.
Given that $\SUOT(\al,\be)=\int_{\sphereD} \UOT(\thsss\al,\thsss\be)\rmd\unifS(\theta)$, one can apply \Cref{prop:fw-uot} slice-wise.
Since measures are assumed to have compact support, one can apply the dominated convergence theorem and differentiate under the integral sign.
Furthermore, the translation-invariant formulation in the setting of $\SUOT$ reads
\begin{align}
    \SUOT(\al,\be)= \int_{\sphereD} \sup_{\f_\theta\oplus\g_\theta\leq \Co}\Bigg[ \sup_{\la_\theta\in\rset} &\int\phi^\circ\Big(\f_\theta(\cdot) + \la_\theta\Big)\rmd\thsss\al\\ 
    &+ \int\phi^\circ\Big(\g_\theta(\cdot) - \la_\theta\Big)\rmd\thsss\be \Bigg],
\end{align}
In the setting where $\phi^\circ$ is smooth and strictly concave (such as $\D_\phi=\rho\KL$), there always exists a unique optimal $\la_\theta^\star$.
Furthermore, one can apply the envelope theorem such that the Fréchet differential w.r.t. to a perturbation $(r_\theta, s_\theta)$ of $(\f_\theta,\g_\theta)$ reads
\begin{align}
    \int_{\sphereD} \Bigg[ &\int r_\theta(\cdot)\times\nabla\phi^\circ\Big(\f_\theta(\cdot) + \la_\theta^\star(\f_\theta, \g_\theta)\Big)\rmd\thsss\al \\
    + &\int s_\theta(\cdot) \times \nabla\phi^\circ\Big(\g_\theta(\cdot) - \la_\theta^\star(\f_\theta, \g_\theta)\Big)\rmd\thsss\be \Bigg]
\end{align}
Setting 
\begin{align*}
        \al_\theta = \nabla\phi^\circ\bigg(\f_\theta(\cdot) + \la^\star(\f_\theta, \g_\theta)\bigg) \al, 
        \qquad \be_\theta = \nabla\phi^\circ\bigg(\g_\theta(\cdot) - \la^\star(\f_\theta, \g_\theta)\bigg)\be \,,
    \end{align*}
    yields the desired result, \ie, the first order variation is
    \begin{align}
    \int_{\sphereD} \Bigg[ \int r_\theta(\cdot)\rmd(\thsss\al_\theta)
    + \int s_\theta(\cdot)\rmd(\thsss\be_\theta) \Bigg] \,.
\end{align}
\end{proof}

\begin{proof}[Proof of \Cref{prop:fw-rsot}]
Our goal is to compute the first order variation of the $\RSOT$ functional.
First, we leverage \Cref{thm:duality-rsot} such that $\RSOT$ reads
\begin{align}
        \RSOT(\al,\be) = \sup_{\f_\theta(\cdot)\oplus\g_\theta(\cdot)\leq \Co}
        &\int \phi_1^\circ\Big(\int_{\sphereD} \f_\theta(\theta^\star(x))\rmd\hat{\unifS}_K(\theta)\Big)\rmd\al(x)\\
        &+ \int \phi_2^\circ\Big(\int_{\sphereD} \g_\theta(\theta^\star(y))\rmd\hat{\unifS}_K(\theta)\Big)\rmd\be(y)\\
        = \sup_{\f_\theta(\cdot)\oplus\g_\theta(\cdot)\leq \Co}
        &\int \phi_1^\circ\Big(\f_{avg}(x)\Big)\rmd\al(x)
        + \int \phi_2^\circ\Big(\g_{avg}(y)\Big)\rmd\be(y),
\end{align}
where 
    \begin{align*}
        \f_{avg}(x) = \int_{\sphereD}\f_\theta(\theta^\star(x))\rmd\hat{\unifS}_K(\theta), &\qquad &\g_{avg}(y) = \int_{\sphereD}\g_\theta(\theta^\star(y))\rmd\hat{\unifS}_K(\theta).
    \end{align*}

From this, we derive the translation-invariant formulation as follows.
\begin{align}
        \RSOT(\al,\be)
        = \sup_{\f_\theta(\cdot)\oplus\g_\theta(\cdot)\leq \Co}\sup_{\la\in\rset}
        &\int \phi_1^\circ\Big(\f_{avg}(x) + \la\Big)\rmd\al(x)\\
        + &\int \phi_2^\circ\Big(\g_{avg}(y) - \la \Big)\rmd\be(y),
\end{align}
For smooth and strictly concave $\phi^\circ$, there exists a unique $\la^\star(\f_{avg},\g_{avg})$ attaining the supremum.
Furthermore, one can apply the enveloppe theorem and differentiate under the integral sign (since the support is compact).
Consider perturbations $(\r_\theta(\cdot), \s_\theta(\cdot))$ of $(\f_\theta(\cdot),\g_\theta(\cdot))$.
Write
\begin{align*}
        \r_{avg}(x) = \int_{\sphereD}\r_\theta(\theta^\star(x))\rmd\hat{\unifS}_K(\theta), &\qquad &\s_{avg}(y) = \int_{\sphereD}\s_\theta(\theta^\star(y))\rmd\hat{\unifS}_K(\theta).
    \end{align*}

Given that $\phi_1^\circ(\f_{avg} + \r_{avg}) = \phi_1^\circ(\f_{avg}) + \r_{avg}\nabla\phi_1^\circ(\f_{avg}) + o(|| \r_{avg} ||_\infty)$, the first order variation reads
\begin{align}
    &\int \r_{avg}(x)\nabla\phi_1^\circ\Big(\f_{avg}(x) + \la^\star(\f_{avg},\g_{avg})\Big)\rmd\al(x)\\
        + &\int \s_{avg}(y)\nabla\phi_2^\circ\Big(\g_{avg}(y) - \la^\star(\f_{avg},\g_{avg}) \Big)\rmd\be(y).
\end{align}

Then we define
\begin{align*}
        &\bar\al = \nabla\phi_1^\circ(\f_{avg} + \la^\star(\f_{avg}, \g_{avg})) \al, &\quad &\bar\be = \nabla\phi_2^\circ(\g_{avg} - \la^\star(\f_{avg}, \g_{avg}))\be,
\end{align*}
such that the first order variation reads
\begin{align}
    \int \r_{avg}(x)\rmd\bar\al(x)
        + \int \s_{avg}(y)\rmd\bar\be(y).
\end{align}
One can then explicit the definition of $(\r_{avg}, \s_{avg})$, such that it reads
\begin{align}
    & \int_{\sphereD}\int \r_\theta(\theta^\star(x))\rmd\bar\al(x)
        + \int_{\sphereD}\int \s_\theta(\theta^\star(y))\rmd\bar\be(y)\\
    =& \int_{\sphereD}\int \r_\theta\rmd\thsss\bar\al(x)
        + \int_{\sphereD}\int \s_\theta\rmd\thsss\bar\be(y).
\end{align}
By optimizing the above over the constraint set $\{\r_\theta\oplus\s_\theta\leq \Co\}$, we identify the computation of $\SOT(\bar\al, \bar\be)$, which concludes the proof. 

\end{proof}

\subsection{Frank-Wolfe methodology for computing UOT} \label{app:fw_uot}

\paragraph{Background: FW for \texorpdfstring{$\UOT$}{UOT}.}
Our approach to compute $\SUOT$ and $\RSOT$ takes inspiration from the construction of~\citep{sejourne2022faster}.
It consists in applying a Frank-Wolfe (FW) procedure over the dual formulation of $\UOT$.
Such approach is equivalent to solve a sequence of balanced $\OT$ problems between measures $(\tilde\al, \tilde\be)$ which are iterative renormalizations of $(\al,\be)$.
While the idea holds in wide generality, it is especially efficient in 1D where $\OT$ has low algorithmic complexity, and we reuse it in our sliced setting.

FW algorithm consists in optimizing a functional $\calH$ over a compact, convex set $\calC$ by optimizing its linearization $\nabla\calH$.
Given a current iterate $x^t$ of FW algorithm, one computes $r^{t+1}\in\arg\max_{r\in\calC}\ps{\nabla\calH(x^t)}{r}$, and performs a convex update $x^{t+1} = (1-\ga_{t+1})x^t +\ga_{t+1}r^{t+1}$.
One typically chooses the learning rate $\ga_t=\tfrac{2}{2+t}$. 
This yields the routine $\texttt{FWStep}$ of Section~\ref{sec:implem} which is detailed below.

\begin{algorithm}[H] \label{alg:fwstep}
			\caption{-- $\texttt{FWStep}(\f,\g,\r,\s,\ga)$}
			\small{
 				\textbf{Input:} $\al$, $\be$, $\f$, $\g$, $\ga$ \\
				\textbf{Output:}~Normalized measures $(\al,\be)$ as in~\eqref{eq:fw-norm-meas}

				\begin{algorithmic}
					\STATE $\f(x)\leftarrow(1-\ga)\f(x) + \ga\r(x)$
					\STATE $\g(y)\leftarrow(1-\ga)\g(y) + \ga\s(y)$
                    \STATE Return $(\f,\g)$
                \end{algorithmic}
			}
\end{algorithm}

In the setting of $\UOT$, one would take $\calC=\{\f\oplus\g\leq \Cd\}$.
However, this set is not compact as it contains $(\la,-\la)$ for any $\la\in\rset$.
Thus, \citet{sejourne2022faster} propose to optimise a \emph{translation-invariant} dual functional $\calH(\f,\g;\al,\be)\triangleq\sup_{\la\in\rset}\calD(f+\lambda, g-\lambda;\al,\be)$, with $\calD$ defined in~\eqref{eq:dual-uot}.
Similar to the balanced OT dual, one has $\calH(\f+\la,\g-\la;\al,\be)=\calH(\f,\g;\al,\be)$, thus one can apply~\cite[Proposition 1.11]{santambrogio2015optimal} to assume w.l.o.g. that, \eg, $\f(0)=0$ and restrict to a compact set of functions.
We emphasize that FW algorithm is well-posed to optimize $\calH$, but not $\calD$.

Note that once we have the dual variables $(\f,\g)$ maximizing $\calH$, we retrieve optimal dual variables maximizing $\calD$ as $(\f +\la^\star(\f,\g),\g-\la^\star(\f,\g))$, where $\la^\star(\f,\g)\triangleq\arg\max_{\la\in\rset}\calD(\f+\lambda, \g-\lambda;\al,\be)$.
The KL setting where $\D_{\phi_1}=\rho_1\KL$ and $\D_{\phi_2}=\rho_2\KL$ is especially convenient, because $\la^\star(\f,\g)$ admits a closed form, which avoids iterative subroutines to compute it.
In that case, it reads
\begin{align}\label{eq:fw-transl}
    \la^\star(\f,\g) = \frac{\rho_1\rho_2}{\rho_1+\rho_2}\log\Bigg( \frac{\int e^{-\f(x)/\rho_1}\rmd\al(x)}{\int e^{-\g(y)/\rho_2}\rmd\be(y)} \Bigg).
\end{align}

We summarize the FW algorithm for $\UOT$ in the proposition below.
We refer to~\citep{sejourne2022faster} for more details on the algorithm and pseudo-code.
We adapt this approach and result for $\SUOT$ and $\RSOT$.

\begin{proposition}{\citep{sejourne2022faster}}
\label{prop:fw-uot}
Assume $\phi^\circ$ is smooth.
    Given current iterates $(\f^{(t)},\g^{(t)})$, the linear FW oracle of $\UOT(\al,\be)$ is $\OT(\bar\al^{(t)},\bar\be^{(t)})$, where $\bar\al^{(t)}=\nabla\phi^\circ(\f^{(t)} + \la^\star(\f^{(t)},\g^{(t)}))\al$ and $\bar\be^{(t)}=\nabla\phi^\circ(\g^{(t)} - \la^\star(\f^{(t)},\g^{(t)}))\be$.
    In particular, one has $m(\bar\al^{(t)})=m(\bar\be^{(t)})$, thus the balanced OT problem always has finite value.
    More precisely, the FW update reads
    \begin{align}
        (\f^{(t+1)},\g^{(t+1)}) &= (1-\gamma^{(t+1)})(\f^{(t)},\g^{(t)}) + \gamma^{(t+1)}(r^{(t+1)}, s^{(t+1)}),\label{eq:fw-update}\\
        \text{where}\qquad (r^{(t+1)}, s^{(t+1)})&\in\arg\max_{\r \oplus \s \leq \Cd} \int r(x) \rmd\bar\al^{(t)}(x) + \int s(y) \rmd\bar\be^{(t)}(y).
    \end{align}
\end{proposition}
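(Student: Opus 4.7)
The plan is to compute the Fréchet derivative of the translation-invariant dual $\calH(\f,\g) = \sup_{\lambda \in \rset} \calD(\f + \lambda, \g - \lambda)$ at the current iterate $(\f^{(t)}, \g^{(t)})$, then show that maximizing the resulting linear functional over $\{r \oplus s \leq \Cd\}$ recovers exactly the Kantorovich dual of a balanced OT problem between the reweighted measures $\bar\al^{(t)}$ and $\bar\be^{(t)}$.

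First I would write $\calD(\f,\g) = \int \phi_1^\circ(\f(x))\rmd\al(x) + \int \phi_2^\circ(\g(y))\rmd\be(y)$ and note that, when $\phi_i^\circ$ is smooth, the partial derivatives of $\calD$ in $(\f,\g)$ are represented by the measures $\nabla\phi_1^\circ(\f)\al$ and $\nabla\phi_2^\circ(\g)\be$. To handle the outer $\sup_\lambda$, I would invoke the envelope theorem: since $\lambda \mapsto \calD(\f+\lambda,\g-\lambda)$ is strictly concave in the KL setting and admits a unique maximizer $\la^\star(\f,\g)$ (with closed form given by \eqref{eq:fw-transl}), the gradient of $\calH$ at $(\f^{(t)},\g^{(t)})$ coincides with the partial gradients of $\calD$ evaluated at the translated pair $(\f^{(t)}+\la^\star, \g^{(t)}-\la^\star)$, yielding
\begin{align*}
\nabla_\f \calH(\f^{(t)},\g^{(t)}) &= \nabla\phi_1^\circ(\f^{(t)} + \la^\star)\,\al = \bar\al^{(t)}, \\
\nabla_\g \calH(\f^{(t)},\g^{(t)}) &= \nabla\phi_2^\circ(\g^{(t)} - \la^\star)\,\be = \bar\be^{(t)}.
\end{align*}
The FW linear oracle is thus $\max_{r\oplus s \leq \Cd}\, \int r\rmd\bar\al^{(t)} + \int s\rmd\bar\be^{(t)}$, which is the Kantorovich dual of $\OT(\bar\al^{(t)},\bar\be^{(t)})$. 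The convex update \eqref{eq:fw-update} is then the generic FW step applied to this oracle.

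The mass equality $m(\bar\al^{(t)}) = m(\bar\be^{(t)})$ follows from the first-order optimality condition defining $\la^\star$. Differentiating $\lambda \mapsto \calD(\f^{(t)}+\lambda, \g^{(t)}-\lambda)$ at $\la^\star$ gives
\begin{align*}
\int \nabla\phi_1^\circ(\f^{(t)} + \la^\star)\rmd\al \;-\; \int \nabla\phi_2^\circ(\g^{(t)} - \la^\star)\rmd\be \;=\; 0,
\end{align*}
which is precisely $m(\bar\al^{(t)}) = m(\bar\be^{(t)})$. In particular, the balanced OT problem is feasible and finite-valued since both marginals share the same total mass and $\Cd$ is bounded on the compact supports considered.

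The main technical obstacle is the envelope-theorem step: one must justify that $(\f,\g) \mapsto \la^\star(\f,\g)$ is well-defined and sufficiently regular so that the derivative of the value function $\calH$ is given by freezing $\lambda = \la^\star$ and differentiating $\calD$ in the potentials only. In the KL setting this is painless because the closed form \eqref{eq:fw-transl} exhibits $\la^\star$ as a smooth functional of $(\f,\g)$ through log-sum-exp expressions, and strict concavity of $\calD$ in $\lambda$ guarantees uniqueness. For general smooth $\phi_i^\circ$ the same argument goes through once strict concavity of $\lambda \mapsto \calD(\f+\lambda,\g-\lambda)$ is verified, which is the only place where the smoothness assumption on $\phi^\circ$ is genuinely used.
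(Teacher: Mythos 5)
Your proposal is correct and follows essentially the same route the paper uses: the statement itself is cited from \cite{sejourne2022faster}, but the paper's own proofs of the analogous oracles (\Cref{prop:fw-suot,prop:fw-rsot}) rely on exactly your argument — smoothness and strict concavity of $\phi^\circ$ giving a unique $\la^\star$, the envelope theorem to differentiate $\calH$ by freezing $\la=\la^\star$, and identification of the linearization with the Kantorovich dual of $\OT(\bar\al^{(t)},\bar\be^{(t)})$. Your derivation of $m(\bar\al^{(t)})=m(\bar\be^{(t)})$ from the first-order optimality condition in $\la$ is also correct and consistent with the closed form \eqref{eq:fw-transl} in the $\KL$ case.
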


Recall that the in KL setting one has $\phi^\circ_i(x)=\rho_i(1 - e^{-x/\rho_i})$, thus $\nabla\phi^\circ_i(x)=e^{-x/\rho_i}$.
Thus in that case one normalizes the measures as
\begin{align}\label{eq:fw-norm-meas}
    \bar\al=\exp\Bigg(-\frac{\f + \la^\star(\f,\g)}{\rho_1}\Bigg)\al,\qquad \bar\be=\exp\Bigg(-\frac{\g - \la^\star(\f,\g)}{\rho_2}\Bigg)\be,
\end{align}
where $\la^\star$ is defined in \eqref{eq:fw-transl}.

This defines the $\texttt{Norm}$ routine in \Cref{alg:norm_routine}.

\subsection{Implementation of Sliced OT to return dual potentials} \label{appendix:potentials}

Recall from \Cref{sec:implem}, \Cref{algo:suot-pseudo,algo:rsot-pseudo} and more precisely, \Cref{prop:fw-suot,prop:fw-rsot}, that FW linear oracle is a sliced OT program, \ie, a set of OT problems computed between univariate distributions of $\Mmp(\rset)$.
Therefore, a key building block of our algorithm is to compute the loss and dual variables of these univariate OT problems.
We explain below how one can compute the sliced OT loss and dual potentials.
The computation of the loss consists in implementing closed formulas of OT between univariate distributions, as detailed in~\cite[Proposition 2.17]{santambrogio2015optimal}.
More precisely, when $\Co(x,y) = |x-y|^p$ and $(\mu,\nu)\in\Mmp(\rset)$, then
\begin{align}\label{eq:closed-form-1d}
    \OT(\mu,\nu) = \int_0^1 |F^{[-1]}_\mu(t) - F^{[-1]}_\nu(t) |^p \rmd t,
\end{align}
where $F^{[-1]}_\mu$ denotes the inverse cumulative distribution function (ICDF) of $\mu$.

\begin{algorithm}[H]
			\caption{-- $\texttt{SlicedOTLoss}(\al,\be,\{\theta\}, p)$}
			\small{
 				\textbf{Input:} $\al$, $\be$, projections $\{\theta\}$, exponent $p$ \\
				\textbf{Output:}~ $\OT(\thsss\al,\thsss\be)$ as in~\eqref{eq:closed-form-1d}

				\begin{algorithmic} \label{alg:sot}
                    \FOR{$\theta \in \{\theta\}$}
					\STATE Project support of $\thsss\al$ and $\thsss\be$
					\STATE Sort weights of $(\thsss\al,\thsss\be)$ and support $(\theta^\star(x))$, $(\theta^\star(y))$ s.t. support is non-decreasing
					\STATE Compute ICDF of $\thsss\al$ and $\thsss\be$
                    \STATE Compute $\OT(\thsss\al,\thsss\be)$ as in~\eqref{eq:closed-form-1d} with exponent $p$
                    \ENDFOR
                \end{algorithmic}
			}
\end{algorithm}

To compute dual potentials using backpropagation, one computes the sliced OT losses (using \Cref{alg:sot}) then calls the backpropagation w.r.t to inputs $(\al,\be)$, because their gradients are optimal dual potentials \cite[Proposition 7.17]{santambrogio2015optimal}. We describe this procedure in \Cref{alg:backprop}.

\begin{algorithm}[H] 
			\caption{-- $\texttt{SlicedOTPotentialsBackprop}(\al,\be,\{\theta\}, p)$}
			\small{
 				\textbf{Input:} $\al$, $\be$, projections $\{\theta\}$, exponent $p$ \\
				\textbf{Output:}~ Dual potentials $(\f_\theta,\g_\theta)$ solving $\OT(\thsss\al,\thsss\be)$

				\begin{algorithmic} \label{alg:backprop}
                    \STATE Enable gradients w.r.t. $(\thsss\al,\thsss\be)$
                    \STATE Call $\texttt{SlicedOTLoss}(\al,\be,\{\theta\}, p)$
                    \STATE Sum (but do not average) losses $\calL=\sum_\theta \OT(\thsss\al,\thsss\be)$.
                    \STATE Backpropagate $\calL$ w.r.t. $(\al,\be)$
                    \STATE Return $(\f_\theta,\g_\theta)$ as gradients of $\calL$ w.r.t. $(\al,\be)$.
                \end{algorithmic}
			}
\end{algorithm}

The implementation of the dual potentials using 1D closed forms relies on the north-west corner rule principle, which can be vectorized in PyTorch in order to be computed in parallel.
The contribution of our implementation thus consists in making such algorithm GPU-compatible and allowing for a parallel computation for every slice simultaneously. 
We stress that this constitutes a non-trivial piece of code, and we refer the interested reader to the code in our supplementary material for more details on the implementation.

\subsection{Output optimal sliced marginals} \label{appendix:sliced_marginals}
In all our algorithms, we focus on dual formulations of $\SUOT$ and $\RSOT$, which optimize the dual potentials.
However, one might want the output variables of the primal formulation (See \Cref{def:suot_usot}).
In particular, the marginals of optimal transport plans are interesting because they are interpreted as normalized versions of inputs $(\al,\be)$ where geometric outliers have been removed.
We detail where this interpretation comes from in the setting of $\UOT$, and then give how it is adapted to $\SUOT$ and $\RSOT$.
In particular, we justify that the $\texttt{Norm}$ routine suffices to compute them.

\paragraph{Case of UOT.}
We focus on the $\D_{\phi_i}=\rho_i\KL$.
As per~\cite[Equation 4.21]{liero2018optimal}, we have at optimality that the optimal transport $\pi^\star$ plan solving $\UOT(\al,\be)$ as in~\eqref{eq:primal-uot} has marginals $(\pi_1^\star,\pi_2^\star)$ which read $\pi_1^\star=e^{-\f^\star/\rho_1}\al$ and $\pi_2^\star=e^{-\g^\star/\rho_2}\be$, where $(\f^\star,\g^\star)$ are the optimal dual potentials solving~\eqref{eq:dual-uot}.
Since on $\supp(\pi^\star)$ one also has $\f^\star(x)+\g^\star(y)=\Cd(x,y)$, if the transportation cost $\Cd(x,y)$ is large (\ie, we are matching a geometric outlier), so are $\f^\star(x)$ and $\g^\star(y)$, and eventually the weights $\pi_1^\star(x)$ and $\pi_2^\star(y)$ are small, hence the interpretation of the geometric normalization of the measures.
Note that in that case, one obtain $(\pi_1^\star,\pi_2^\star)$ by calling $\texttt{Norm}(\al,\be,\f^\star, \g^\star,\rho_1,\rho_2)$.

\paragraph{Case of $\SUOT$.}
Since $\SUOT(\al,\be)$ consists in integrating $\UOT(\thsss\al,\thsss\be)$ w.r.t. $\unifS$, it shares many similarities with $\UOT$.
For any $\theta$, we consider $\pi_\theta$ and $(\f_\theta,\g_\theta)$ solving the primal and dual formulation of $\UOT(\thsss\al,\thsss\be)$.
The marginals of $\pi_\theta$ are thus given by $(e^{-\f_\theta / \rho_1}\al,e^{-\g_\theta / \rho_2}\be)$.
In particular, we retrieve the observation made in Figure~\ref{fig:intro} that the optimal marginals change for each $\theta$.
In that case we call for each $\theta$ the routine $\texttt{Norm}(\al,\be,\f_\theta, \g_\theta,\rho_1,\rho_2)$.

\paragraph{Case of $\RSOT$.}
Recall that the optimal marginals $(\pi_1,\pi_2)$ in $\RSOT(\al,\be)$ do not depend on $\theta$, contrary to $\SUOT(\al,\be)$.
Leveraging the dual formulation of Theorem~\ref{thm:duality-rsot}, and looking at the Lagrangian which is defined in the proof of Theorem~\ref{thm:duality-rsot} (see Appendix~\ref{app:proof-duality}),
we have the optimality condition that $\pi_1=e^{-\f_{avg}/\rho_1}\al$ and $\pi_2=e^{-\g_{avg}/\rho_2}\be$.
Thus in that case, calling $\texttt{Norm}(\al,\be,\f_{avg}, \g_{avg},\rho_1,\rho_2)$ yields the desired marginals.

\subsection{Convergence of Frank-Wolfe iterations: Empirical analysis} \label{app:cv_fw}

We display below an experiment on synthetic dataset to illustrate the convergence of Frank-Wolfe iterations. We also provide insights on the number of iterations that yields a reasonable approximation: a few iterations suffices in our practical settings, typically $F = 20$.

The results are displayed in Figure~\ref{fig:fw-iter}.
We consider the empirical distributions $(\al,\be)$ computed over respectively, $N=400$ and $M=500$ samples over the unit hypercube $[0,1]^d$, $d=10$. Moreover, $\be$ is slightly shifted by a vector of uniform coordinates $0.5 \times \mathbf{1}_d$.
We choose $\rho=1$ and report the estimation of $\SUOT(\al,\be)$ and $\RSOT(\al,\be)$ through Frank-Wolfe iterations.
We estimate the true values by running $F=5000$ iterations, and display the difference between the estimated score and the 'true' values.
\Cref{fig:convergence_fw} shows that numerical precision is reached in a few tens of iterations.
As learning tasks do not usually require an estimation of losses up to numerical precision, we think that it is hence reasonable to take $F \approx 20$ in numerical applications.

\begin{figure}[t] \label{fig:convergence_fw}
    \begin{subfigure}[]{.3\linewidth}
    \includegraphics[width=1\textwidth]{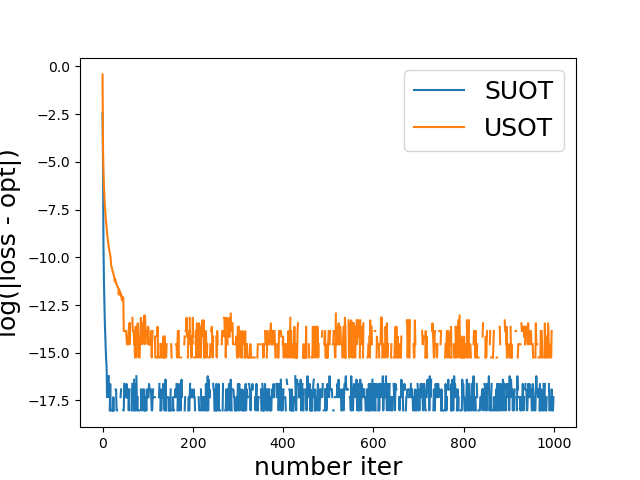}
    \end{subfigure} 
    \begin{subfigure}[]{.3\linewidth}
    \includegraphics[width=1\textwidth]{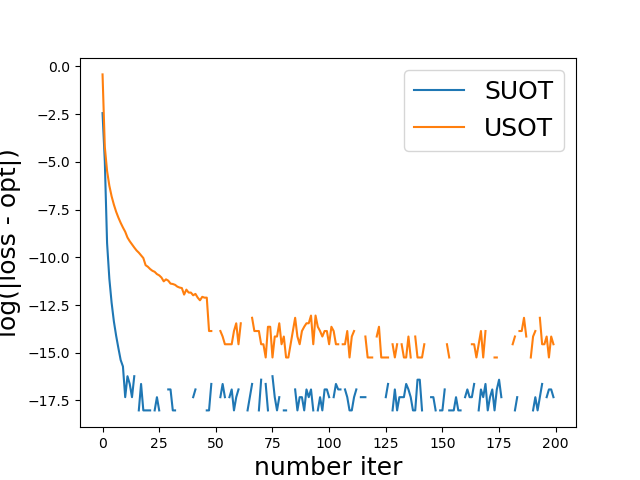}
    \end{subfigure} 
    \begin{subfigure}[]{.3\linewidth}
    \includegraphics[width=1\textwidth]{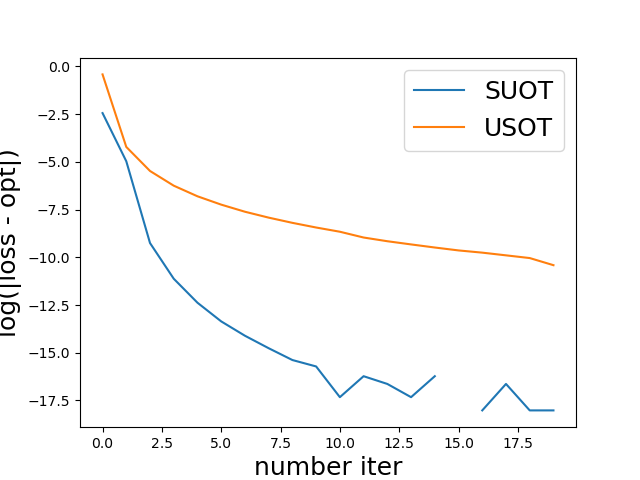}
    \end{subfigure} 
    \caption{$|\SUOT(\al,\be) - \widehat{\SUOT}_t|$ and $|\RSOT(\al,\be) - \widehat{\RSOT}_t|$ against iteration $t$, where $\widehat{\SUOT}_t, \widehat{\RSOT}_t$ are the estimated $\SUOT$, $\RSOT$ using $t$ FW iterations. Plots are in log-scale. All figures are issued from the same run, but zoomed on a subset of first iterations: \emph{(left)} 1000 iterations of FW, \emph{(middle)} 200 iterations, \emph{(right)} 20 iterations.}
    \label{fig:fw-iter}
\end{figure}

\section{Additional details on \Cref{sec:xp}}
\label{appendix:xp}

\subsection{Document classification: Technical details and additional results} 
\label{appendix:doc_classif}

\subsubsection{Datasets}

\begin{table}[t]
    \centering
    \caption{Dataset characteristics.}
    \begin{tabular}{ccccc}
         & \textbf{BBCSport} & \textbf{Movies} & \textbf{Goodreads genre} & \textbf{Goodreads like} \\ \toprule
        Doc & 737 & 2000 & 1003 & 1003 \\
        Train & 517 & 1500 & 752 & 752 \\
        Test & 220 & 500 & 251 & 251 \\
        Classes & 5 & 2 & 8 & 2 \\
        Mean words by doc & $116\pm 54$ & $182 \pm 65$ & $1491 \pm 538$ & $1491 \pm 538$\\
        Median words by doc & 104 & 175 & 1518 & 1518 \\
        Max words by doc & 469 & 577 & 3499 & 3499 \\
        \bottomrule
    \end{tabular}
    \label{tab:summary_docs}
\end{table}

We sum up the statistics of the different datasets in \Cref{tab:summary_docs}.

\paragraph{BBCSport.} The BBCSport dataset \citep{kusner2015word} contains articles between 2004 and 2005, and is composed of 5 classes. We average over the 5 same train/test split of \citep{kusner2015word}. The dataset can be found in \url{https://github.com/mkusner/wmd/tree/master}.

\paragraph{Movie Reviews.} The movie reviews dataset \citep{Pang+Lee+Vaithyanathan:02a} is composed of 1000 positive and 1000 negative reviews. We take five different random 75/25 train/test split. The data can be found in \url{http://www.cs.cornell.edu/people/pabo/movie-review-data/}.

\paragraph{Goodreads.} This dataset, proposed in \citep{maharjan2017multi}, and which can be found at \url{https://ritual.uh.edu/multi_task_book_success_2017/}, is composed of 1003 books from 8 genres. A first possible classification task is to predict the genre. A second task is to predict the likability, which is a binary task where a book is said to have success if it has an average rating $\geq 3.5$ on the website Goodreads (\url{https://www.goodreads.com}). The five train/test split are randomly drawn with 75/25 proportions.

\subsubsection{Technical Details} \label{xp_docs:technicals}

All documents are embedded with the Word2Vec model \citep{mikolov2013distributed} in dimension $d=300$. The embedding can be found in \url{https://drive.google.com/file/d/0B7XkCwpI5KDYNlNUTTlSS21pQmM/view?resourcekey=0-wjGZdNAUop6WykTtMip30g}.

In this experiment, we report the results averaged over 5 random train/test split. For discrepancies which are approximated using random projections, we additionally average the results over 3 different computations, and we report this standard deviation in \Cref{tab:table_acc}. Furthermore, we always use 500 projections to approximate the sliced discrepancies. For Frank-Wolfe based methods, we use 10 iterations, which we found to be enough to have a good accuracy. We added an ablation of these two hyperparameters in Figure \ref{fig:ablations_docs}. We report the results obtained with the best $\rho$ for $\RSOT$ and $\SUOT$ computed among a grid $\rho\in\{10^{-4}, 5\cdot 10^{-4}, 10^{-3}, 5\cdot 10^{-3}, 10^{-2}, 10^{-1}, 1\}$. For $\RSOT$, the best $\rho$ is consistently around $5\cdot 10^{-3}$ for the Movies and Goodreads datasets, and around $5\cdot 10^{-4}$ for the BBCSport dataset. We used a second finer grid and reported the results obtained with $\rho=0.00021$ on BBCSport, $\rho=0.004$ for Goodreads on the likability task and $\rho=0.003$ for the genre task. For $\SUOT$, the best $\rho$ obtained was $0.01$ for the BBCSport dataset, $1.0$ for the movies dataset and $0.5$ for the goodreads dataset. For UOT, we used $\rho=1.0$ on the BBCSport dataset. For the movies dataset, the best $\rho$ obtained on a subset was 50, but it took an unreasonable amount of time to run on the full dataset as the runtime increases with $\rho$ (see \cite[Figure 3]{chapel2021unbalanced}). %
On the goodreads dataset, it took too much memory on the GPU. For Sinkhorn UOT, we used $\epsilon=0.1$ and $\rho=1.0$ on the BBCSport and $\epsilon=0.001$, $\rho=0.1$ on the Goodreads dataset, and $\epsilon=0.01$ and $\rho=0.1$ on the Movies dataset. Note that we also tested other set of parameters for UOT and Sinkhorn UOT, but on a coarser grid than USOT and SUOT given their computational time. For each method, the number of neighbors used for the k-NN method is obtained via cross-validation.

\subsubsection{Additional experiments}

\begin{figure}[t]
    \centering
    \includegraphics[width=0.4\linewidth]{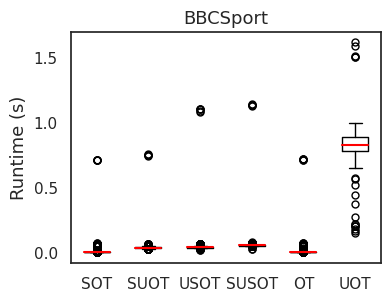}
    \includegraphics[width=0.4\linewidth]{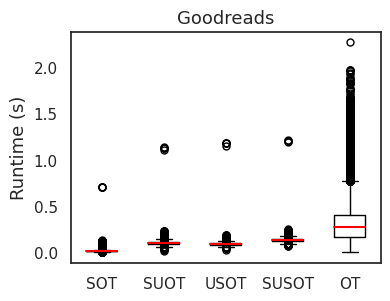}
    \caption{Runtime on the BBCSport dataset \emph{(left)} and on the Goodreads dataset \emph{(right)}.}
    \label{fig:runtime}
\end{figure}

\paragraph{Runtime.}

We report in \Cref{fig:runtime} the runtime of computing the different discrepancies between each pair of documents, and in \Cref{tab:table_runtime} the full runtimes. On the BBCSport dataset, the documents have in average 116 words, thus the main bottleneck is the projection step for sliced OT methods. Hence, we observe that OT runs slightly faster than SOT and the sliced unbalanced counterparts. Goodreads is a dataset with larger documents, with on average 1491 words by document. Therefore, as OT scales cubically with the number of samples, we observe here that all sliced methods run faster than OT, which confirms that sliced methods scale better w.r.t. the number of samples. In this setting, we were not able to compute UOT with the POT implementation in a reasonable time. Computations have been performed with a NVIDIA Tesla V100 GPU.

\begin{table}[h]
    \centering
    \caption{Runtimes on Document Classification}
    \resizebox{0.5\linewidth}{!}{
        \begin{tabular}{cccc}
            & & \textbf{BBCSport} & \textbf{Goodreads} \\ \toprule
            \multirow{2}{*}{OT} & Average ($\cdot10^{-3}$ s) & $3.29_{\pm 1.61}$ & $440.30_{\pm 259}$ \\
            & Full (s) & 891 & 221252 \\
            \multirow{2}{*}{SOT} & Average ($\cdot 10^{-3}$s) & $1.80_{\pm 6.22}$ & $4.49_{\pm 1.44}$ \\
            & Full (s) & 487 & 2256 \\
            \multirow{2}{*}{USOT} & Average ($\cdot 10^{-3}$s) & $14.67_{\pm 1.29}$ & $14.45_{\pm 0.88}$ \\
            & Full (s) & 3897 & 7260 \\
            \multirow{2}{*}{SUOT} & Average ($\cdot 10^{-3}$s) & $13.9_{\pm 1.21}$ & $14.32_{\pm 0.95}$ \\
            & Full (s) & 3770 & 7193 \\
            \bottomrule
        \end{tabular}
    }
    \label{tab:table_runtime}
\end{table}

\paragraph{Ablations.}

\begin{figure}[t]
    \centering
    \includegraphics[width=0.4\linewidth]{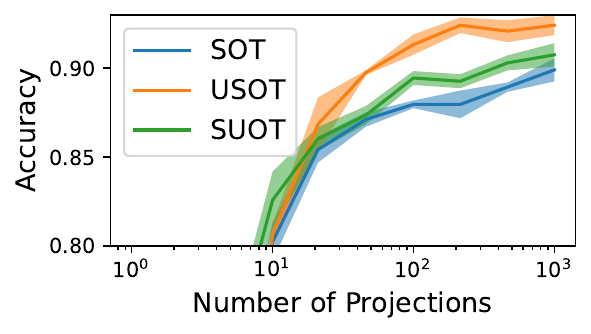}
    \includegraphics[width=0.4\linewidth]{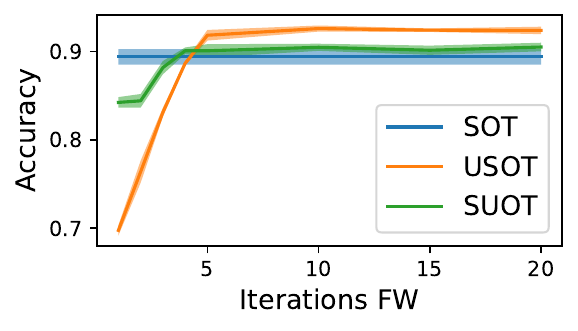}
    \caption{Ablation on BBCSport of the number of projections \emph{(left)} and of the number of Frank-Wolfe iterations \emph{(right)}.}
    \label{fig:ablations_docs}
\end{figure}

We plot in \Cref{fig:ablations_docs} accuracy as a function of the number of projections and the number of iterations of the Frank-Wolfe algorithm. We averaged the accuracy obtained with the same setting described in \Cref{xp_docs:technicals}, with varying number of projections $K\in \{4, 10, 21, 46, 100, 215, 464, 1000\}$ and number of FW iterations $F\in\{1,2,3,4,5,10,15,20\}$. Regarding the hyperparameter $\rho$, we selected the one returning the best accuracy, \ie, $\rho=5\cdot 10^{-4}$ for $\RSOT$ and $\rho=10^{-2}$ for $\SUOT$.

\paragraph{Choice of $\rho$.} In \Cref{tab:full_results_doc}, we also add the results obtained when $\rho$ is chosen by cross validation for $\RSOT$ and $\SUOT$. In this case, the results are slightly below the best one, but are still better than SOT.

\paragraph{Unnormalizing measures.}

As we have mentioned in Section~\ref{sec:xp}, we have performed document classification experiments by normalizing word histograms to be probabilities.
It allows to compare $\SUOT$ and $\RSOT$ with $\SOT$ since sliced OT is only well-defined between probabilities.
However, it seems reasonnable to compare with other unbalanced sliced methods such as SOPT~\citep{bai2022sliced}.
We chose to compare with this competitor since their code is available in Python.
However, a numerical restriction of their algorithm is that it only outputs measures with constants weights, \ie, distributions $\al=\sum\al_i\delta_{x_i}$ and $\be=\sum\be_j\delta_{y_j}$ where $\al_i=\be_j=1$, but the number of samples in $\al$ and $\be$ may differ.
Under this modeling assumption, the total mass of each measure corresponds to the number of words in the sentence.
We performed the comparison on the BBC dataset, using 500 projections for both SOPT and $\RSOT$. 
Unfortunately, the quadratic footprint of computing the similarity kernel does not scale reasonably for SOPT for larger datasets such as Movies or Goodreads, especially because their algorithm is not GPU-compatible compared to ours.
We cross-validated the parameter $\rho\in \{p.10^{-k},\,\,\, k\in\llbracket 0, 6 \rrbracket,\,\,\, p\in\{1., 5.\}\}$

The result is detailed in the table below.
What is noticeable is that the performance degrades for both $\RSOT$ and SOPT using this parametrization.
Furthermore, we observed that the paramater $\rho$ yielding the best accuracy is much smaller for unnormalized measures than for the best one for normalized histograms (\ie, $1e-5$ here compared to $1e-3$ with normalized measures).
Our interpretation of this observation is that considering unnormalized measures adds an additional information of the sentence length via the masses of $(\al,\be)$.
It seems that this additional information dominates the comparison of measures, instead of focusing on the measures support (\ie, the word embedding) which encodes the semantic information of words.
When $\rho$ is large the kernel value of $\RSOT$/SOPT is mainly dictated by the mass (\ie, sentence length) comparison.
Thus smaller $\rho$ seems to give less importance on sentence length, hence a better performance.
We also note that performance of SOPT and $\RSOT$ on unnormalized measures are rather similar.
It means that for the choice of marginal prior $\D_\phi=\rho\TV$ or $\D_\phi=\rho\KL$ does not significantly matter for this specific task, compared to the preprocessing normalization of measures.

\begin{figure}[t]
    \centering
    \begin{minipage}{0.7\linewidth}
        \centering
        \captionof{table}{Accuracy on document classification. Grid-search over $\rho$ is performed. We report the best accuracy over $\rho$, which corresponds to $\rho=5.10^{-6}$ for Unnormalized USOT and $\rho=5.10^{-6}$ for SOPT.}
        \label[table]{tab:full_results_doc}
        \resizebox{\columnwidth}{!}{
            \begin{tabular}{ccccc}
                & \textbf{BBCSport} & \textbf{Movies} & \textbf{Goodreads genre} & \textbf{Goodreads like} \\ \toprule
                OT & 94.55 & 74.44 & 55.22 & 71.00 \\
                UOT & 96.73 & - & - & - \\
                Sinkhorn UOT & 95.45 & 72.48 & 53.55 & 67.81\\
                SOT & $89.39_{\pm 0.76}$ & $66.95_{\pm 0.45}$ & $50.09_{\pm 0.51}$ & $65.60_{\pm 0.20}$ \\
                SUOT & $90.12_{\pm 0.15}$ & $67.84_{\pm 0.37}$ & $50.15_{\pm 0.04}$ & $66.72_{\pm 0.38}$ \\
                $\RSOT$ & $93.52_{\pm 0.04}$ & $69.21_{\pm 0.37}$ &  $52.67_{\pm 0.62}$ & $67.78_{\pm 0.39}$\\
                S$\RSOT$ & $92.73_{\pm 0.27}$ & $69.53_{\pm 0.53}$ & $51.93_{\pm 0.53}$ & $67.33_{\pm 0.26}$ \\
                SUOT (+CV on $\rho$) & $90.00_{\pm 0.59}$ & $67.40_{\pm 0.64}$ & $49.67_{\pm 0.79}$ & $66.43_{\pm 0.44}$ \\
                USOT (+CV on $\rho$) & $92.61_{\pm 0.55}$ & $68.64_{\pm 0.29}$ & $52.06_{\pm 7.20}$ & $66.61_{\pm 0.72}$ \\
                $\RSOT$ (Unnormalized) & $86_{\pm 0.56}$ & - & - & - \\
                $\mathrm{SOPT}$ (Unnormalized) & $87.27_{\pm 0.20}$ & - & - & - \\
                \bottomrule
            \end{tabular}
            }
    \end{minipage}
\end{figure}

\subsection{Unbalanced sliced Wasserstein barycenters}

\begin{figure}[!th]
     \centering
     \begin{subfigure}[]{1\linewidth}
         \centering
         \includegraphics[width=\textwidth]{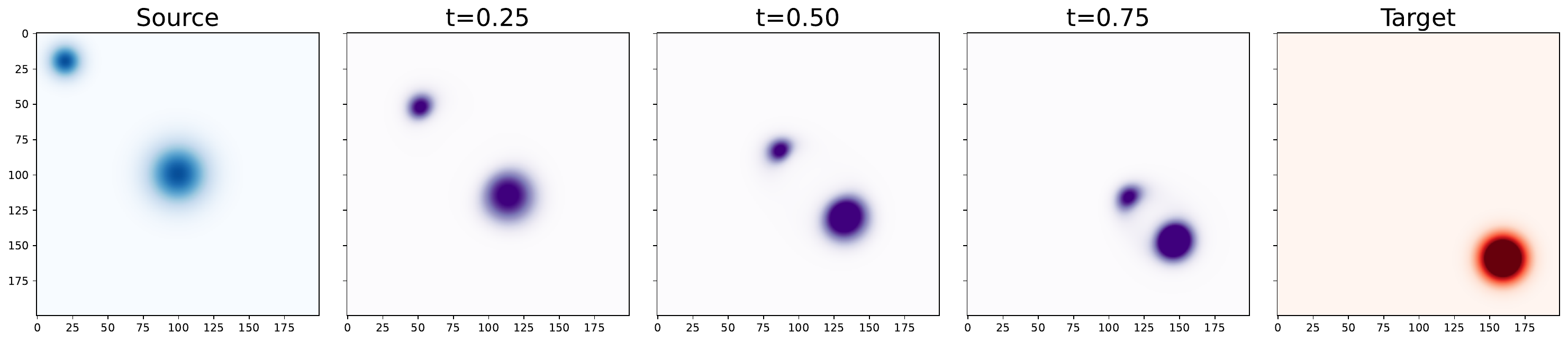}
         \caption{$\SOT$ barycenters}
         \label{fig:subinterp1}
     \end{subfigure}
     \hfill
     \begin{subfigure}[]{1\textwidth}
         \centering
         \includegraphics[width=\textwidth]{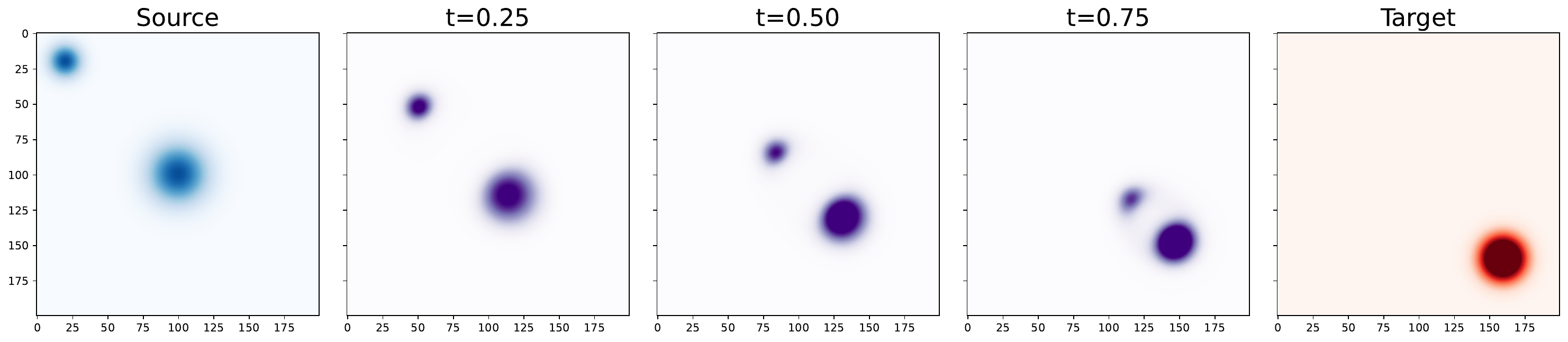}
         \caption{$\RSOT$ barycenters with $\rho_1=\rho_2=100$}
         \label{fig:subinterp2}
     \end{subfigure}
     \hfill
     \begin{subfigure}[]{1\textwidth}
         \centering
         \includegraphics[width=\textwidth]{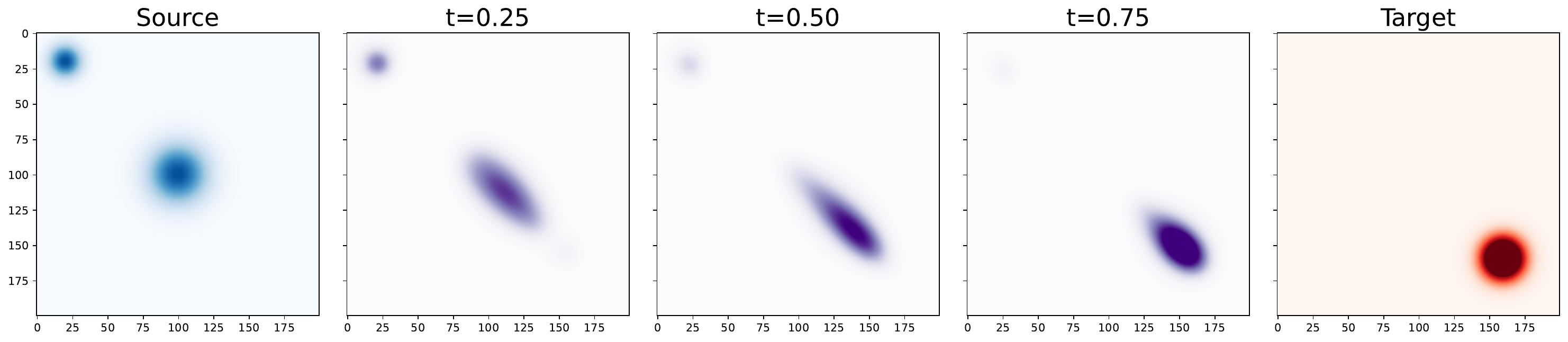}
         \caption{$\RSOT$ barycenters with $\rho_1=\rho_2=0.01$}
         \label{fig:subinterp3}
     \end{subfigure}
      \hfill
     \begin{subfigure}[]{1\textwidth}
         \centering
         \includegraphics[width=\textwidth]{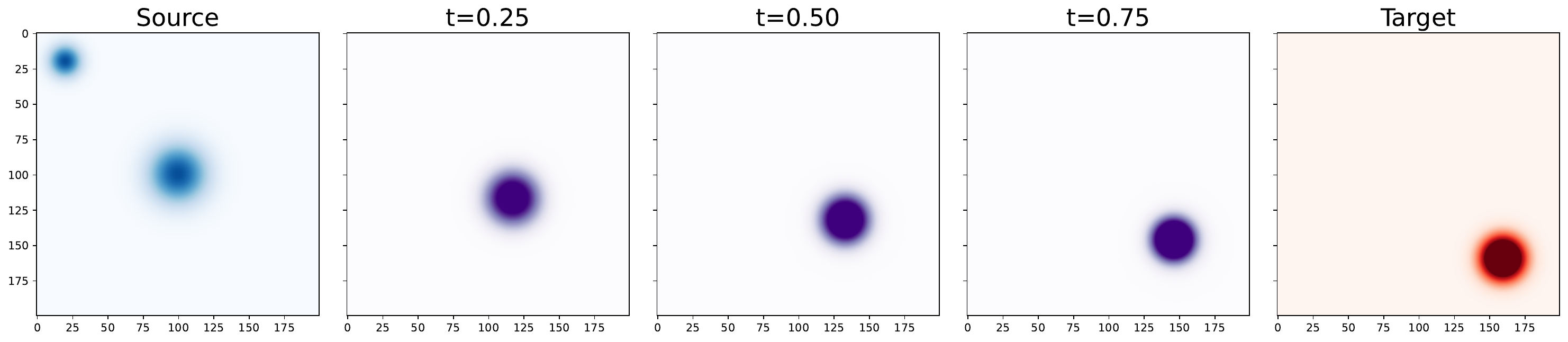}
         \caption{$\RSOT$ barycenters with $\rho_1=0.01$ and $\rho_2=100$}
         \label{fig:subinterp4}
     \end{subfigure}
        \caption{{\bf Interpolation with $\RSOT$ as a barycenter computation}. We compare different interpolations using $\SOT$ or $\RSOT$ with different settings for the $\rho$ values}
        \label{fig:interp}
\end{figure}

We define below the formulation of the $\RSOT$ barycenter which was used in the experiments of \Cref{fig:bary} to average predictions of geophysical data.
We then detail how we computed it.
\begin{definition}
    Consider a set of measures $(\al_1,\ldots,\al_B)\in\Mmp(\rset^d)^B$, and a set of non-negative coefficients $(\omega_1,\ldots,\omega_B)\geq 0$ such that $\sum_{b=1}^B\omega_b=1$.
    We define the barycenter problem (in the $\KL$ setting) as
    \begin{align}\label{eq:def-bary}
        \calB((\al_b)_b,(\omega_b)_b)&\triangleq \inf_{\beta\in\calP(\rset^d)} \sum_{b=1}^B \omega_b\RSOT(\al_b,\beta),\\
        &=\inf_{\beta\in\calP(\rset^d)} \sum_{b=1}^B \inf_{(\pi_{b,1},\pi_{b,2})} \SOT(\pi_{b,1},\pi_{b,2}) + \rho_1\KL(\pi_{b,1}|\al_b) + \rho_2\KL(\pi_{b,2}|\be),
    \end{align}
    where $\calP(\rset^d)$ denotes the set of probability measures.
\end{definition}

To compute the barycenter, we aggregate several building blocks.
First, since we consider that the barycenter $\beta\in\calP(\rset^d)$ is a probability, we perform mirror descent as in~\citep{beck2003mirror,cuturi2014fast}.
More precisely, we use a Nesterov accelerated version of mirror descent.
We also tried projected gradient descent, but it did not yield consistent outputs (due to convergence speed~\citep{beck2003mirror}).
Second, we use a Stochastic-$\RSOT$ version (see Section~\ref{sec:implem}), \ie, we sample new projections at each iteration of the barycenter update (but not a each iteration of the FW subroutines in Algorithm~\ref{algo:rsot-pseudo}).
This procedure is described in \Cref{algo:bar}.

\begin{algorithm}[H]
    \caption{-- $\texttt{Barycenter}((\al_b)_b,(\omega_b)_b, \rho_1,\rho_2, lr)$}\label{algo:bar}
    \small{
        \textbf{Input:} measures $(\al_b)_b$,  weights $(\omega_b)_b$, $\rho_1$, $\rho_2$, learning rate $lr$, FW iter $F$ \\
        \textbf{Output:}~Optimal barycenter $\beta$ of~\eqref{eq:def-bary}
        \begin{algorithmic}
        \STATE $t\leftarrow 1$
        \STATE Init $(\be,\tilde\be,\hat\be)$ as uniform distribution over a grid
            \WHILE{not converged do}
             \STATE $\gamma\leftarrow\tfrac{2}{(t+1)}$, 
             \STATE $\beta\rightarrow(1-\ga)\hat\beta + \ga\tilde\be$ 
             \STATE Sample projections $(\theta_k)_{k=1}^K$
             \STATE Compute $\calB((\al_b)_b,(\omega_b)_b)$ by calling $\RSOT(\al_b, \be, F, (\theta_k)_{k=1}^K, \rho_1, \rho_2)$ in Algorithm~\ref{algo:rsot-pseudo} for each $b$
             \STATE Compute $g$ as the gradient of $\calB((\al_b)_b,(\omega_b)_b)$ w.r.t. variable $\be$
             \STATE $\tilde\beta\leftarrow\exp(-lr\times\ga^{-1}\times g)\be$
             \STATE $\tilde\beta\leftarrow\tilde\beta / m(\tilde\beta)$
             \STATE $\hat\be\leftarrow(1-\ga)\hat\beta + \ga\tilde\be$
             \STATE $t\leftarrow t +1$
            \ENDWHILE
        \end{algorithmic}
    }
\end{algorithm}

We illustrate this algorithm with several examples of interpolation in Figure~\ref{fig:interp}. We propose to compute an interpolation between two measures located on a fixed grid of size $200 \times 200$ with different values of $\rho_i$ in $\D_{\phi_i}=\rho_i\KL$.  For illustration purposes, we construct the {\em source} distribution as a mixture of two Gaussians with a small and a larger mode, and the {\em target} distribution as a single Gaussian. Those distributions are normalized over the grid such that both total norms are equal to one (which is not required by our unbalanced sliced variants but grants more interpretability and possible comparisons with $\SOT$).  Figure~\ref{fig:subinterp1} shows the result of the interpolation at three timestamps ($t=0.25,\; 0.5$ and $0.75$) of a $\SOT$ interpolation (within this setting, $\omega_1 = 1-t$ and $\omega_2 = t$). As expected, the two modes of the source distribution are transported over the target one. We verify in Figure~\ref{fig:subinterp2} that for a large value of $\rho_1=\rho_2=100$, the $\RSOT$ interpolation behaves similarly as $\SOT$, as expected from the theory. When $\rho_1=\rho_2=0.01$, the smaller mode is not moved during the interpolation, whereas the larger one is stretched toward the target (Figure~\ref{fig:subinterp3}). Finally, 
in Figure~\ref{fig:subinterp4}, an asymmetric configuration of $\rho_1=0.01$ and $\rho_2=100$ allows to get an interpolation when only the big mode of the source distribution is displaced toward the target. In all those cases, the mirror-descent algorithm~\ref{algo:bar} is run for $500$ iterations. Even for a large grid of $200 \times 200$, those different results are obtained in a $2-3$ minutes on a commodity GPU, while the OT or UOT barycenters are untractable with a limited computational budget.

\subsection{Unbalanced version of hyperbolic SOT.} \label{appendix:hsw}

To illustrate the modularity of our FW algorithm, we aim at comparing synthetic mixtures of Wrapped Normal Distribution on the $2$-hyperbolic manifold $\HH$~\citep{nagano2019wrapped}, so that the FW oracle is hyperbolic sliced OT~\citep{bonet2022hyperbolic}.
The parameter $\theta$ characterizes on $\HH$ any geodesic curve passing through the origin, and each sample is projected by taking the shortest path to such geodesics. Once projected on a geodesic curve, we sort data and compute $\SOT$ w.r.t. hyperbolic metric $d_{\HH}$. We consider the $2$-hyperbolic manifold on the Poincaré disc. As illustrated in \Cref{fig:hsw-full}, the input measure $\al$ (in red) is a mixture of 3 isotropic normal distributions, with a mode at the top of the disc playing the role of an outlier. The measure $\be$ is a mixture of two anisotropic normal distributions, whose means are close to two modes of $\al$, but are slightly shifted at the disc's center. We show on \Cref{fig:hsw-full} the impact of the parameter $\rho=\rho_1=\rho_2$ on the optimal marginals of $\RSOT$.

This experiment illustrates several take-home messages, mentioned in \Cref{sec:theory}.
First, the optimal marginals $(\pi_1,\pi_2)$ are renormalisation of $(\al,\be)$ accounting for their geometry, which are able to remove outliers for properly tuned $\rho$.
When $\rho$ is large, $(\pi_1,\pi_2)\simeq(\al,\be)$ and we retrieve $\SOT$.
When $\rho$ is too small, outliers are removed, but we see a shift of the modes, so that modes of $(\pi_1,\pi_2)$ are closer to each other, but do not exactly correspond to those of $(\al,\be)$.
Second, note that such plot cannot be made with $\SUOT$, since the optimal marginals depend on the projection $\theta$ (see \Cref{fig:intro}).
Third, we emphasize that we are indeed able to reuse any variant of $\SOT$.

\begin{figure}[t]
	\centering
 
    \resizebox{.8\linewidth}{!}{
		\begin{tabular}{@{}c@{}c@{}c@{}c@{}c@{}c@{}c@{}}
    & $\rho=10^{-3}$ & $\rho=10^{-2}$ & $\rho=10^{-1}$ & $\rho=10^{0}$ & $\rho=10^{1}$ & Inputs ($\rho=\infty$)\\
			\myrot{Marginal $\pi_1$ }  & \myimg{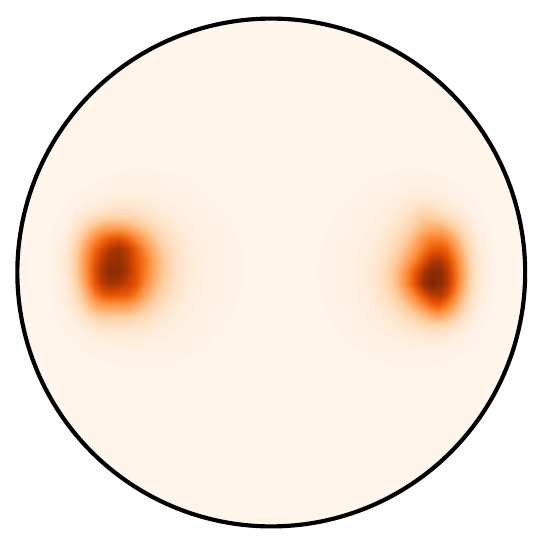}   & \myimg{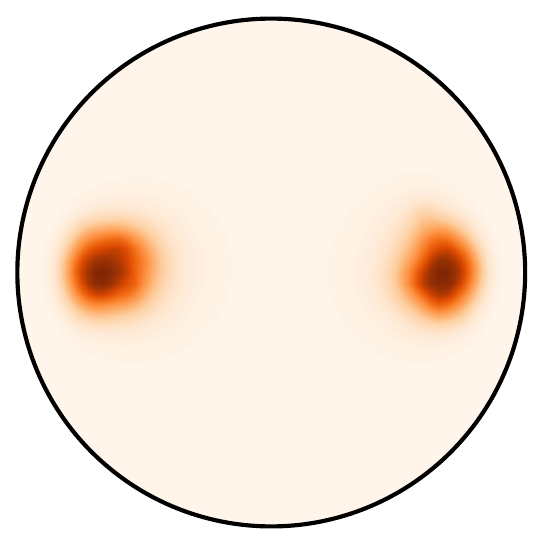} &  %
                                	\myimg{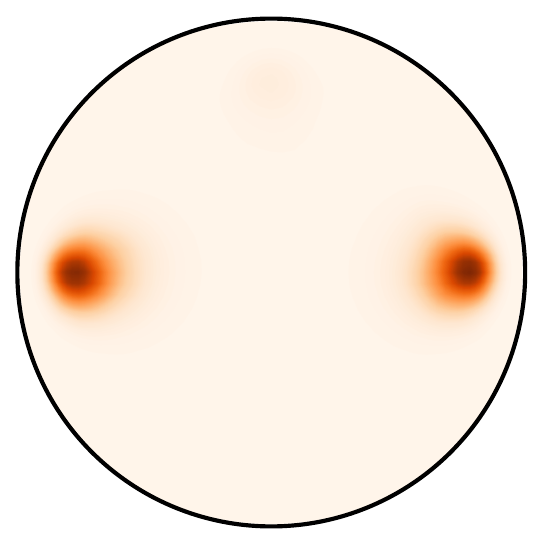} & %
                                	\myimg{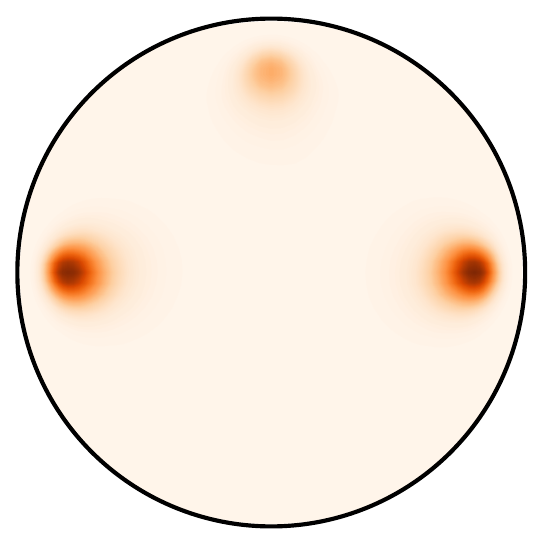} & %
                                	\myimg{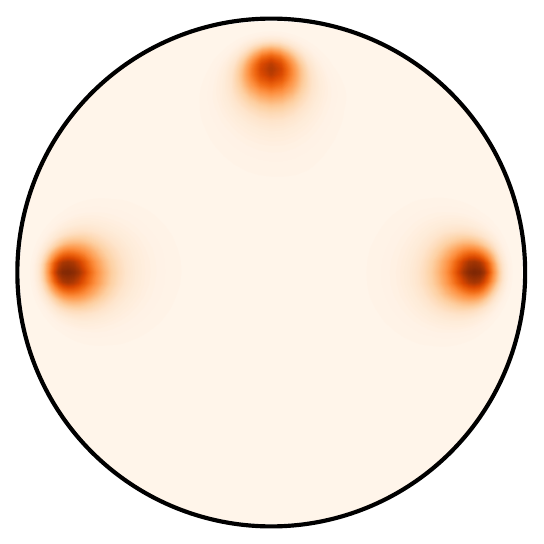} & %
                                	\myimg{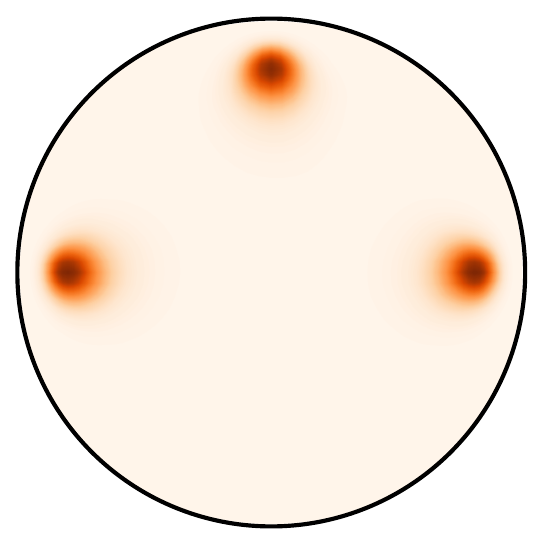}  \\
			\myrot{Marginal $\pi_2$}  & \myimg{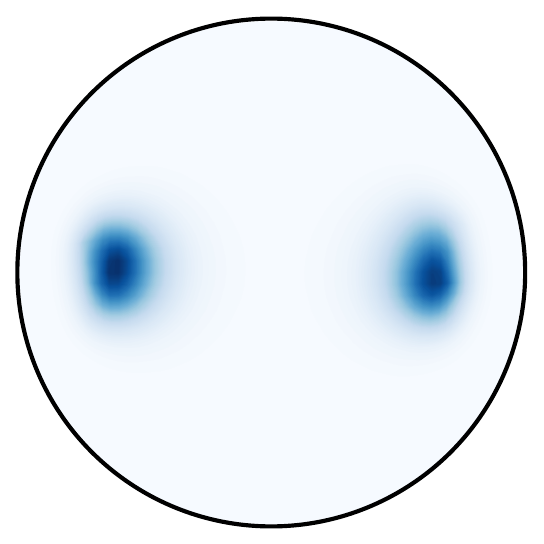} & \myimg{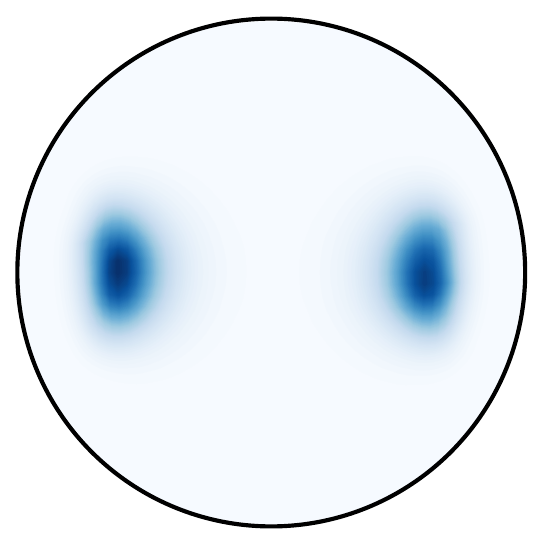} &  %
                            	\myimg{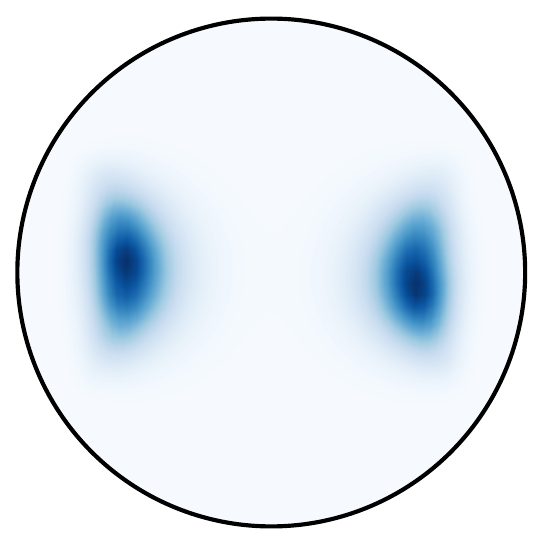} & %
                            	\myimg{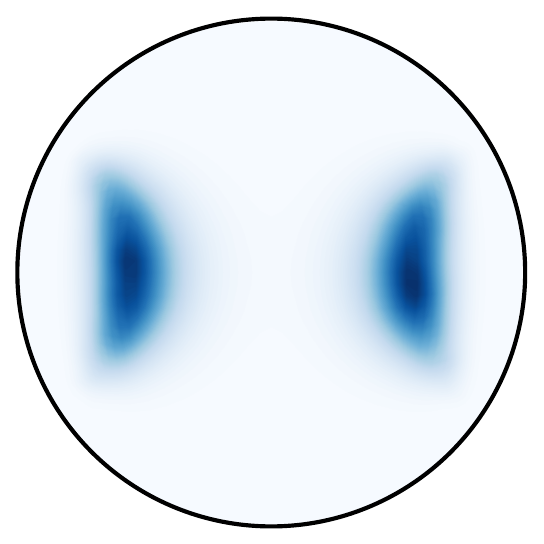} & %
                            	\myimg{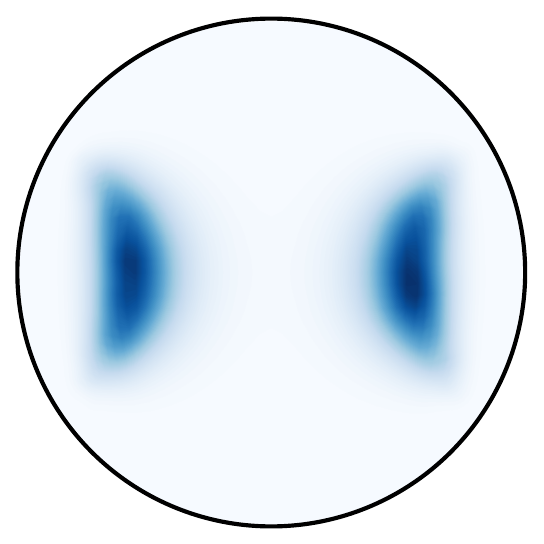} & %
                            	\myimg{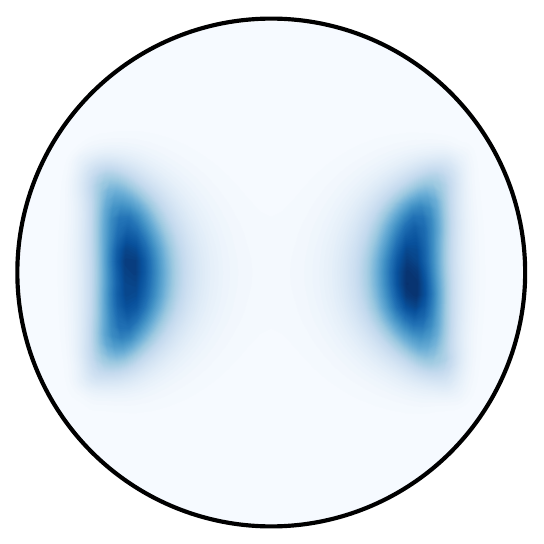} 
		\end{tabular}
  }
	\caption{KDE estimation (kernel $e^{-d_{\HH}^2/\sigma}$) of optimal $(\pi_1,\pi_2)$ of $\RSOT(\al,\be)$ when $\D_{\phi_i}=\rho\KL$.}
 \label{fig:hsw-full}
\end{figure}

\subsection{Choice and interpretation of hyperparameter $\rho$}
\label{app:tuningrho}

An immediate drawback of our framework is the induced additional computational cost w.r.t. $\SOT$. 
While the above experimental results show that $\SUOT$ and $\RSOT$ improve performance significantly over $\SOT$, and though the complexity is still sub-quadratic in number of samples, our FW approach uses $\SOT$ as a subroutine, rendering it necessarily more expensive.
Additionally, another practical burden comes from the introduction of extra hyperparameters $(\rho_1,\rho_2)$, which may be tuned using cross-validation. 
Therefore, a future direction would be to derive efficient strategies to tune $(\rho_1,\rho_2)$, maybe w.r.t. the applicative context, and further complement the possible interpretations of $\rho$ as a 'threshold' for the geometric information encode by $\Co$, $\Cd$.
While we leave the automation of tuning $(\rho_1,\rho_2)$ for future works, we provide below some details and intuitions on the choice of $\rho$ for the previous experiments. 
We hope these insights will help the practitioner on how they should chose tune this additional parameter.

\paragraph{General intuition on $\rho$.}
The parameter $\rho$ when $\rho_1=\rho_2=\rho$ can be understood as a \emph{characteristic distance} to decide whether or not two sample should be matched by the coupling $\pi$ in the primal formulation of~\eqref{eq:primal-uot}.
Typically, transportation happens for samples $(x,y)$ such that $\Cd(x,y)\leq\rho$, while samples such that $\Cd(x,y)\geq\rho$ are interpreted as geometric outliers, and are discarded in the matching $\pi(x,y)$.
In the case of $\SUOT$ and $\RSOT$, there is somehow a similar interpretation, but not for the same quantities, and we rely on their definitions (Equations~\ref{eq:dual-suot} and~\ref{eq:dual-rsot}), as well as the constraint set $\calE$ in \Cref{thm:duality-rsot}.

One sees that for $\SUOT(\al,\be)$ we have a set of 1D-UOT problems between $(\thsss\al, \thsss\be)$, thus the threshold interpretation holds depending on whether $\Co(\theta^\star(x),\theta^\star(y))\leq\rho$ or $\Co(\theta^\star(x),\theta^\star(y))\geq\rho$. 
In particular the dependence in $\theta$ explains why the outlier threshold depends on the considered projection. 
Note also we consider $\Co$ instead of $\Cd$.

For $\RSOT(\al,\be)$ it is different because the marginals $(\pi_1,\pi_2)$ which we optimize in Equation~\eqref{eq:def-rsot} are independent of $\theta$, and common to all projections.
Informally speaking, we interpret that the threshold value to discard a matching between $(x,y)$ depends on whether some quantity proportional to $\int_{\sphereD} \Co(\theta^\star(x),\theta^\star(y))\rmd\pi_{\theta}(x,y)\rmd\sigma(\theta)$ is larger or smaller than $\rho$.
This quantity is not properly defined as it depends on the optimized variables $(\pi_\theta)_\theta$, hence the informality of our intuition.
However, we wish to emphasize that the parameter $\rho$ should be interpreted differently between $\SUOT$ and $\RSOT$.
As highlighted experimentally for document classification in \Cref{fig:ablation_rho}, we observe that the value of $\rho$ yielding the best performance is not the same for each loss.

\paragraph{Choice of $\rho$ for hyperbolic data.}
In \Cref{fig:hsw-full}, the hyperbolic distance between overlapping modes is $0.96$, while distance from side modes to the top red outlier is $2.83$. 
Thus, a proper choice of should lie in between, which seems consistent with the observation of \Cref{fig:hsw-full} for $\rho=1$. 
Indeed we see that we have a satisfying trade-off between removing the top mode and preserving the crescent shape structure of main blue modes.

\paragraph{Choice of $\rho$ for barycenter experiments.}
For the barycenter, we used insights from \Cref{fig:interp} which interpolates circular blobs using asymmetric $(\rho_1,\rho_2)$, where $\rho_1$ is the parameter penalizing the input measures fidelity, and $\rho_2$ the parameter of the barycenter. 
For \Cref{fig:bary} (especially line (d)), we also took assymetric $(\rho_1,\rho_2)$ with large $\rho_2=1e4$ for the barycenter to force data matching. 
Then for inputs $\rho_1=1e1$ is roughly the distance between cyclones (see \Cref{fig:bary}), to keep them in the barycenter.
All in all, we force the barycenter to match the cyclone structure which matters most, while any structure who would be beyond this $\rho_1$ distance between input measure would be discarded.

\paragraph{Interpretation of $\rho$ for document classification.}
In this task the measures' support are given by word embedding in high dimension, for which we have no intuition of what is for instance the characteristic distance between different semantic clusters, and thus no idea on how $\rho$ should be tuned.
For this reason (and more generally in ML tasks), we need to perform a cross-validation over this hyperparameter.
We would like to comment the dependence of the document classification accuracy w.r.t. $\rho$, which can be observed in \Cref{fig:ablation_rho}.
One can notice that as $\rho$ increases, the accuracy increases until it reaches a 'peak', until then it decreases to reach a plateau as $\rho\rightarrow\infty$.
When $\rho\rightarrow\infty$, $\SUOT$ and $\RSOT$ converge to $\SOT$ (see Definitions~\ref{def:sw} and~\ref{def:suot_usot}), and we get similar performances. As $\rho\rightarrow 0$, marginals $(\pi_1,\pi_2)$ are allowed to differ significantly from inputs $(\al,\be)$, meaning that $\SUOT$/$\RSOT$ almost ignore input data. 
Therefore, $\rho$ should be tuned to extract information from inputs while removing noise. 
In Figure~\ref{fig:ablation_rho}, the 'peaks' correspond to such optimal $\rho$, and the gain in performance justify the use of $\SUOT$/$\RSOT$ over $\SOT$.

\subsection{Illustration of the sample complexity} \label{appendix:samplecp}

\begin{figure}[t!]
    \centering
    \includegraphics[width=0.31\linewidth]{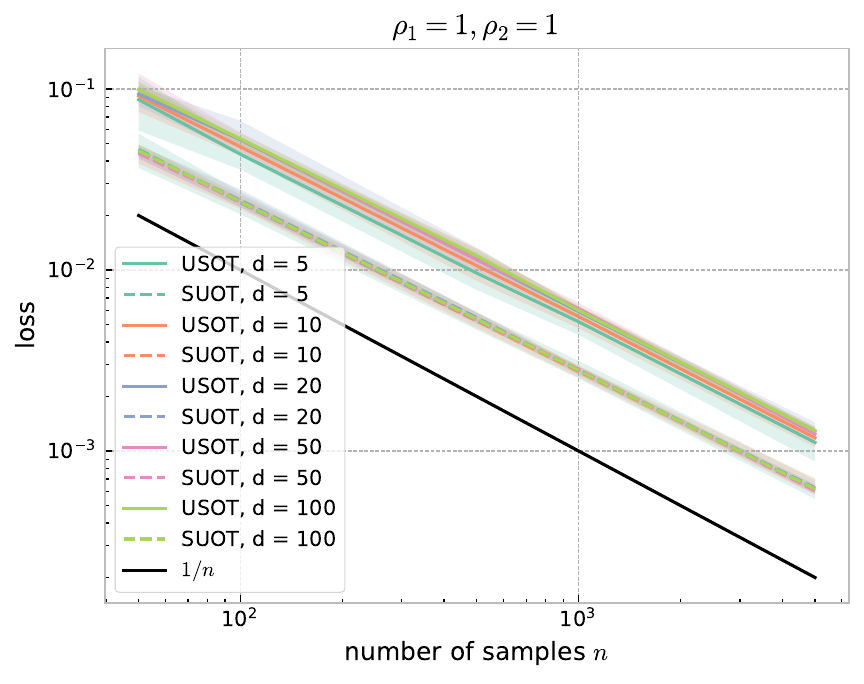}
    \includegraphics[width=0.31\linewidth]{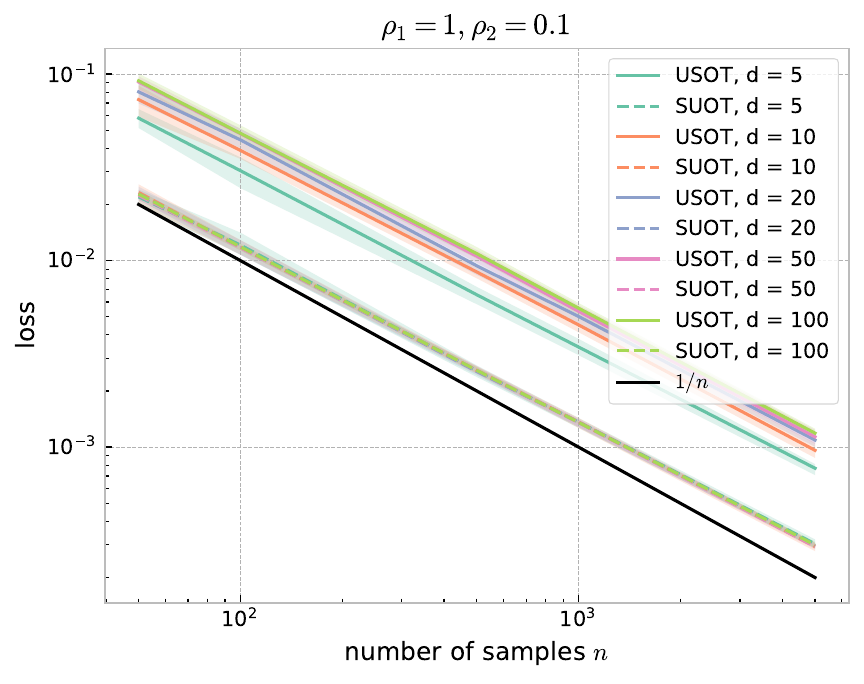}
    \includegraphics[width=0.31\linewidth]{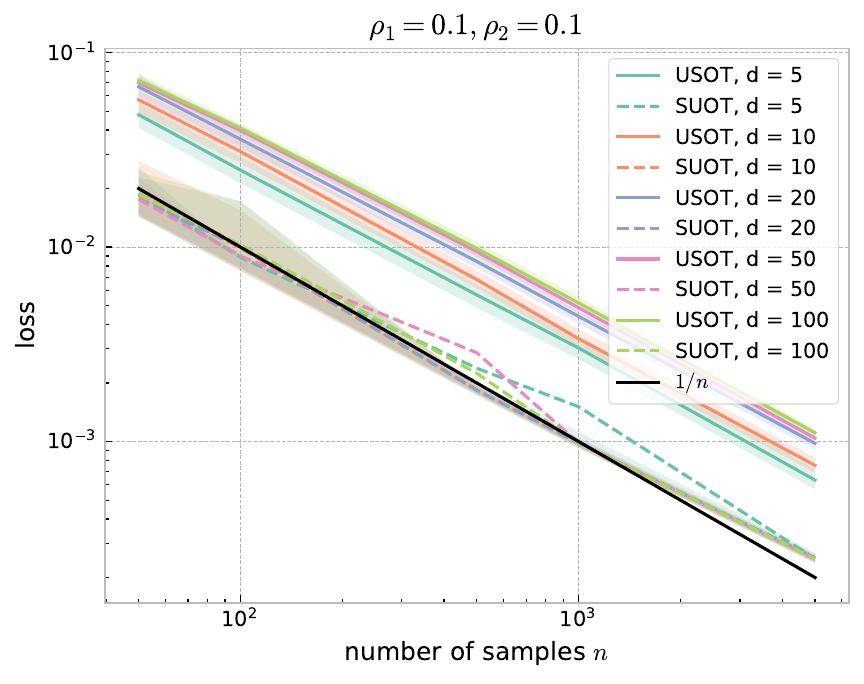}
    \caption{\textbf{Sample complexity:} $\mathbb{E}[|\calL(\hat\al_n, \hat\be_n)-\calL(\al, \be)|]$ against $n$, with $\calL=\SUOT$ or $\RSOT$ and $\al = \be = \mathcal{N}({\bf0}_d, \mathbf{I}_d)$ (thus $\calL(\al, \be)=0$). Results are averaged over $20$ runs and the shaded areas represent the 10th and 90th percentiles. All curves exhibit the same convergence rate for any $d$, that is $1/n$ (see solid line black curve).}
    \label{fig:sample-comp}
\end{figure}

We investigate the sample complexity of $\SUOT$ and $\RSOT$ in practice and report the results in \Cref{fig:sample-comp}. Our goal is to empirically verify \Cref{thm:sample-comp} for $\SUOT$, and explore the convergence rate for $\RSOT$. To this end, we consider $\al = \be = \mathcal{N}({\bf0}_d, \mathbf{I}_d)$ and compute $\SUOT(\al_n, \be_n)$ and $\RSOT(\al_n, \be_n)$ for different number of samples $n$ and dimension $d$. This allows us to explore the convergence rate of  $\SUOT(\al_n, \be_n)$ to $\SUOT(\al, \be) = 0$ (respectively, of $\RSOT(\al_n, \be_n)$ to $\RSOT(\al, \be) = 0$) as a function of $n$ and $d$. %

\Cref{fig:sample-comp} shows that all curves share the same slope w.r.t $n$, for any $d$ and for both $\SUOT$ and $\RSOT$. This experiment is consistent with the dimension-free rate we established in \Cref{thm:sample-comp} for $\SUOT$. Interestingly, it also reveals that the dimension-free rate holds for $\RSOT$ as well in that specific setting. More experiments and/or theoretical justification are needed to verify if this holds for more general distributions.

\end{document}